
\documentclass[journal]{IEEEtran} 

\usepackage[colorlinks=true, allcolors=blue]{hyperref}

\usepackage{graphicx, algorithm, algorithmic}
\usepackage{subcaption,comment}
\usepackage{booktabs, cite} 



\usepackage{etex}


%

\usepackage{chngcntr}
%


\usepackage{bbm}

\usepackage{url}            
\usepackage{amsfonts}       
\usepackage{amsmath,amssymb,amsthm, bm}
\usepackage{array}
\usepackage{mdwmath}

\usepackage{multirow}

\usepackage{graphicx}
\usepackage{caption}

\usepackage{tikz}
\usepackage{pgfplots}

\usepackage{tikz}
\usepackage{pgfplots, pgfplotstable}
\usepgfplotslibrary{groupplots}
\pgfplotsset{select coords between index/.style 2 args={
		x filter/.code={
			\ifnum\coordindex<#1\fi
			\ifnum\coordindex>#2\fi
		}
}}

\pgfplotsset{
	my stylecompare/.style={
		width=.5\linewidth,
		height=4.5cm,
		label style={font=\Large},
		title style={font=\Large},
		x tick label style={font =\small, /pgf/number format/1000 sep=},
		major x tick style = transparent,
		major y tick style = transparent,
		every y tick label/.style={
			xshift=-.7cm, yshift=-2pt,anchor=south west,inner sep=0pt,font=\small
		},
	},
	my legend style compare/.style={
		legend entries={
			RWF,
			proj-RWF,
			LRPR2,
			AltMinLowRaP,
			AltMinLowRaP (random init),
			AltMinLowRaP (rank known),
			RWF ($m=4n$),
			RWF ($m=3n$),
		},
		legend style={
			at={(-.05,1.65)},
			anchor=north west,
		},
		legend columns=7,
		legend style={font=\small},
	},
	cycle multi list={
		{red, line width=1.6pt, mark=square,mark size=3.5pt, mark repeat=10},
		{black, line width=1.6pt, mark=Mercedes star,mark size=6pt, mark repeat=10},
		{blue, line width=1.6pt, mark=o,mark size=4pt, select coords between index={0}{9}},
		{olive, line width=1.6pt, mark=diamond,mark size=5pt, select coords between index={0}{9}},
		{teal, line width=1.6pt, mark=triangle,mark size=5pt},
		{brown, line width=1pt, mark=oplus,mark size=3.5pt, mark repeat=1, select coords between index={0}{80}},
	},
}

\pgfplotstableread[col sep = comma]{time_comp_news.dat}\timecomp
\pgfplotstableread[col sep = comma]{rank_est_large_whp_new.dat}\rankestlarge
\pgfplotstableread[col sep = comma]{unknown_rank_small_whp.dat}\rankestsmall

\pgfplotstableread[col sep = comma]{all_work.dat}\allwork

\pgfplotstableread[col sep = comma]{pst_001_whp.dat}\pstsmall
\pgfplotstableread[col sep = comma]{pst_08_whp.dat}\pstlarge

\newtheorem{theorem}{Theorem}[section]
\newtheorem{lemma}[theorem]{Lemma}
\newtheorem{corollary}[theorem]{Corollary}

\newtheorem{remark}[theorem]{Remark}

\newtheorem{claim}[theorem]{Claim}

\newcommand{\bproof}{ \begin{IEEEproof} }
	\newcommand{\eproof}{ \end{IEEEproof} }
\newcommand{\beqno}{\begin{equation*} }
\newcommand{\eeqno}{\end{equation*} }
\newcommand{\beqa}{\begin{eqnarray*} }
	\newcommand{\eeqa}{\end{eqnarray*} }
\newcommand{\beq}{\begin{equation} }
\newcommand{\eeq}{\end{equation} }

\newcommand{\p}{\bm{p}}

\newcommand{\tot}{\mathrm{tot}}
\renewcommand{\a}{\bm{a}}

\newcommand{\x}{\bm{x}}
\newcommand{\e}{\bm{e}}
\newcommand{\h}{\bm{h}}
\renewcommand{\b}{\bm{b}}
\newcommand{\y}{\bm{y}}
\newcommand{\w}{\bm{w}}
\newcommand{\X}{\bm{X}}
\newcommand{\U}{{\bm{U}}}
\newcommand{\R}{{\bm{R}}}
\newcommand{\B}{\bm{B}}
\newcommand{\G}{\bm{G}}

\newcommand{\I}{\bm{I}}
\newcommand{\Y}{\bm{Y}}
\newcommand{\g}{\bm{g}}

\newcommand{\dd}{\bm{d}}

\newcommand{\xhat}{\bm{\hat{x}}}
\newcommand{\bhat}{\bm{\hat{b}}}
\newcommand{\Uhat}{\hat\U}

\newcommand{\D}{{\bm{D}}}

\newcommand{\F}{{\bm{F}}}
\newcommand{\bP}{{\bm{P}}}

\newcommand{\E}{\mathbb{E}}

\newcommand{\z}{\bm{z}}
\newcommand{\indic}{\mathbbm{1}}

\newcommand{\SE}{\mathrm{SE}}
\newcommand{\dist}{\mathrm{dist}}

\newcommand{\V}{\bm{V}}
\newcommand{\A}{\bm{A}}
\newcommand{\C}{\bm{C}}
\newcommand{\Chat}{\bm{\hat{C}}}
\newcommand{\cb}{\bm{c}}

\newcommand{\M}{\bm{M}}

\newcommand{\Span}{\mathrm{Span}}
\newcommand{\trace}{\mathrm{trace}}



\newcommand{\svdeq}{\overset{\mathrm{SVD}}=} 
\newcommand{\qreq}{\overset{\mathrm{QR}}=} 


\setlength{\arraycolsep}{0.01cm}



\newcommand{\bi}{\begin{itemize}} 
\newcommand{\ei}{\end{itemize}}
\newcommand{\ben}{\begin{enumerate}}
\newcommand{\een}{\end{enumerate}}

\renewcommand{\S}{\mathcal{S}}
\newcommand{\W}{\bm{W}}

\begin{document}
\title{Phaseless Principal Components Analysis (PCA) \thanks{Part of this work is submitted to a blind conference.}}
\title{Phaseless PCA: Low-Rank Matrix Recovery from Column-wise Phaseless Measurements}
\title{Provable Low Rank Phase Retrieval
\thanks{Part of this work appears in the proceedings of ICML 2019 \cite{lrpr_icml}.}}
\author{Seyedehsara Nayer, Praneeth Narayanamurthy, Namrata Vaswani
\\Iowa State University, Ames, IA}
\date{}

\newcommand{\matdist}{\text{mat-dist}}
\newcommand{\Bstar}{{\B^*}}
\newcommand{\bstar}{\b^*}
\newcommand{\tB}{\tilde\B^*}
\newcommand{\tb}{\tilde\b^*}
\newcommand{\td}{\tilde{\bm{d}}^*}
\newcommand{\init}{{\mathrm{init}}}
\newcommand{\bea}{\begin{eqnarray}} 
\newcommand{\eea}{\end{eqnarray}}

\newcommand{\Ustar}{\U^*}
\newcommand{\Xstar}{{\X^*}}
\newcommand{\xstar}{\x^*}
\newcommand{\deltinit}{\delta_\init}
\newcommand{\deltapt}{\delta_{t}}
\newcommand{\deltaptplus}{\delta_{t+1}}

\newcommand{\bSigma}{{\bm\Sigma^*}}
\newcommand{\tSigma}{\bm{E}_{det}}
\newcommand{\sigmin}{{\sigma_{\min}^*}}
\newcommand{\sigmax}{{\sigma_{\max}^*}}

\newcommand{\HH}{{\bm{D}}}
\newcommand{\GG}{{\bm{M}}}
\newcommand{\SSS}{{\bm{S}}}
\newcommand{\ik}{{ik}}
\newcommand{\full}{{\mathrm{full}}}
\newcommand{\qfull}{q_\full}
\newcommand{\sub}{{\mathrm{sub}}}

\newcommand{\initm}{m_\init}
\renewcommand{\SE}{\sin \Theta}

\newcommand{\checkU}{\U} \newcommand{\checktB}{\B} \newcommand{\checktb}{\b}

\maketitle





\begin{abstract}
	We study the Low Rank Phase Retrieval (LRPR) problem defined as follows: recover an $n \times q$ matrix $\Xstar$ of rank $r$ from a different and independent set of $m$ phaseless (magnitude-only) linear projections of each of its columns. To be precise, we need to recover $\Xstar$ from $\y_k := |\A_k{}' \xstar_k|, k=1,2,\dots, q$ when the measurement matrices $\A_k$ are mutually independent. Here $\y_k$ is an $m$ length vector, $\A_k$ is an $n \times m$ matrix, and $'$ denotes matrix transpose. The question is when can we solve LRPR with $m \ll n$? A reliable solution can enable fast and low-cost phaseless dynamic imaging, e.g., Fourier ptychographic imaging of live biological specimens. 
	In this work, we develop the first provably correct approach for solving this LRPR problem. Our proposed algorithm, Alternating Minimization for Low-Rank Phase Retrieval (AltMinLowRaP), is an AltMin based solution  and hence is also provably fast (converges geometrically). 
	Our guarantee shows that AltMinLowRaP solves LRPR to $\epsilon$ accuracy, with high probability, as long as $m q \ge C n r^4 \log(1/\epsilon)$, the matrices $\A_k$ contain i.i.d. standard Gaussian entries,
	and the right singular vectors of $\Xstar$ satisfy the incoherence assumption from matrix completion literature. Here $C$ is a numerical constant that only depends on the condition number of $\Xstar$ and on its incoherence parameter. Its time complexity is only $ C mq nr \log^2(1/\epsilon)$.
	
	Since even the linear (with phase) version of the above problem is not fully solved, the above result is also the first complete solution and guarantee for the linear case. Finally, we also develop a simple extension of our results for the dynamic LRPR setting.
	%
	%
\end{abstract}
%
%
%
%
%

\section{Introduction}
In recent years, there has been a resurgence of interest in the classical phase retrieval (PR) problem \cite{fineup,ger_saxton}. The original PR problem  involved recovering an $n$-length signal $\xstar$ from the {\em magnitudes} of its Discrete Fourier Transform (DFT) coefficients. Its generalized version, studied in recent literature, replaces DFT by inner products with any arbitrary design vectors, $\a_i$. Thus, the goal is to recover $\xstar$ from $\y_i:=|\langle \a_i ,\xstar \rangle|$, $i=1,2, \dots, m$. These are commonly referred to as phaseless linear projections of $\xstar$.
While practical PR methods have existed for a long time, e.g., see \cite{fineup,ger_saxton}, the focus of the recent work has been on obtaining correctness guarantees for these and newer algorithms.  This line of work includes convex relaxation methods \cite{candes2013phase,candes_phaselift} as well as non-convex methods \cite{pr_altmin,altmin_irene_w,wf,twf,rwf,taf,pr_mc_reuse_meas,pr_random_init}. It is easy to see that, without extra assumptions, PR requires $m \ge n$. The best known guarantees -- see \cite{twf} and follow-up works -- prove exact recovery with high probability (whp) with order-optimal number of measurements/samples: $m = C n$; and with time complexity $C mn \log(1/\epsilon)$ that is nearly linear in the problem size.
Here and below, $C$ is reused often to refer to a constant more than one. 
Most guarantees for PR assume that $\a_i$'s are independent and identically distributed (iid) standard Gaussian vectors. When this is assumed, we refer to the PR problem as ``standard PR''.%

A natural approach to reduce the sample complexity to below $n$ is to impose structure on the unknown signal(s). In existing literature, with the exception of sparse PR which has been extensively studied, e.g., \cite{voroninski13,jaganathan2013sparse2,pr_altmin,sparta,cai,fastphase}, there is little other work on structured PR.
Low rank is the other common structure. 
This can be used in one of two ways. One is to assume that the unknown signal/image, whose phaseless linear projections are available, can be rearranged to form a low-rank matrix. This would be valid only for very specific types of images for which different image rows or columns look similar, so that the entire image matrix can be modeled as low rank. {\em In general it is not a very practical model for images, and this is probably why this setting has not been explored in the literature. We do not consider this model here either.}

A more practical, and commonly used, low-rank model in biological applications \cite{st_imaging}, is for the dynamic imaging setting. It assumes that a set, e.g., a time sequence, of signals/images is generated from a lower dimensional subspace of the ambient space. For our problem, we assume that we have a set of $m$ phaseless linear projections of each signal, with a different set of measurement vectors used for each signal. The question is can we jointly recover the signals using an $m \ll n$ and when? This setting was first studied in our recent work \cite{lrpr_tsp} where we called it ``Low-Rank PR'' (LRPR). It is a valid model whenever the set/sequence of signals is sufficiently similar (correlated).  A solution to LRPR can enable fast and low-cost phaseless dynamic imaging of live biological specimens, in vitro. See Sec. \ref{motiv} and \cite{TCIgauri} for a detailed motivation for studying LRPR.

\begin{figure*}[t!]
	\begin{center}
		\resizebox{.7\linewidth}{!}{
			\begin{tabular}{ccc}
				\\    \newline
				\includegraphics[scale=1, trim={.1cm, .1cm, .1cm, .1cm}, clip=true]{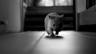}
				&
				\includegraphics[scale=1, trim={.1cm, .1cm, .1cm, .1cm}, clip=true]{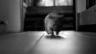}
				&
				\includegraphics[scale=1, trim={.1cm, .1cm, .1cm, .1cm}, clip=true]{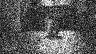}
				\\    \newline
				\subcaptionbox{Original\label{1}}{\includegraphics[scale=1, trim={.1cm, .1cm, .1cm, .1cm}, clip=true]{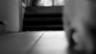}}
				&
				\subcaptionbox{AltMinLowRaP\label{1}}{\includegraphics[scale=1, trim={.1cm, .1cm, .1cm, .1cm}, clip=true]{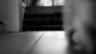}}
				&
				\subcaptionbox{RWF\label{1}}{\includegraphics[scale=1, trim={.1cm, .1cm, .1cm, .1cm}, clip=true]{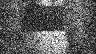}}
				
			\end{tabular}
		}
	\end{center}
	\vspace{-0.1cm}
	\caption{Recovering a video of a moving mouse (only approximately low-rank) from simulated coded diffraction pattern (CDP) measurements. The images are shown at $k = 60, 78$. We describe the experiment in Sec. \ref{small_r}.
	}
	\label{fig:vid_frames}
	\vspace{-0.3cm}
\end{figure*}

\subsection{Low Rank PR (LRPR) Problem Setting and Notation}\label{sec:prob}
We study the LRPR problem described above. This was first introduced and briefly studied in \cite{lrpr_tsp} where we developed two algorithms, evaluated them experimentally, and provided a guarantee for the initialization step of one of them.
The goal is to  
to recover an $n \times q$ matrix $\Xstar:= [\xstar_1, \xstar_2, \dots, \xstar_k, \dots, \xstar_q]$, of rank $r$, from measurements
\begin{align}\label{eq:problem}
	\y_{ik}:=| \langle \a_{ik} \bm{e}_k{}', \Xstar \rangle| =  | \langle \a_{ik}, \xstar_k \rangle|, \ i \in [m], k \in [q],
\end{align}
when all the $\a_\ik$'s are mutually independent.  For proving guarantees we assume also that they are iid standard Gaussian and real-valued. Here and below the notation $ [m]:=\{ 1,2,\dots,m \}$, and $\e_k$ refers to the $k$-th column of $\I_q$ (identity matrix of size $q\times q$). We are interested in the low rank setting when $r \ll \min(n,q)$.

By defining the $m$-length vector $\y_k:=[\y_{1,k}, \y_{2,k}, \dots, \y_{m,k}]'$ and the $n \times m$ matrix $\A_k := [\a_{1,k},\a_{2,k}, \dots, \a_{m,k}]$, and letting $|\z|$ denote element-wise magnitude of a vector, the above measurement model can also be rewritten as
\begin{align}\label{prob2}
	\y_k :=  | \A_k{}' \xstar_k|, \ k=1,2,\dots, q
\end{align}
where $'$ denotes vector or matrix transpose.


The requirement that the measurement vectors used for imaging different $\xstar_k$'s be different and independent is what allows us to show that $m \ll n$ suffices. 
To understand this point in a simple fashion, consider the $r=1$ setting and suppose that $\xstar_k = \xstar_1$ (all columns are equal). We would then have $mq$ iid Gaussian measurements of $\xstar_1$ and hence $mq \ge C n$ would suffice \cite{twf}. If $q=n$, this means just a constant number of measurements $m$ per column (signal) suffices.   For $r>1$ but small, we will show that we can extend this idea to show that, when $q=n$ (or is larger), the required  value of $m$ depends only on the value of $r$ and not on $n$.
On the other hand, if $\a_\ik = \a_i$ for all $k$, then, in the above $r=1$ example, only the first $m$ measurements are useful (the others are just repeats of these). Thus we will still need $m \ge Cn$ in this case.  This $\a_\ik = \a_i$ case, and its linear version, is what has been studied extensively in the literature \cite{wainwright_linear_columnwise,cov_sketch}. In this case, $m$ needs to be at least $(n+q) r$. 




Let $\X^* \svdeq \U^* \bm\Sigma^* \B^*$ denote its singular value decomposition (SVD) so that $\U^* \in \mathbb{R}^{n\times r}$, $\B^* \in \mathbb{R}^{r \times q}$, and $\bm\Sigma^* \in \mathbb{R}^{r \times r} $ is a diagonal matrix. {\em Observe that this notation is a little non-standard, if the SVD was $\U^* \bm\Sigma^* {\V^*}'$, we are letting $\B^*:={\V^*}'$. Thus, columns of $\U^*$ and rows of $\B^*$ are orthonormal.}
We use $\sigmax,\sigmin$ to denote the maximum, minimum singular values of $\Xstar$ and $\kappa = \sigmax/\sigmin$ to denote its condition number.
Finally, we define
\[
\tB := \bm\Sigma^*  \B^*.
\]
We use the above non-standard notation for SVD because our solution approach will recover columns of $\tB$, $\tb_k$, individually by solving an $r$-dimensional standard PR problem (it is more intuitive to talk about recovery of column vectors than of rows). With the above notation, the QR decomposition of an estimate of $\tB$, denoted $\hat\B$, will be written as $\hat\B \qreq \R_B \B$ with $\B$ being an $r \times q$ matrix with orthonormal rows (or equivalently $\hat\B' \qreq \B' (\R_B)'$).

{\em Right Incoherence. }
Observe that we have global measurements of each column, but not of the entire matrix. Thus, in order to correctly recover $\Xstar$ with small $m$, we need an assumption that allows for correct ``interpolation'' across the rows. One way to ensure this is to borrow the right incoherence (incoherence or denseness of right singular vectors) assumption from matrix completion literature \cite{matcomp_candes,lowrank_altmin}. 
In our notation, this means that we need to assume that
\begin{align}
	\max_k \| \b^*_{k} \|^2 \leq \mu^2 \frac{r}{ q },
	\label{right_incoh_1}
\end{align}
with $\mu \ge 1$ being a constant. Clearly, this implies that
\begin{align}
	\|\xstar_k\|^2 = \|\tb_k\|^2 \le  \sigmax^2 \mu^2 \frac{r}{ q } = \kappa^2 \sigmin^2  \mu^2  \frac{r }{ q }   \leq   \kappa^2 \mu^2 \frac{\|\X^*\|_F^2 }{ q}
	\label{right_incoh_2}
\end{align}
for each $k$.
If we assume $\kappa$ is a constant, up to constant factors, \eqref{right_incoh_2} also implies \eqref{right_incoh_1}. Thus, up to constant factors, requiring right incoherence is the same as requiring that the maximum energy of any signal $\xstar_k$ is within constant factors of the average.

\begin{table*}[t!]
	\caption{\small{Comparing our result (first result for LRPR) with the first and the best results for (bounded time) non-convex algorithms for the three related problems to ours -  LRMC, PR, Sparse PR. Here ``best'' refers to best sample complexity. We treat $\kappa,\mu$ as constants. $\Xstar$ is an $n\times q$ rank-$r$ matrix; $\xstar$ is an $n$-length vector signal; and $m$ is the number of samples needed per signal (per column of $\Xstar$). LRPR, Sparse PR, and Standard PR results assume iid Gaussian measurements, while LRMC results assume iid Bernoulli model on observed entries.
	}}
	\begin{center}
		\renewcommand*{\arraystretch}{1.01}
		\resizebox{\linewidth}{!}{
			\begin{tabular}{|l|l|l|l|l|} \toprule
				Problem & Global & Assumptions & Sample Complexity  & Time Complexity per signal  \\
				& Measurements?  &&& (with $m$ = its lower bound)  \\ \midrule
				{\bf LRPR (first)} & {\bf No} & {\bf  right incoherence,} & {\bf  $m \ge C \frac{n}{q} r^4 \log (1/\epsilon)$,} &  {\bf $C n (n/q) r^5  \log^3(1/\epsilon)$ } \\
				{\bf (our work) }   &  &  {\bf $\Xstar$ has rank $r$ } & {\bf  $m \ge C \max(r, \log n, \log q)$}    & \\ \hline
				LRMC (first) \cite{lowrank_altmin} &  No  &  left \& right incoherence, & $m \ge C \frac{\max(n,q)}{q} r^{4.5} \log (1/\epsilon)$ &  $C (n/q) r^{6.5} \log n  \log^2(1/\epsilon)$ \\
				&   & $\Xstar$ has rank $r$  && \\ \hline
				LRMC (best) \cite{rmc_gd} & No  & left \& right incoherence & $m \ge C \frac{ \max(n,q) }{q} r^2 \log^2 n \log^2(1/\epsilon) $ &  $C (n/q) r^3 \log^n \log^2( 1/\epsilon)$ \\
				& &  $\Xstar$ has rank $r$  && \\ \hline
				Sparse PR (first) \cite{pr_altmin} & Yes  & $\xstar$ is $s$-sparse in canonical basis, & $m \ge C s^2 \log n \log (1/\epsilon) $ &  $Cn s^3 \log n \log^2(1/\epsilon)$ \\
				&     &                min nonzero entry lower bounded & & \\ \hline
				Sparse PR (best) \cite{cai,fastphase} &  Yes  & $\xstar$ is $s$-sparse in canonical basis & $m \ge C s^2 \log n $ &  $Cn s^2 \log n \log(1/\epsilon)$ \\
				&&&& \\ \hline
				Standard PR (first) \cite{pr_altmin} & Yes  & None & $m \ge C  n \log^3 n \log(1/\epsilon)$ &  $Cn^2 \log^3 n \log(1/\epsilon)$ \\
				&     &                  & & \\ \bottomrule
				Standard PR (best) \cite{twf,rwf} & Yes  & None & $m \ge C  n$ &  $Cn^2 \log(1/\epsilon)$ \\
				&     &                  & & \\ \bottomrule
				
			\end{tabular}
		}
		\label{compare_assu}
	\end{center}
	\vspace{-0.05in}
\end{table*}

\subsubsection{Notation}
We use $\|.\|$ to denote the $l_2$-norm of a vector or the induced 2-norm matrix and $\|.\|_F$ to denote the Frobenius norm. We use $\indic_{\mathrm{statement}}$ to denote the indicator function; it takes the value one if $\mathrm{statement}$ is true and is zero otherwise.
A tall matrix with orthonormal columns is referred to as a {\em ``basis matrix''}. For two basis matrices $\U_1, \U_2$, we define the subspace error (distance) as
\[
\SE(\U_1,\U_2) :=  \|(\I - \U_1 \U_1') \U_2\|. 
\]
This measures the sine of the largest principal angle between the two subspaces. We often use terms like ``estimate $\U$'' when the goal is to really estimate its column span, $\Span(\U)$.
The phase-invariant distance is defined as
\[
\dist(\xstar,\xhat) : = \min_{\theta \in [-\pi, \pi]} \|\xstar - e^{-j \theta} \xhat\|
\]
For our guarantees, we  {\em work with real valued vectors and matrices}, and in this case this simplifies to $\dist(\xstar,\xhat) = \min( \|\xstar - \xhat\|, \|\xstar + \xhat\|)$.
Define the corresponding distance between two matrices as
\[
\matdist(\hat\X,\Xstar)^2:= \sum_{k=1}^q \dist^2(\xstar_k,\xhat_k).
\]
We {\em reuse} the letters $c$, $C$ to denote different numerical constants in each use, with the convention $C \geq 1$ and $c < 1$.%

\subsection{Our Contributions and their Significance and Novelty} \label{contrib}
This work provides the {\em first} provably correct solution, AltMinLowRaP (Alternating Minimization for Low Rank PR), for Low Rank PR. AltMinLowRaP is a fast alternating minimization (AltMin) based solution approach with a carefully designed spectral initialization. We can prove that AltMinLowRaP converges geometrically to an $\epsilon$-accurate solution as long as (i) right incoherence stated in \eqref{right_incoh_1} holds, and (ii) the total number of available measurements, $mq$, is at least a constant (that depends on $\kappa,\mu$) times $nr^4 \log(1/\epsilon)$. Its time complexity is order $mqnr \log^2(1/\epsilon)$, but if we replace $mq$ by its lowest allowed value, then the time complexity becomes $O(n^2 r^5 \log^3(1/\epsilon))$. If $q = n$ (or is larger), ignoring log factors, this implies that only about $r^4$ (or lesser) samples per signal suffice when using AltMinLowRaP. Moreover, when using these many samples, the time complexity per signal is only about $C n r^5$. On the other hand, standard PR approaches (to recover each signal $\xstar_k$ individually) necessarily need $m \ge C n$ samples, and order $ C n^2$ time, per signal \cite{twf,rwf}. In the regime of small $r$, e.g., $r=\log n$, our result provides a significant sample, and time, complexity improvement over standard PR. Moreover, in this regime, our sample complexity is also only a little worse than its order optimal value of $(n + q)r$.
We demonstate the practical power of AltMinLowRaP in Fig. \ref{fig:vid_frames}, Fig. \ref{fig:time_comp}, and the other experiments described later.

The key insight that helps obtain the above reduction in sample complexity is the following observation: for both the initialization and the update steps for $\Ustar$, conditioned on $\Xstar$, we have access to $mq$ mutually independent measurements. These are not identically distributed (because the different $\bstar_k$'s could have different distributions), however, we can carefully use the right incoherence assumption to show that the distributions are ``similar enough'' so that concentration inequalities can be applied jointly for all the $mq$ samples. 
%
%
We also briefly study the dynamic setting, Phaseless Subspace Tracking, which allows the underlying signal subspace to change with time in a piecewise constant fashion.

To our best knowledge, even the linear version of our problem -- recover $\Xstar$ from $\y_k:=\A_k{}'\xstar_k,k=1,2,\dots,q$ -- does not have any provably correct solutions (as we explain in Sec. \ref{rel_work}).  Thus, our work also provides the first provable solution for this linear version. Our result provides an immediate corollary for this case as well. 
What has been studied extensively is the $\A_k = \A$  version of both LRPR and the above linear version \cite{cov_sketch,wainwright_linear_columnwise}. These are completely different problems as explained earlier in Sec. \ref{sec:prob}.

\subsubsection{Significance}
The other somewhat related problem to ours, Sparse PR, is actually quite different. This is because it involves recovery from global measurements of the sparse vector (each measurement depends on the entire unknown sparse vector), where as, in our case, the measurements are not global for the entire matrix $\Xstar$. It is well known that, when studying iterative (non-convex) solutions to problems, the global measurements' setting is easier to study, and one can obtain better sample complexity guarantees for it, as compared to its non-global counterpart \cite{nonconvex_review}.
For example, we can compare guarantees for iterative low rank matrix sensing (LRMS) from iid Gaussian linear projections with those for low rank matrix completion (LRMC) when assuming the iid Bernoulli model on observed entries \cite{matcomp_candes,lowrank_altmin,Bresler_matrix}. LRMS can be solved using an iterative algorithm with nearly order optimal number of measurements, e.g., the approach of \cite{Bresler_matrix} needs $mq \ge C (n+q) r \log \max(n,q)$, while even the best iterative LRMC guarantee (under the iid Bernoulli measurement model) \cite{rmc_gd} needs $mq \ge C  (n+q) r^2 \log^2(nq)  \log^2 (1/\epsilon)$.

In this sense, the problem closest to ours that is extensively studied is LRMC. Of course LRMC involves recovery from completely local but linear measurements of $\Xstar$, while LRPR involves recovery from nonlinear but column-wise global measurements. For this reason, for LRPR, in the regime of $q$ significantly larger than $n$, the required sample complexity $m$ is very small. As an example, suppose that $q \ge nr^4$, then we only need $m \ge C \max(r, \log q, \log(1/\epsilon))$. But this does not happen for LRMC.

We provide a comparison with the first and the best guarantees for non-convex (iterative) solutions for LRMC, sparse PR, and standard PR in Table \ref{compare_assu}. These and other works are discussed in detail in Sec. \ref{rel_work}. As can be seen from the table, the first guarantee for iterative solutions to many problems is often sub-optimal (either needs more samples or more assumptions) compared to the best one that appeared later. Moreover, in the practical regime of $r$ being order $\log n$ or smaller, our LRPR sample complexity is as good or better than that of the best LRMC guarantee.

\subsubsection{Novelty} \label{novelty}
%
In the absence of relevant existing work for even solving the linear version of our problem, except  for a convex solution for the $\a_\ik=\a_i$ case (which is a significantly different problem), developing and analyzing our approach was not a straightforward extension of existing ideas. For example, the AltMinLowRaP algorithm itself is not just alternating standard PR over $\U$ and $\B$. The PR problem for recovering $\Ustar$ given an estimate of $\tB$ is significantly different from standard PR; see Sec. \ref{altminlowrap_algo} below.

For the above reasons, it is also not possible to directly modify proof techniques from existing work.
We borrow some ideas from LRMC \cite{lowrank_altmin} and  standard PR results \cite{rwf,twf}. But the major difference is that concentration bounds need to applied differently than for either of these problems. (i) The LRMC guarantees use results for Bernoulli random variables (which is a much more well-developed literature that has also been studied in the context of random graphs). In our setting, the random variables are not Bernoulli and not even bounded. Hence  we rely on the sub-exponential Bernstein inequality \cite[Theorem 2.8.1]{versh_book} and the fact that the product of two sub-Gaussian random variables is a sub-exponential \cite[Lemma 2.7.7]{versh_book}. A second difference is that LRMC results do not need to deal with the phase error term. (ii)  Standard PR results do have a phase error term and do deal with unbounded random variables using results from \cite{versh_book}. But they do not have to prove concentration using a set of $mq$ measurements that are not identically distributed and, on first glance, may not even be ``similar enough'' to get a useful result. The ``similarity'' that is needed is of the following form: the maximum sub-exponential norm of any of the $mq$ random variables being summed is not much larger than its average value. For each term, we have to carefully exploit the right incoherence assumption to show that this holds.

\subsection{Motivation for studying Low Rank PR (LRPR)} \label{motiv}
Low rank is a commonly used model in many dynamic biomedical imaging applications since (i) such images cannot change too much from one frame to the next, and (ii) these images are taken in controlled settings and so there are no fast changing foreground occlusions to worry about\footnote{Occlusions by moving objects or persons in the foreground are a common feature in computer vision problems such as surveillance or autonomous vehicles etc; for such videos a sparse + low-rank model is more appropriate}. For example, it is an important part of many practically useful fast compressive dynamic MRI solutions, e.g., see \cite{st_imaging}, and follow-up works \cite{dyn_mri1,dyn_mri2}\footnote{These follow-up works exploit both low-rank of the entire sequence and wavelet sparsity of each image to further reduce the number of measurements needed in practice. This is the so-called ``sparse {\em and} low-rank'' model which is very different from sparse+low-rank model where the sparse component models occlusions by foreground moving objects.}.
In a similar fashion, a low sample complexity solution to LRPR can enable fast or low-cost dynamic phaseless imaging in applications such as solar imaging when the sun's surface properties gradually change over time \cite{butala}, or Fourier ptychographic imaging of live biological specimens and other dynamic scenes \cite{holloway,TCIgauri}. Suppose the scene resolution is $n$ and the total number of captured frames is $q$. If the dynamics is approximated to be linear and slow changing, with most of the change being explained by $r$ linearly independent factors, then the matrix formed by stacking the vectorized image frames next to each other can be modeled  as a rank-$r$ matrix plus small modeling error. In typical settings, $r \ll \min(n,q)$ is a valid assumption, making the unknown images' matrix approximately low-rank.

In all the above applications, {\em measurement acquisition is either expensive or slow}. For example, Fourier ptychography is a technique for super-resolution in which each of a set of low resolution cameras measures the magnitude of a different band-pass filtered version of the target high-resolution image. To get enough measurements per image, one either needs many cameras (expensive), or one needs to move a single camera to different locations to acquire the different bands~\cite{holloway}. This can make the acquisition process slow.
By exploiting the low-rank assumption, it is possible to get an accurate reconstruction with using fewer total measurements (fewer cameras in this example). This has been demonstrated experimentally for dynamic Fourier ptychography in our recent work \cite{TCIgauri} and its follow-up \cite{icip20}. 
Moreover, it is indeed {\em practically valid to assume that a different measurement matrix $\A_k$ is used for each different signal/image}. In the ptychography example, this would correspond to using a different randomly selected subset of cameras at different times $k$. Modified cameras can also be designed that save power by switching off a different set of pixels at different time instants. We have explored both settings in \cite{TCIgauri}.

Another practical point that should be mentioned is that, often, {\em in practice, a very small value of rank $r$ suffices}. For example, we used $r = 20$ in all our experiments on image sequences with n = 32400 in \cite{TCIgauri}. In follow-up work \cite{icip20}, we show that just $r = 5$ suffices for the same datasets, as long as a ``modeling error correction step''\footnote{This step applies a few iterations of any standard PR approach column-wise to the output of AltMinLowRaP, in order to recover some of the “modeling error” in the low-rank assumption.} is applied to the output of AltMinLowRaP. 

Lastly, in comparison to sparsity or structured sparsity priors, the low rank prior is a significantly more flexible one since it does not require knowledge of the dictionary or basis in which the signal is sufficiently sparse.  In Table \ref{tab:vid_sp}, we demonstrate this via a simple experiment. We compare AltMinLowRaP with the most recent provable sparse PR algorithm \cite{fastphase}, CoPRAM, applied with assuming wavelet sparsity (which is a generic choice for any piecewise smooth image, but is not necessarily the best choice for the particular image).
As can be seen, AltMinLowRaP has significantly superior performance not just for the real image sequence, but also for its deliberately sparsified version.  The sparsified sequence had sparsity level $s\approx 0.1n \approx100$ and we provided CoPRAM with this ground truth. AltMinLowRaP used just $r=15$ for all three results and still had much lower reconstruction error than CoPRAM. Details of this experiment are provided in Sec. \ref{small_r}. Moreover, low-rank also includes certain types of dynamic sparsity models (those with fixed of very small changes in support over time) as special cases.

\subsection{Review of Related Work} \label{rel_work}



\subsubsection{Linear version of our problem: Compressive PCA} 
While one would think that the linear (with phase) version of our problem would been extensively studied, this is not true.
There have been a few algorithmic solutions for this problem in prior work \cite{hughes_icip_2012,hughes_icml_2014}, and attempts to prove some facts theoretically. Follow-up work consists of an Asilomar 2014 paper \cite{aarti_singh_subs_learn} that solves the general PCA problem for any (not necessarily low rank) matrix $\Xstar$, but does not discuss recovery of $\Xstar$.  We explain these in detail in Sec. \ref{our_linear_details}.


\subsubsection{Our measurement model, but with same set of $m$ measurement vectors used for all signals, and its linear version}
The covariance sketching problem, e.g., see \cite{cov_sketch}, assumes that measurements satisfying \eqref{eq:problem}, but with $\a_\ik = \a_i$, are available. One aggregates these over $k$ to get $\y_i  :=  \sum_k \y_\ik =  \a_i' (\sum_k \xstar_k \xstar_k{}')  \a_i = \a_i' \Xstar \Xstar' \a_i$.  This aggregation is what ensures that the memory complexity of storing the measurements is order $m$ and not $mq$ (which is what we need). Also, the aggregated $\y_i$ is a function of $\Xstar \Xstar'/q$ only in the $\a_\ik=\a_i$ setting, otherwise it is a meaningless quantity. 
Assuming random zero mean iid signals $\xstar_k$, $\Xstar \Xstar'/q$ is the empirical covariance matrix of a signal. The question is can we recover $\Xstar \Xstar'/q$ from the scalar sketches $\y_i, i=1,2,\dots m$ with using $m$ much smaller than $nq$, when $\Xstar$ is low rank (or has other structure)? When $\Xstar$ is rank $r$, the result of \cite{cov_sketch} proves that $m$ of order $(n+q)r$ suffices to estimate the empirical covariance from the aggregated measurements $\y_i$ if one solves an appropriately defined nuclear norm minimization problem. For solving LRPR, we need a much smaller $m$ than this. The reason is we assume independent $\a_\ik$'s for different $k$, and we assume we have access to each individual $\y_\ik$.



The linear version of the above problem, but with random noise added, is considered in Corollary 3 of \cite{wainwright_linear_columnwise} and in the remark immediately below it. 
In our notation, its measurement model can be written as $\y_\ik = \langle \a_i \bm{e}_k{}', \Xstar \rangle + \bm{w}_\ik$ where $\bm{w}_\ik$ is iid zero mean Gaussian noise with variance $\nu^2$.
This result (specialized to the exact low rank case) shows that, whp, a nuclear norm minimization based solution will recover an estimate $\hat\X$ of $\Xstar$ that satisfies
$\|\Xstar - \hat\X \|_F^2 \le C \ \nu^2 r (n+q) / m.$
In this paper, the focus is on using the low rank property to achieve noise robustness. If the low rank property was not used, and one attempted to recover the columns individually, the recovery error bound would scale as $\nu^2 nq / m$ which is much larger.
%
This paper also studies other settings of recovering an approximately low rank matrix from linear measurements.

\subsubsection{Tangentially related work}
Some other tangentially related work includes: (i) computing the approximate rank $r$ approximation of any matrix (need not be low rank) from its random sketches \cite{sketch_1, sketch_2} (sketched SVD); (ii) compressed covariance estimation using different sketching matrices for each data vector, but without the low-rank assumption \cite{aarti_singh_cov_est};  and (iii) a generalization of low-rank covariance sketching \cite{local_conv_pr}: this attempts to recover an $n \times r$ matrix $\Ustar$ from measurements $\y_i = \|\a_i'\Ustar\|^2$ with $r \ll n$. When $r=1$, this is the standard PR problem. In the general case, this is related to covariance sketching described above, but not to our problem.


\subsubsection{Linear low-rank matrix recovery -- LRMS and LRMC}
Low-rank matrix recovery problems with linear measurements that have been extensively studied can be split into two kinds - those with ``global measurements'' and those without. ``Global measurements'' means that each measurement contains information about the entire structured quantity-of-interest, here the low-rank matrix. Such problems are called ``affine rank minimization problems'' or ``low-rank matrix sensing'' (LRMS)  and involve recovery of $\Xstar$ from $\y_i = \langle \A_i, \Xstar \rangle$ with $\A_i$ being dense matrices (typically iid Gaussian), see for example, \cite{first_lrms_convex,Bresler_matrix,tanner,svp,lowrank_altmin}. More recent work studies the case of  $\A_i = \a_i \a_i{}'$ \cite{procrustes,zheng_lafferty}.
Low-rank Matrix Completion (LRMC) is the completely local measurements' setting that involves recovering $\Xstar$ from measurements of a randomly (iid Bernoulli) selected subset of its entries \cite{matcomp_candes,optspace,lowrank_altmin,mc_luo,rmc_gd,pr_mc_reuse_meas} . Thus  $\A_i$'s are  one-sparse matrices.

A precursor to LRMS is compressive sensing (CS) of sparse signals. This has the same property as LRMS, it involves recovering a sparse  $\xstar$ from $\y_i =  \langle \a_i, \xstar \rangle$ with $\a_i$ being dense (sub-)Gaussian random vectors. Similar to CS, even for LRMS, it is typically possible to prove a simple (sparse or low-rank) restricted isometry property which simplifies the rest of the analysis. Our problem setting is different from, and more difficult than, LRMS. There are no ``global measurements'' of the entire $\Xstar$ and, moreover, the measurements' phase/sign is unknown. In this sense it is closer to LRMC than to LRMS. But, unlike LRMC, we do have column-wise global measurements. This is why, for our problem only incoherence of right singular vectors suffices, while LRMC needs incoherence of both left and right singular vectors. 

In summary, the problem closest to ours that is well studied is LRMC. The first iterative solution to LRMC was \cite{optspace}. However, its guarantee does not bound the required number of algorithm iterations, and thus its time complexity cannot be bounded. The first iterative LRMC solution with bounded time complexity, AltMinComplete \cite{lowrank_altmin}, needs a sample complexity of about $C \kappa^4 \mu^2 nr^{4.5} \log(1/\epsilon)$ and assumes sample-splitting (a different independent set of measurements is used at each iteration).  The most recent work on LRMC \cite{pr_mc_reuse_meas} removes the sample-splitting requirement and has bounded time complexity, but its sample complexity was $C nr^3 \log^6(n)$. The best iterative LRMC solution in terms of sample complexity \cite{rmc_gd} needs $C nr^2 \log^2 n \log^2 \kappa/\epsilon$ samples but needs sample-splitting. In the practical regime $r \in O(\log n)$, clearly our sample complexity for LRPR is comparable to the best LRMC guarantee \cite{rmc_gd}. For all values of $r$, it is slightly better than the first bounded time iterative LRMC solution \cite{lowrank_altmin}. Time-wise
the AltMin algorithm for LRMC is faster than our AltMinLowRaP algorithm for LRPR (see Table \ref{compare_assu}). This is because the LS problem to be solved in each AltMin step of LRMC involves a matrix with a large number of zeros but this is not the case for LRPR.


\subsubsection{Sparse PR} Sparse PR is a somewhat related problem to ours since it involves PR with a different type of structural assumption on the signals. But, as noted earlier, there is a major difference. Sparse PR involves recovery from global measurements of the entire sparse vector. It can be understood as the phaseless version of Compressive Sensing with random Gaussian measurements. The global measurements' setting is typical easier than its non-global counterparts.
%
Provably correct sparse PR approaches include convex relaxation approaches such as $\ell_1$-PhaseLift  \cite{voroninski13}; older combinatorial methods  \cite{jaganathan2012recovery}; and a series of fast iterative approaches: (i) AltMinSparse \cite{pr_altmin}, (ii) Sparse Truncated Amplitude Flow (SPARTA)  \cite{sparta}, (iii) Thresholded WF \cite{cai} and  CoPRAM \cite{fastphase}. 
The first two fast nonconvex approaches -- AltMinSparse and SPARTA -- needed to assume a lower bound on the minimum nonzero entry of $\x$. In follow-up work on Thresholded WF and then on CoPRAM, this extra assumption was removed. All four results need at least order $s^2 \log n$ measurements.
A summary of comparison of our work with LRMC, sparse PR, and standard PR is provided in Table \ref{compare_assu}.




\subsection{Organization} We present our algorithm and guarantee  along with a detailed discussion of the novelty of our proof techniques in Sec. \ref{sec:ph_co_lrmr} given next.
The  overall proof is given in Sec. \ref{proof_main}. The lemmas introduced in Sec. \ref{proof_main} are proved in Appendix \ref{proof_lems}. Numerical experiments are provided in Sec. \ref{sec:expts}.
We develop extensions to phaseless subspace tracking in Sec. \ref{sec:pst}. We  conclude in Sec. \ref{conclude} with a detailed discussion of ongoing and future work.

\section{Low Rank PR: Algorithm and Guarantee} \label{sec:ph_co_lrmr}

\subsection{AltMinLowRaP algorithm} \label{altminlowrap}
The complete algorithm is summarized in Algorithm \ref{lrpr_th}. We explain its main idea next and then explain the details.

\subsubsection{Main idea} \label{altminlowrap_algo}
AltMinLowRaP minimizes the following
\begin{align} \label{optprob}
	\sum_{k=1}^q \| \  \y_k -  |\A_k{}' \checkU \checktb_k| \ \|^2  
\end{align}
alternatively over $\checkU, \checktB$ with the constraint that $\checkU$ is a basis matrix. To initialize, we develop a spectral initialization for $\Span(\Ustar)$ explained below. At a top level, the alternating minimization (AltMin) can be understood as alternating PR: minimize \eqref{optprob} over $\checktB$ keeping $\checkU$ fixed at its current value and then vice versa. But there are important differences between the two PR problems and how they can be solved.
\bi
\item  Given an estimate of $\Span(\Ustar)$, denoted $\U$, and assuming that $\U$ contains orthnormal columns and is independent of the measurement vectors, the recovery of each $\tb_k$ is an $r$-dimensional ``standard PR'' problem. We can use either of \cite{twf,rwf} to solve it. The estimate that we get, denoted $\bhat_k$, is actually an estimate of $\g_k:=\U{}'\Ustar \tb_k$ which is a rotated version of $\tb_k$. If $\SE(\U,\Ustar)\le \delta$, then, by the noisy PR result of \cite{rwf}, we can see that $\|\bhat_k - \g_k\| \le C \delta \|\bstar_k\|$.

\item Given a previous estimate of $\tB$, the update of $\Ustar$, or equivalently of its vectorized version, $\Ustar_{vec}$, is a  significantly non-standard PR problem for two reasons. First, the ``measurement vectors'' for this PR problem are no longer independent or identically distributed. Second, and more importantly, by using the previous estimates of $\Ustar$ and of $\tb_k$, with accuracy level $\delta$, we can get an estimate, $\xhat_k = \U \bhat_k$, of $\xstar_k$ with the same accuracy level. With this, we can also get  an estimate of the phase/sign of the measurements, $\cb_\ik:= \mathrm{phase}(\a_\ik{}' \xstar_k)$ with the same accuracy level. As a result, obtaining a new estimate of $\Ustar_{vec}$ becomes a much simpler Least Squares (LS) problem rather than a PR problem.

\item We stress here that an argument similar to the above {\em does not apply} when recovering $\tb_k$'s. The reason is, with a new estimate of $\Ustar$, denoted $\U^+$, the previous estimate of $\tb_k$ becomes useless: (i) it is close to $\U'\Ustar \tb_k$, and not to $\U^+{}' \Ustar \tb_k$; and (ii) we only estimate the span of the columns of $\Ustar$ accurately, so $\U$ or $\U^+$ are close to $\Ustar$ (and hence to each other) only in the subspace error $\SE$ (and not in spectral or Frobenius norm). As a result  an estimate of the form $\U^+ \bhat_k$ cannot be shown to be close to $\xstar_k$.\footnote{To understand this point easily, suppose both $\U$ and $\U^+$ are perfect estimates of $\Ustar$ in terms of the subspaces they span.  Suppose $\U^+ = \Ustar  \R_2$ and $\U = \Ustar  \R_1$ where $\R_1,\R_2$ could be any rotation matrices. Then it is easy to see that $\|\xstar_k - \U^+ \bhat_k\|  \ge \|(\U  - \U^+) \bhat_k\|  - \|\xstar_k - \U \bhat_k\| \ge \| \Ustar( \R_1- \R_2) \bhat_k \| - \delta \|\xstar_k\|$. Since $\R_1, \R_2$ can be any rotation matrices, e.g, one could have $\R_2 = -\R_1$; in this case, the above error is lower bounded by $2\|\bhat_k\| -  \delta \|\xstar_k\| \ge (2-  C\delta) \|\xstar_k\|$.  The last inequality follows since $\|\bhat_k\| = \|\xhat_k\|$ and $\dist(\xstar_k, \xhat_k) \le  C \delta\|\xstar_k\|$ by Lemma \ref{lem:bounding_distb}.
	The error can thus be  even higher than using a zero vector to estimate $\xstar_k$.
}.
\ei

\subsubsection{Details} \label{altminlowrap_algo_details}
A different way to understand AltMinLowRaP is to split it into a three-way AltMin problem over $\Ustar$, $\tb_k$'s, and $\cb_\ik$'s.  This discussion assumes ``sample-splitting'': a new set of $mq$ measurements is used for each update of $\Ustar$ and another new set for each update of $\tB$. Thus the total number of measurements used is $2mq$ times the number of iterations.
\ben
\item
At each new iteration, we first obtain a new estimate of $\Ustar$ using previous estimates of $\tb_k$'s and of the measurements' phases  $\cb_\ik$'s. This is a LS problem, see line 10 of Algorithm \ref{lrpr_th}.  The output of the LS step may not have orthonormal columns; this is easily resolved by a QR decomposition step after it (line 11).

\item Given a new estimate of $\Ustar$, we recover each $\tb_k$, by solving easy individual $r$-dimensional standard PR problems. These are easy because $m$ of order $r$ suffices.

\item Given a good estimate of $\tb_k$ and of $\Ustar$, we can get an equally good estimate of $\xstar_k$ and hence of the  signs/phases $\cb_\ik$'s.
\een

Consider the PR step to update $\tb_k$'s. Observe that we can rewrite $\y_\ik$ as $\y_\ik = |\langle \a_\ik{}, \Ustar \tb_k \rangle | = | \langle ({\Ustar}'\a_\ik), \tb_k \rangle |$.  If $\Ustar$ were known, we would have a noise-free standard PR problem. If, at the $t$-the iteration, instead, we have a good  estimate of $\Span(\Ustar)$, denoted $\U^t$, we can still recover the $\tb_k$'s by solving a noisy version of the same problem. Due to sample splitting, $\U^t$ is independent of $\a_\ik$'s and so the design vectors $({\U^t}'\a_\ik)$ are still iid standard Gaussian. Any standard PR solution can be used, here we use Reshaped Wirtinger flow (RWF) \cite{rwf}. The noise seen by RWF is proportional to $\SE(\U^t, \Ustar)$. The error in the output of RWF cannot be lower than this value \cite{rwf}. Thus, one needs to use just enough iterations of RWF so that the error at the end of the final RWF iteration is proportional to $\SE(\U^t, \Ustar)$. Since we prove geometric convergence of $\SE(\U^t, \Ustar)$, we can let $T_{RWF,t}$ grow linearly with $t$.

\subsubsection{Initialization}
To obtain the initialization, we develop a careful modification of the truncated spectral initialization idea from \cite{twf,lrpr_tsp}. First assume that $r$ is known. We initialize $\Uhat$ as the top $r$ left singular vectors of the following matrix:
\beq \label{def_YU}
\Y_U = \frac{1}{mq} \sum_{k=1}^q \sum_{i=1}^m \y_{ik}^2 \a_{ik}\a_{ik}' \indic_{ \left\{ \y_{ik}^2 \leq C_Y \frac{1}{mq}\sum_{ik} \y_{ik}^2 \right\}  }.
\eeq
where $C_Y$ is a constant that decides the truncation threshold (which measurements are too large in magnitude compared to the mean energy of the measurements and should be discarded). For our guarantee, we set it equal to $9 \kappa^2 \mu^2$. In practice a good value can be chosen by experimentation and cross-validation or by using the ideas in \cite{coherence_est}. An alternative approach is to set it differently for each $k$ as done in \cite{lrpr_tsp}. This approach does not require knowledge of $\kappa$ or $\mu$, but it results in a worse lower bound on just $m$.  We discuss the effect of this choice in Remark \ref{diff_k_thresh} and also in Sec. \ref{our_prev_details}.

To understand why the above approach works, first consider the above matrix {\em with the indicator function removed}. Then it is not hard to see that its expected value equals $(1/q) [\Ustar ({\bm\Sigma^*}^2) {\Ustar}' + 2 \trace({\bm\Sigma^*}^2) \I]$, and so its span of top $r$ singular vectors equals $\Span(\Ustar)$. Hence, with large enough $mq$, the same should approximately hold for the original matrix. However, when using $\Y_U$ with the indicator function removed, a few ``bad'' measurements (those with very large magnitude $\y_\ik^2$ compared to their empirical mean over $i,k$) can heavily bias its value.
To mitigate this effect, and get a good initialization in spite of it, we will need a larger value of $mq$. Using the indicator function helps truncate the summation to only sum over the ``good'' measurements, and as a result a smaller value of $mq$ suffices. Mathematically, this helps ensure that $\Y_U$ is close to a matrix that can be written as $\sum_{ik} \w_\ik  \w_\ik{}'$ with $\w_\ik$'s being iid sub-Gaussian vectors (instead of sub-exponential in the case without truncation) \cite{twf}.

We can also use $\Y_U$ to correctly estimate $r$ whp by using the fact that, when $m$ and $q$ are large, the gap between its $r$-th and $(r+1)$-th singular value is close to $\sigmin^2/q$.  With this idea, we estimate $r$ as given in the first step of Algorithm \ref{lrpr_th}. As explained in \cite{lrpr_tsp}, another way of estimating the rank is to set $\hat{r} = \arg\max_j (\sigma_j(\Y_U)-\sigma_{j+1}(\Y_U) )$. This approach does not require knowledge of any model parameters. Hence it is easily applicable for real data (even without training samples being available). However, it works under the assumption that consecutive nonzero singular values of $\X^*$ are close (do not have significant gap), see \cite[Corollary 3.7]{lrpr_tsp} for one precise statement of this claim.

The main idea of the initialization step explained above was first developed in our previous work \cite{lrpr_tsp}, we explain the difference later in Sec. \ref{our_prev_details}. Also, as pointed out by an anonymous reviewer, a matrix that is related to $\Y_U$ was used in earlier work \cite{hughes_icip_2012,hughes_icml_2014} to try to solve what can be called the linear version of our problem; see Sec. \ref{our_linear_details}. We were not aware of this work when developing our approach. Our approach was developed independently in \cite{lrpr_tsp} as a modification of the truncated spectral initialization idea from PR literature \cite{twf} and then modified here.

We summarize the complete algorithm in Algorithm \ref{lrpr_th}. As is commonly done in existing literature, e.g., see \cite{lowrank_altmin,pr_altmin}, in order to obtain a provable guarantee in a simple fashion, we assume sample-splitting. Since we prove geometric convergence of the iterates, this increases the required sample complexity by a factor of only $\log(1/\epsilon)$. In our empirical evaluations, we reuse the same set of measurements.

\begin{algorithm}[t!]
	{ \caption{AltMin-LowRaP: Alt-Min for Phaseless Low Rank Recovery}
		\label{lrpr_th}
		%
		\begin{algorithmic}[1]
			\STATE Parameters: $T$, $T_{RWF,t}$, $\omega$.
			\STATE  Partition the $m_\tot$ measurements and design vectors  for each $\xstar_k$ into one set for initialization and $2T$ disjoint sets for the main loop.
			\STATE Set $\hat{r}$ as the largest index $j$ for which $\lambda_j(\Y_U) - \lambda_n(\Y_U) \ge \omega$ where $\Y_U$ is in \eqref{def_YU}.
			
			\STATE  $\U^0 \gets \Uhat^0 \gets$ top $\hat{r}$ singular vectors of $\Y_U$ defined in \eqref{def_YU}.   \hspace{5cm} \COMMENT{{\color{red} Initialize $\U$}}
			
			\FOR{$ t = 0: T$}
			\STATE $\bhat_k^t \gets \mathrm{RWF}( \{ \y_k^{(t)}, \U^{t}{}' \A_k^{(t)} \}, T_{RWF,t})$ for each $k =1,2, \cdots, q $ \  {\em ($\mathrm{RWF}$: Reshaped WF \cite{rwf})}.  \hspace{5cm}\COMMENT{{\color{red} Update $\hat{\B}$} }
			
			\STATE   $\xhat_k^t \gets \U^t \bhat_k^t$ for each $k =1,2, \cdots, q $. 
			\STATE $\Chat_k \gets \mathrm{Phase}\left(\A_k^{(T+t)}{}'\xhat_k^t \right)$ for each $k =1,2, \cdots, q $ .  \hspace{5cm}\COMMENT{{\color{red} Update $\Chat_k$'s}}
			
			\STATE Get $\B^t$ by QR decomp: $\hat{\B}^t \qreq \R_B^t \B^t $. 
			\STATE   $\Uhat^{t+1} \leftarrow \arg\min_{\tilde\U} \sum_{k =1}^q\| \Chat_k \y_k^{(T+t)} - \A_k^{(T+t)}{}' \tilde\U \b_{k}^{t}\|^2$.
			\STATE Get $\U^{t+1}$ by QR decomp: $\Uhat^{t+1} \qreq \U^{t+1} \R_U^{t+1} $ . \hspace{5cm}
			\COMMENT{{\color{red} Line 10, 11: Update $\U$}}
			\ENDFOR
		\end{algorithmic}
	}
\end{algorithm}

\subsection{Guarantee} \label{sec:main_res1}
We have the following guarantee.
\begin{theorem}
	\label{thm:main_res} 
	Consider Algorithm \ref{lrpr_th}. 
	Assume that the $\y_\ik$'s satisfy \eqref{eq:problem} with $\a_\ik$ being iid standard Gaussian; and $\Xstar$ is an $n \times q$ rank-$r$ matrix that satisfies right-incoherence with parameter $\mu$. 
	Set $T := C\log(1/\epsilon)$, $T_{RWF,t} = C (\log r + \log \kappa + t (\log(0.7)/\log(1-c)))$, $\omega = 1.3 \sigmin^2/q$, and $C_Y = 9\kappa^2 \mu^2$ in \eqref{def_YU}.
	Assume that, for the initialization step and for each new update, we use a new set of $m$ measurements with $m$ satisfying $mq \ge C \kappa^{12} \mu^4 \cdot nr^4$ and $m \ge C \max(r, \log q, \log n)$.
	Then, with probability (w.p.) at least $ 1- C n^{-10}$,
	\[
	\SE(\U^*,\U^T) \le \epsilon, \ \matdist(\hat\X^T, \X^*) \leq \epsilon \|\X^*\|_F
	\]
	and $\dist(\xhat_{k}^T, \x_{k}^*) \leq \epsilon \|\x_k^* \|$ for each $k$. Moreover, after the $t$-th iteration,
	\begin{align*}
		& \SE(\U^*, \U^{t}) \le 0.7^{t}  \deltinit, t=0,1,2,\dots, T
	\end{align*}
	where $\deltinit = \frac{c}{\kappa^2 r}$. Similar bounds also hold on the error in estimating $\xstar_k$s.
	The time complexity is $m q nr \log^2(1/\epsilon)$. 
\end{theorem}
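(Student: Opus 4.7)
My plan is an induction on the iteration index $t$ with the inductive hypothesis $\SE(\U^*,\U^t) \le 0.7^t \delta_\init$ where $\delta_\init = c/(\kappa^2 r)$. This requires (i) a base case: showing the spectral initialization gives $\SE(\U^*,\U^0)\le \delta_\init$; and (ii) a contraction step: showing that one iteration of the inner loop (RWF update of $\tB$, sign estimation, then LS update of $\U$) shrinks the subspace error by a factor of $0.7$. Because the algorithm uses sample-splitting, fresh independent Gaussians can be assumed at each step, so conditioning arguments decouple the analysis of the two updates.

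\textbf{Initialization.} For the base case, I would compute $\E[\Y_U]$ after carefully handling the truncation indicator. With the indicator removed, standard Gaussian identities give $\E[\Y_U] = (1/q)\bigl[\U^*(\bm\Sigma^*)^2{\U^*}' + 2\,\mathrm{tr}((\bm\Sigma^*)^2)\I\bigr]$, whose top-$r$ eigenspace is exactly $\Span(\U^*)$ with eigen-gap $\sigma_{\min}^{*2}/q$. The truncation with threshold $C_Y \cdot (1/mq)\sum \y_{ik}^2$ shifts this expectation by a controllable amount (here the choice $C_Y = 9\kappa^2\mu^2$ together with right-incoherence ensures the truncation threshold exceeds $\|\x^*_k\|^2$ for every $k$ with high probability). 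To show concentration, I would split into bounded and tail parts: the truncated summands $\y_{ik}^2 \a_{ik}\a_{ik}'$ are sub-exponential with a norm controlled using right-incoherence and the sub-exponential Bernstein inequality, giving the required $mq \gtrsim nr^4$ once a Davis--Kahan / Wedin step converts the spectral gap into the $\SE$ bound. This same concentration also validates the rank-estimation rule (line 3) since the gap $\omega = 1.3\sigma_{\min}^{*2}/q$ lies strictly between $\lambda_r - \lambda_n$ and $\lambda_{r+1} - \lambda_n$ of $\Y_U$.

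\textbf{RWF step and phase step.} Assume $\SE(\U^*,\U^t)\le \delta_t$. Because of sample-splitting, the effective design vectors $(\U^t){}'\a_{ik}$ for the $k$-th column are iid standard Gaussian in $\R^r$, and the ``noise'' seen by RWF on column $k$ has magnitude controlled by $\SE(\U^t,\U^*)\|\b^*_k\|$. Invoking the noisy RWF guarantee of \cite{rwf} with $T_{RWF,t}$ iterations (scheduled to track the induction) produces $\bhat_k^t$ with $\dist(\bhat_k^t, \g_k)\le c\,\delta_t\|\bstar_k\|$ for every $k$, where $\g_k=(\U^t){}'\U^*\tb_k$. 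This gives $\dist(\xhat_k^t,\xstar_k)\le C\delta_t\|\xstar_k\|$ and hence, via an anti-concentration bound on $\langle \a_{ik},\xstar_k\rangle$, the sign estimates $\hat C_k$ agree with $\mathrm{phase}(\a_{ik}'\xstar_k)$ except on a small fraction of indices.

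\textbf{LS update of $\U$: the main obstacle.} The key and hardest step is showing that line 10 contracts the subspace error. After fixing signs on correctly-signed indices, the normal equations read $\bigl(\sum_k \A_k\b_k^t\b_k^t{}'\A_k'\bigr)\mathrm{vec}(\Uhat^{t+1}) = \sum_k \A_k\mathrm{diag}(\hat C_k)\y_k\,\b_k^t{}'$, and after subtracting the analogous equation for $\U^*\R$ (for a suitable rotation $\R$ coming from the QR), the error decomposes into a Hessian term plus a perturbation term capturing (a) residual sign mismatches and (b) the gap $\bhat_k - \g_k$. I would bound the smallest eigenvalue of the Hessian by a matrix-Bernstein / sub-exponential Bernstein argument on $mq$ independent but \emph{not} identically distributed rank-one summands --- this is exactly where right-incoherence does the heavy lifting, since it guarantees that the per-summand sub-exponential norm is not much larger than the average, giving a useful tail bound without needing $m\gtrsim n$ per column. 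The perturbation term must be bounded column-by-column using the RWF error and the sign-flip count; the fact that $\b^*_k$ has norm at most $\mu\sqrt{r/q}\,\sigma_{\max}^*$ converts column-wise errors into a spectral-norm bound through a second sub-exponential Bernstein inequality. Combining these yields $\SE(\U^*,\U^{t+1})\le 0.7\,\delta_t$ (with the constant controlled by $\kappa,\mu$), closing the induction.

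\textbf{Finishing up.} After $T=C\log(1/\epsilon)$ iterations the subspace error is $\le \epsilon$; one more RWF pass then converts this into column-wise bounds $\dist(\xhat_k^T,\xstar_k)\le \epsilon\|\xstar_k\|$, and summing over $k$ using right-incoherence gives the $\matdist$ bound. The stated sample and time complexities follow by tracking the cost of the initialization, the $O(\log(1/\epsilon))$ outer iterations, and the $T_{RWF,t}$ inner RWF iterations (whose sum is $O(\log^2(1/\epsilon))$). The single most delicate component I expect throughout is the non-iid sub-exponential Bernstein bookkeeping for both $\Y_U$ and the $\U$-update Hessian/perturbation --- every appearance of the factor $\mu^4 r^4$ in the sample-complexity comes from carefully pushing the right-incoherence bound through these concentration arguments.
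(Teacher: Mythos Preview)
Your proposal is correct and follows essentially the same route as the paper: Davis--Kahan plus sub-exponential concentration for the initialization; noisy-RWF guarantee for the $\tb_k$-step; and an LS error decomposition into a ``Hessian'' term, a $\bhat$-error term, and a phase-error term, each bounded via sub-exponential Bernstein over the $mq$ non-iid summands with right-incoherence supplying the per-summand norm control.

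Two subtleties you gloss over that the paper handles explicitly. First, the Hessian and perturbation bounds in the LS step need incoherence of the \emph{estimated} $\b_k^t$ (the QR-normalized columns of $\hat\B^t$), not of $\bstar_k$; the paper proves a separate lemma transferring $\mu$-incoherence of $\Bstar$ to $\hat\mu$-incoherence of $\B^t$ with $\hat\mu = C\kappa\mu$, and this $\hat\mu$ is what enters the concentration bounds. Second, the phase-error term is not handled by a direct sign-flip count but by Cauchy--Schwarz, splitting it as $\sqrt{\mathrm{Term3}}\cdot\sqrt{\sum_{ik}(\cb_{ik}\hat\cb_{ik}-1)^2(\a_{ik}'\xstar_k)^2}$ and bounding the second factor via Lemma~1 of \cite{rwf}; this Cauchy--Schwarz step is precisely what forces $\delta_\init \le c/(\kappa^2 r)$ (rather than $c/(\kappa\sqrt{r})$), and hence accounts for one of the extra factors of $r$ and $\kappa^2$ in the final sample complexity.
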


Proof: We prove this in Sec. \ref{proof_main}.

\begin{remark}\label{diff_k_thresh}
	If we are willing to tolerate another lower bound on just $m$ of $m \ge C  r^4$, we can improve the dependence on $\kappa$, $\mu$ to $\kappa^8 \mu^2$. This will require the following change: define $\Y_U$ as done in \cite{lrpr_tsp}: use $9 \sum_{i=1}^m \y_\ik^2 / m$ as the threshold inside the indicator function. A second advantage of using this is that it allows one to use $C_Y=9$ instead of $C_Y$ that depends on $\kappa,\mu$. The disadvantage is of course a more stringent lower bound on $m$.
\end{remark}

\begin{remark}
	To understand the time complexity, observe that the most expensive step at each iteration is the update of $\Ustar$. This requires solving an LS problem of recovering an $nr$ length vector from $mq$ measurements. One can solve this by conjugate gradient descent with a cost of $mq \cdot nr \cdot  \log(1/\epsilon)$ \cite{pr_altmin}. This times the total number of iterations, $T= C \log (1/\epsilon)$ gives the complexity of the algorithm after initialization.  Consider the initialization step. Observe that the top $r$ singular vectors of $\Y_U$ are also the top $r$ singular vectors of an $mq \times n$ matrix $\G$ whose columns are given by $\a_\ik \y_\ik $ times the indicator function used in \eqref{def_YU}. Thus $\Y_U = \G \G'$. Since computing the $r$-SVD of an $a \times b$ matrix to $\delta$ accuracy needs time of order $ab r \log (1/\delta)$  \cite{robpca_nonconvex}, the SVD needed for initialization  can be computed in time  $C mq \cdot n \cdot r \cdot \log (1/\deltinit) = C mq nr \log r$ where $\deltinit$ is the error level to which the initialization needs to be accurate.  As explained later, $\deltinit = c/r$ suffices.
\end{remark}

Theorem \ref{thm:main_res} implies that one can achieve geometric convergence as long as the sample complexity $m_\tot:= (2T+1)m$ satisfies $m_\tot q \ge C \kappa^{12} \mu^4 nr^4 \log(1/\epsilon)$ along with $m_\tot  \ge C \max(r,\log q, \log n) \log(1/\epsilon)$. The second lower bound is very small and essentially redundant\footnote{We need this lower bound because we recover the $q$ $\tb_k$'s individually by solving a standard PR problem for each. This step works correctly w.p. at least $1-2q \exp(r - c m)$.} except when $q \ge C nr^4$.

Notice that the LRPR sample complexity is significantly better than that of standard (unstructured) PR methods which necessarily need $m = C n$ samples per signal (matrix column).  For fixed $m$ and $q$, LRPR time complexity is about $r$ times worse than that of standard PR. But, if we use the smallest value of $mq$ needed by each method to get an $\epsilon$-accurate estimate, AltMinLowRaP is actually faster when $r$ is small: its needs time of order $n^2 r^5 \log(1/\epsilon)$ while standard PR methods need time of oder $n^2 q$. We demonstrate this fact experimentally in Fig \ref{fig:time_comp}.

The minimum number of samples needed to recover an $n\times q$ matrix of rank $r$ is $(q+n)r$. Thus, in general, our sample complexity is $r^3 \log(1/\epsilon)$ times worse than its order-optimal value. As noted earlier, in problem settings like ours, where the measurements are not global, non-convex algorithms typically do need more than the order-optimal number of samples. 
Since neither our problem nor its linear version have any complete provable guarantees for correct recovery in existing work, LRMC is the closest problem to ours that has been extensively studied and that also uses non-global measurements. Sparse PR is the other somewhat related problem to ours but it is easier because it involves recovery from global measurements of a sparse vector. Table \ref{compare_assu} provides a comparison of our guarantee with the first and best results for both problems. More details are in Sec. \ref{rel_work}.
As can be seen from the table, our sample complexity compares favorably with that for the first  non-convex solution for LRMC that has bounded time complexity (bounds the required number of iterations) \cite{lowrank_altmin}. In the practical regime of $r$ being order $\log n$, it even compares with the best iterative LRMC result \cite{rmc_gd}. 
Also, the the first guarantee for iterative solutions to both problems is sub-optimal (needs more samples or more assumptions) compared to the best one that appeared later.%

We should reiterate that, (i) in many practical applications, a small  value of $r$ suffices, e.g., we used $r=5$ for images with $n=32400$ in \cite{icip20} followed by a few iterations of model error correction via standard PR; and (ii) low rank is a more flexible model for dynamic imaging than sparsity. Also see Tables \ref{tab:vid}  and  \ref{tab:vid_sp}.

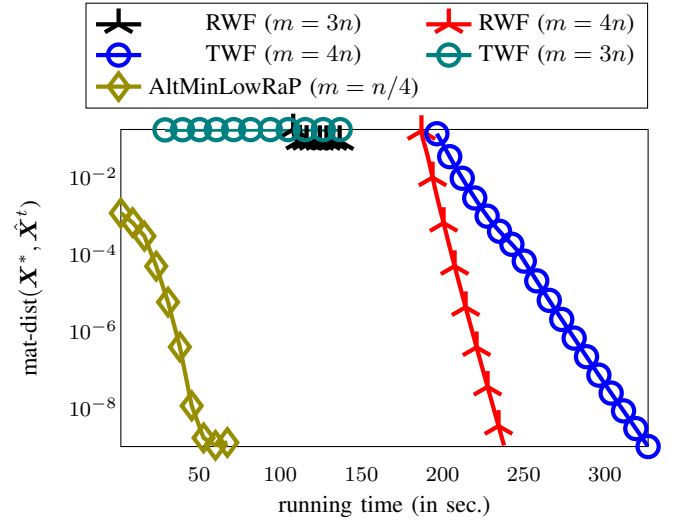
\begin{figure}[t!]
	\begin{center}
		\begin{tikzpicture}
		\begin{groupplot}[
		group style={
			group size=1 by 1,
			x descriptions at=edge bottom,
			y descriptions at=edge left,
		},
		my stylecompare,
		width=.97\linewidth,
		height=5.8cm,
		enlargelimits=false,
		]
		\nextgroupplot[
		legend entries={
			RWF ($m=3n$),
			RWF ($m=4n$),
			TWF ($m=4n$),
			TWF ($m=3n$),
			AltMinLowRaP ($m = n/4$)
		},
		legend style={at={(1,1.4)}},
		legend columns = 2,
		legend style={font=\small},
		ymode=log,
		xlabel={\small{running time (in sec.)}},
		ylabel={\small{$\matdist(\Xstar, \hat\X^t$)}},
		xticklabel style= {font=\footnotesize, yshift=-1ex},
		yticklabel style= {font=\footnotesize},
		]
		\addplot [black, line width=1.6pt, mark=Mercedes star,mark size=6pt, mark repeat=10, select coords between index={0}{70}] table[x index = {2}, y index = {3}]{\timecomp};
		\addplot [red, line width=1.6pt, mark=Mercedes star,mark size=6pt, mark repeat=10, select coords between index={0}{75}] table[x index = {0}, y index = {1}]{\timecomp};
		\addplot [blue, line width=1.6pt, mark=o,mark size=4pt, mark repeat=10, select coords between index={0}{170}] table[x index = {4}, y index = {5}]{\timecomp};
		\addplot [teal, line width=1.6pt, mark=o,mark size=4pt, mark repeat=10, select coords between index={0}{100}] table[x index = {8}, y index = {9}]{\timecomp};
		\addplot [olive , line width=1.6pt, mark=diamond,mark size=5pt, select coords between index={0}{9}] table[x index = {6}, y index = {7}]{\timecomp};
		\end{groupplot}
		\end{tikzpicture}
	\end{center}
	\vspace{-.4cm}
	\caption{\small{Recovery error versus time-taken plot with time in seconds.  The ``time'' here is the time taken in seconds to reach a certain error level. We generate this plot as explained in Sec. \ref{simdata}: plot the time taken until end of iteration t on the x-axis and plot the error at the end of iteration t on the y-axis.
			{\em This figure illustrates the fact that if we use the lowest allowed value of $m$ for each approach, AltMinLowRaP is in fact faster than RWF or TWF, for any given value of desired recovery error.}
			Notice that RWF and TWF fail for $m=3n$, but work for $m=4n$. AltMinLowRaP works with just $m=n/4$ and with this value of $m$ it is roughly 5-times faster than both RWF (with $m=4n$) and TWF (with $m=4n$) for any desired level of error $\epsilon$. 
	}}
	\label{fig:time_comp}
\end{figure}



\subsection{Linear version of our problem: Compressive PCA}\label{our_linear_details}
Consider the linear (with phase) version of our problem -- recover a low rank matrix $\Xstar$, or its column span, from
\[
\y_\ik := \langle \a_\ik, \xstar_k \rangle, \  i=1,2,\dots,m, \ k=1,2,\dots,q.
\]
This problem can also be understood as ``compressive PCA'' although compressive PCA typically allows $\Xstar$ to also be only approximately low rank \cite{hughes_icip_2012,hughes_icml_2014,aarti_singh_subs_learn}.
Clearly, both our algorithm and our guarantee, Theorem \ref{thm:main_res}, directly apply to this simpler special case as long as the $\a_\ik$'s are iid Gaussian. Even when phases are available one could use the magnitude measurements and solve our harder problem instead. We can state the following corollary. 
\begin{corollary}
	Consider the problem of recovering an $n \times q$ rank-$r$ matrix $\Xstar$ from $\y_\ik := \langle \a_\ik, \xstar_k \rangle, i=1,2,\dots,m, \ k=1,2,\dots,q$. As long as right incoherence given in \eqref{right_incoh_1} holds, one needs $m \ge C \frac{n}{q} r^4 \log(1/\epsilon)$ to recover $\Xstar$ and its column span to $\epsilon$ accuracy (precisely defined in Theorem \ref{thm:main_res}).
\end{corollary}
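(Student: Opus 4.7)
\begin{IEEEproof}[Proof Plan]
The plan is essentially a one-line reduction: the linear measurement model is strictly more informative than the phaseless one, so any algorithm/guarantee for LRPR transfers to the linear setting at no cost. Concretely, given the linear measurements $\y_\ik := \langle \a_\ik, \xstar_k \rangle$, form the magnitudes $\tilde{\y}_\ik := |\y_\ik| = |\langle \a_\ik, \xstar_k \rangle|$. These $\tilde{\y}_\ik$ are exactly a realization of the LRPR measurement model in \eqref{eq:problem} with the same i.i.d.\ standard Gaussian design vectors $\a_\ik$. Feed $\{\tilde{\y}_\ik\}$ into Algorithm \ref{lrpr_th} and invoke Theorem \ref{thm:main_res} directly. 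Since the right-incoherence hypothesis \eqref{right_incoh_1} is assumed and $\Xstar$ has rank $r$, the theorem guarantees $\SE(\Ustar,\U^T)\le \epsilon$ and $\matdist(\hat\X^T,\Xstar)\le \epsilon \|\Xstar\|_F$ with high probability, provided $mq\ge C\kappa^{12}\mu^4\, nr^4\log(1/\epsilon)$. Treating $\kappa,\mu$ as constants yields the stated bound $m\ge C\tfrac{n}{q}r^4\log(1/\epsilon)$.

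The only bookkeeping item is the secondary requirement $m\ge C\max(r,\log q,\log n)$ from Theorem \ref{thm:main_res}; in the statement of the corollary this is absorbed into the constant since it is dominated by $\tfrac{n}{q}r^4\log(1/\epsilon)$ except in the uninteresting regime $q\gg nr^4$. No new concentration arguments, no new incoherence discussion, and no new analysis of the AltMin updates are needed: the phaseless-case analysis already handles the case where phases are consistently wrong/unknown, which is strictly harder than the linear case where the phases are exactly the true signs of $\A_k{}'\xstar_k$.

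There is no real obstacle, but it is worth remarking on what a \emph{direct} (non-reductive) proof would look like, since it could sharpen the constants. In the linear setting, line 8 of Algorithm \ref{lrpr_th} can be replaced by $\Chat_k \gets \mathrm{sign}(\y_k^{(T+t)})$, which is now \emph{exact} rather than an estimate; and the RWF step in line 6 can be replaced by a single least-squares solve $\bhat_k^t \gets (\U^t{}'\A_k)(\U^t{}'\A_k)^\dagger$ applied to $\y_k$. The AltMin recursion then reduces to pure alternating least squares on the rotated factors $(\Ustar,\tB)$. The convergence analysis would still require the same sub-exponential Bernstein concentration arguments described in Section \ref{novelty} --- in particular, controlling $\sum_{ik} (\a_\ik'\Ustar \tb_k)^2 \a_\ik \a_\ik'$ uniformly using right incoherence --- so the dominant sample-complexity term $nr^4/q$ would not improve. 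The improvement would be only in the dependence on $\kappa$ and $\mu$ and in the absence of the $T_{RWF,t}$ inner loop. Because the corollary only claims the reduced form of the bound, the one-line reduction argument above suffices.
\end{IEEEproof}
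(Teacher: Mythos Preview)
Your reduction is exactly the paper's argument: the text preceding the corollary states that ``even when phases are available one could use the magnitude measurements and solve our harder problem instead,'' and the corollary is then read off directly from Theorem~\ref{thm:main_res}. Your treatment of the secondary lower bound on $m$ and your remarks on a possible direct proof are reasonable additions, but the core approach matches the paper.
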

Thus our work also provides a simple and fast AltMin solution that provably converges geometrically to the compressive PCA solution. 
Of course, when the phase/sign is known, a simpler version of our algorithm should work and a better guarantee should be obtainable. We discuss this further in the conclusions' section.

In terms of existing related work,  \cite{hughes_icip_2012} provided an approach for solving the above problem when $\Xstar$ is only approximately low-rank. A follow-up paper \cite{hughes_icml_2014} studied its modification where $\a_\ik$'s are sparse random vectors, e.g., ``sparse Bernoulli" as they call it (each entry takes values $-1, 0, 1$ with probabilities $1/2s,1-1/s,1/2s$). 
Their proposed approach is related to our initialization step: one computes an estimate of the principal subspace of the column span of $\Xstar$ by computing the top $r$ singular vectors of a matrix that is similar to $\Y_U$ defined in \eqref{def_YU}. The difference is that the indicator function is absent, and they also sum over terms of the form $\a_{ik} \y_{ik}\y_{jk} \a_{jk}{}' $ for all $j \neq i$ which, in this linear setting, have nonzero expected value. To be precise, they compute the top $r$ singular vectors of  $\sum_k \A_k \y_k \A_k{}' \y_k{}'$. Denote such a matrix by $\Y_{U,modified}$. 
These works treat the columns $\xstar_k$ as random vectors and assume that they have {\em nonzero mean} and provide a simple intuitive approach to estimate the mean. This is of course only possible because the measurements are linear, and cannot be done in our setting. 
Both works \cite{hughes_icip_2012,hughes_icml_2014} show that the expected value of $\Y_{U,modified}$ is equal to $c_1 \I + c_2 \E[\Xstar \Xstar'/q]$, thus, the span of top r eigenvectors of the expected value equals the desired subspace: principal subspace of the population covariance, $\E[\Xstar \Xstar'/q]$.
They also claim that as $q \rightarrow \infty$, $\Y_{U,modified}$ converges to its expected value, but do not provide a rate of convergence. Finally, they also upper bound the expected value of the Frobenius norm of the error between $\Y_{U,modified}$ and its expected value for any value of $q$. 
Since this result also only bounds the expected value of the error, it cannot be used to get any useful information about the sample complexity $m$ that is required even for the initialization step. Moreover, of course these works do not provide any guarantee on how to recover the entire matrix $\Xstar$.
%
%

A later work in Asilomar 2014 \cite{aarti_singh_subs_learn} attempted to solve a generalization of the above problem: it provided an approach (which is again somewhat related to only our initialization step) and a guarantee for recovering the top $r$ singular vectors of a general matrix $\Xstar$. Specialized to the exactly rank $r$ case, their result proves that, if each column of $\Xstar$ is bounded, and if
$m \ge \frac{1}{\epsilon} \max \left(\frac{n}{\sqrt{q}} \sqrt{r}\kappa,   \frac{n}{q} r^2 \kappa^2 \right) $, one can obtain an $\epsilon$-accurate recovery of the subspace $\Span(\Ustar)$.  This is a much weaker result than ours: (i) its sample complexity $m$ depends on $1/\epsilon$ instead of on $\log(1/\epsilon)$; and (ii) $m$ needs to grow as $n/\sqrt{q}$ instead of as $n/q$. The reason it is weaker of course, is because they  used a single step approach instead of an iterative algorithm.



\subsection{Discussion of Proof Techniques and Reason for Worse Dependence on $\kappa, r$} \label{discuss}

\subsubsection{Proof techniques}   
Since even the linear version of our problem has not been studied theoretically (except  for a convex solution for the $\a_\ik=\a_i$ case which is a significantly different problem), it is not possible to directly modify proof techniques from existing work. We do borrow some ideas from LRMC \cite{lowrank_altmin} or from standard PR results \cite{rwf,twf}. But, as explained earlier in Sec. \ref{novelty}, concentration bounds need to applied in a significantly different way than for either of these problems.
%
The algebra for obtaining an expression that bounds the subspace recovery error between $\Uhat^t$ and $\Ustar$ uses the overall approach of \cite{lowrank_altmin}. After this, the details are different because, for LRMC, the random variables are Bernoulli, while in our case they are unbounded sub-Gaussian or sub-exponential.
Also, in LRPR, the  sign/phase are unknown and this introduces an extra term that needs to be bounded, see $\mathrm{Term2}$ defined in Lemma \ref{lem:key_lem}. To bound this we first use Cauchy-Schwarz to bound it by a product of two terms. One term is easy to deal with. For the second term, we borrow a lemma from the RWF paper \cite{rwf} on standard PR, but the rest of our approach is different because of the need to prove concentration of a set of $mq$ measurements that are not identically distributed (we need to carefully exploit right concerence to ensure that they are ``similar enough'' to apply the concentration bounds for sums of products of sub-Gaussians).  

Our initialization step uses the truncated spectral initialization approach, first introduced for PR in \cite{twf}. The proof for it also uses the overall approach of \cite{twf} but there are many important
differences in proving the concentration bounds (see above).  
Our initialization is also significantly different from that of LRMC or LRMS. In these cases, the phase is known and thus one can come up with a matrix whose expected value equals $\Xstar$. This is not possible for LRPR. The matrix  whose top $r$ singular vectors we compute has expected value $2 \beta_2 \Xstar \Xstar' + \beta_1 \I$. This point has important implications for the sample complexity dependence on $\kappa,r$, we discuss this next.


\subsubsection{Worse dependence on $\kappa$, $r$}
Our sample complexity  is comparable to that of \cite{lowrank_altmin} in terms of its dependence on $n$ and $r$. However, our result has a worse dependence on $\kappa$ because we have access only to phaseless measurements. Because of this, (1) our initialization step  needs to use the matrix $\Y_U$ and find its top $r$ eigenvectors in order to get an initial estimate of the column span $\Ustar$. For simplicity, consider $\Y_U$ without the truncation (without the indicator function). Then, its expected value is $\Ustar {\bm\Sigma^*}^2 \Ustar{}' + 2 \trace({\bm\Sigma^*}^2) \I$. Notice that the condition number of the first term of this matrix (the term of interest) is $\kappa^2$. Because of this, when analyzing this step, we end up with a dependence of $mq$ on $\kappa^8 \mu^4 nr^2 /\deltinit^2$ (see Claim \ref{lemm:bounding_U} and Sec. \ref{proof_init_lem} where this claim is proved). Here $\deltinit$ is the subspace error bound for the initialization step.
Instead, the expected value of the matrix used for initialization of AltMinComplete \cite{lowrank_altmin} has expected value equal to $\Xstar$ and thus its condition number is just $\kappa$. (2) A second issue is as follows: because of magnitude-only measurements, we are having to deal with phase error (sign error) in each LS step that updates the estimate of $\Ustar$. In bounding the phase error term -- $\mathrm{Term2}$  defined in Lemma \ref{lem:key_lem} in Sec. \ref{proof_claim} -- we need to use the Cauchy-Schwarz inequality; see proof of Lemma \ref{Show}. Because of this, when using the bound on $\mathrm{Term2}$ from Lemma \ref{Show} to prove the main descent claim, Claim \ref{lem:descent}, we end up with a bound of the form $\SE(\U^{t+1}, \Ustar) \le C \delta_t (\epsilon_1 +  \sqrt{\deltapt + \epsilon_2}) \sqrt{r} \kappa$ where $\deltapt$ is the bound on the subspace error from the previous step and $\epsilon_i$ are quantities used in the concentration bounds for Term1 and Term2 respectively. Thus, to ensure that the $(t+1)$-th step error is below $0.7 \deltapt$ (decays geometrically), we need $\deltapt \le c / (\kappa^2 r)$ for each iteration $t$. Since we show $\deltapt \le \deltinit$ for all $t$, this is ensured if  $\deltinit =  c / (\kappa^2 r)$.
We also need $\epsilon_1 \le c/ kappa \sqrt{r}$ and $\epsilon_2 \le c / (\kappa^2 r)$. 
This, along with the fact that the initialization step needs $mq \ge \kappa^8 \mu^4 nr^2 /\deltinit^2$, implies that our sample complexity per iteration becomes $C \kappa^{12} \mu^4 nr^4$.

If Cauchy-Schwarz were not used, we would only need $mq \ge C \kappa^{10} \mu^4 r^3$. Similarly, if we somehow did not use the loose bound $\|\X\| \le \|\X\|_F$ for rank $r$ matrices at a few different places in the proof, we could remove another factor of $r$.

\subsubsection{Our previous work} \label{our_prev_details}
The only other work that also studies our problem is our previous work \cite{lrpr_tsp}. This introduced a series of heuristics and evaluated them experimentally. It also provided a guarantee for the initialization step of one of them. If we compare their main result (their Theorem 3.2) with ours, it required the following lower bound on just $m$: $m \ge C \max(\sqrt{n},r^4)/ \epsilon^2$ in addition to a lower bound on $mq$ that also depends on $1/\epsilon^2$.  
We remove the $1/\epsilon^2$ dependence by analyzing the complete algorithm. 

The first two requirements on just $m$ are also significantly relaxed in our work because we study a significantly modified version of our previous algorithm.
The most important algorithmic difference is that, both for initialization and for later iterations, we recover $\tb_k$'s by solving the standard PR problem either fully (or, at least for enough iterations so that the error in recovering $\tb_k$'s is of the same order as the subspace error in the estimate of $\Ustar$). This is what allows us to replace the strong requirement $m \ge C \sqrt{n}$ that \cite{lrpr_tsp} needed by just $m \ge C r$. This is also what enables us to get a complete guarantee for the entire algorithm. The algorithm in \cite{lrpr_tsp} used only one iteration of AltMinPhase \cite{pr_altmin}  for obtaining a new estimate of $\tb_k$'s using a new estimate of $\Ustar$. With this, it was not possible to show that the recovery error of $\tb_k$'s is of the same order as that of $\Ustar$.

Our approach for initializing $\Ustar$ is taken from \cite{lrpr_tsp}, but with a simple, but important, difference: the threshold in the indicator function used for defining $\Y_U$ in \eqref{def_YU} now takes an average over all $mq$ measurements (instead of over only the $m$ measurements of the $k$-th column in \cite{lrpr_tsp}). This simple change allows us to use concentration over all the $mq$ measurements (and design vectors) in every step of deriving the initialization guarantee for $\Ustar$. This is what helps us eliminate the requirement of $m \ge C r^4$ on just $m$ that was needed in \cite{lrpr_tsp}.

\section{Proof of Theorem \ref{thm:main_res}}\label{proof_main}

The proof borrows ideas from past works -- \cite{twf,lrpr_tsp} (for initialization of $\Ustar$), \cite{lowrank_altmin} (the overall approach for getting a subspace error bound given in the Appendix), \cite{candes2009tight} for careful $\epsilon$-net arguments for unit Frobenius norm matrices, and  \cite{rwf} (for recovering $\tb_k$'s, and in one step of trying to show that the phase error is small). We cite the relevant reference again where it is used.
In Sec. \ref{sec:key_lem} next, we provide the two main claims (one for initialization and one for the descent), the two other auxiliary lemmas needed for proving Theorem \ref{thm:main_res} and the theorem's proof  using these. In Sec. \ref{proof_init_lem}, we give the key lemmas needed for proving the initialization claim and also prove it. The same is done for the descent claim in Sec. \ref{proof_claim}. Each of these subsections also provides the main ideas (intuition) used for proving the lemmas.
We then prove all but one of the lemmas from this entire section in Appendix \ref{proof_lems}. Lemma \ref{lem:key_lem} which uses the overall approach of \cite{lowrank_altmin} is proved in Appendix \ref{proof_key_lem}.%

\subsection{Overall lemmas and proof of Theorem \ref{thm:main_res}} \label{sec:key_lem}

\begin{claim}[Rank estimation and Initialization of $\U^*$]
	\label{lemm:bounding_U}
	Let $\U_\init = \Uhat^0$.
	Pick a $\deltinit < 0.25$.
	Assume $mq \ge \kappa^8 \mu^4 nr^2 /\deltinit^2 $. 
	Set the  rank estimation threshold $\omega = 1.3 \sigmin^2 / q$ (we can actually set the multiplier to any number between $0.025$ and $1.5$).
	Then, w.p. at least $1 - 6 n^{-10}$, the rank is correctly estimated and
	\begin{align*}
		\SE(\U_\init,\Ustar) \leq \deltinit.
	\end{align*}
\end{claim}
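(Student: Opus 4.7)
The plan is to show that $\Y_U$ concentrates around a well-structured population matrix whose top-$r$ eigenspace is $\Span(\Ustar)$ and whose $r$-th spectral gap is $\Theta(\sigmin^2/q)$, and then to convert the operator-norm deviation into a $\sin\Theta$ bound via Wedin. First I let $\tau := C_Y \|\Xstar\|_F^2/q$ and define the deterministic-threshold proxy
\[
\tilde\Y_U \;:=\; \frac{1}{mq}\sum_{i,k} \y_{ik}^2\,\a_{ik}\a_{ik}{}'\,\indic_{\{\y_{ik}^2 \le \tau\}}.
\]
A scalar Bernstein bound shows that $\frac{1}{mq}\sum_{ik}\y_{ik}^2$ concentrates around $\|\Xstar\|_F^2/q$ once $mq \gtrsim n$, so with high probability the random threshold inside the indicator of $\Y_U$ lies within a $(1\pm o(1))$ factor of $\tau$; only indices with $\y_{ik}^2$ in a narrow band around $\tau$ contribute to $\|\Y_U - \tilde\Y_U\|$, and that band has exponentially small Gaussian probability, so $\|\Y_U - \tilde\Y_U\| \ll \sigmin^2/q$ and it suffices to analyze $\tilde\Y_U$.

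To compute $\E[\tilde\Y_U]$ I use the standard rotation trick: write $\a_{ik} = \eta_{ik}\,\xstar_k/\|\xstar_k\| + \a_{ik}^\perp$ with $\eta_{ik}\sim \N(0,1)$ independent of $\a_{ik}^\perp$, so $\y_{ik}^2 = \|\xstar_k\|^2 \eta_{ik}^2$ and the truncation becomes $\indic_{\{\eta_{ik}^2 \le \alpha_k\}}$ with $\alpha_k := \tau/\|\xstar_k\|^2$. The choice $C_Y = 9\kappa^2\mu^2$ combined with right-incoherence \eqref{right_incoh_2} forces $\alpha_k \ge 9$ for every $k$, so the truncation clips only a negligible Gaussian tail. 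Integrating (the cross term vanishes by symmetry of $\eta_{ik}^3$) yields
\[
\E[\tilde\Y_U] \;=\; \frac{1}{q}\sum_{k=1}^q c_k\,\xstar_k\xstar_k{}' \;+\; \Bigl(\tfrac{1}{q}\sum_{k=1}^q d_k \|\xstar_k\|^2\Bigr)\,\I,
\]
with $c_k := \E[\eta^4\indic_{\eta^2\le\alpha_k}] - \E[\eta^2\indic_{\eta^2\le\alpha_k}] \in [1.98,\,2]$ and $d_k := \E[\eta^2\indic_{\eta^2\le\alpha_k}] \in [0.97,\,1]$. A Loewner-order bound on $\tfrac{1}{q}\Xstar\,\mathrm{diag}(c_k)\,\Xstar{}'$ shows its top-$r$ eigenspace is exactly $\Span(\Ustar)$, its $r$-th eigenvalue is $\ge 1.98\,\sigmin^2/q$ and its $(r{+}1)$-th eigenvalue is zero; adding the scalar multiple of $\I$ preserves eigenvectors and keeps the $r$-th gap of $\E[\tilde\Y_U]$ at $\ge 1.98\,\sigmin^2/q$.

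The main concentration step is to bound $\|\tilde\Y_U - \E[\tilde\Y_U]\|$ by an $\epsilon$-net argument on $S^{n-1}$. For any fixed unit $u$, the scalar $u'\tilde\Y_U u$ is a sum of $mq$ mutually independent (but not identically distributed) nonnegative random variables $Z_{ik} := \y_{ik}^2\,(u'\a_{ik})^2\,\indic_{\{\y_{ik}^2 \le \tau\}}$; because of the truncation, $Z_{ik} \le \tau\,(u'\a_{ik})^2$, so $\|Z_{ik}\|_{\psi_1} \le C\tau$ uniformly in $(i,k)$ and the per-term variance is $O(\tau^2)$. The sub-exponential Bernstein inequality followed by a union bound over a $(1/4)$-net of size $9^n$ gives $\|\tilde\Y_U - \E[\tilde\Y_U]\| \le C\delta\sigmin^2/q$ with probability at least $1 - n^{-10}$, provided $mq \gtrsim n\tau^2/(\delta\sigmin^2/q)^2$. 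Substituting $\tau \le C_Y r\kappa^2\sigmin^2/q$ (from incoherence and $\|\Xstar\|_F^2 \le r\sigmax^2$) together with $C_Y = 9\kappa^2\mu^2$ reduces this to $mq \gtrsim \kappa^8\mu^4\,n r^2/\delta^2$, and choosing $\delta \asymp \deltinit$ matches the claim's hypothesis exactly.

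Finally, Wedin's $\sin\Theta$ theorem applied with gap $\Theta(\sigmin^2/q)$ converts the operator-norm bound into $\SE(\U_\init,\Ustar) \le \deltinit$, and Weyl's inequality gives $\lambda_j(\Y_U) - \lambda_n(\Y_U) \ge 2\sigmin^2/q - o(\sigmin^2/q) > 1.3\,\sigmin^2/q$ for $j \le r$ and $\le o(\sigmin^2/q) < 1.3\,\sigmin^2/q$ for $j \ge r+1$, so the thresholding rule returns $\hat r = r$. The main obstacle I anticipate is the non-uniform truncation across $k$: because $\alpha_k$ depends on $\|\xstar_k\|^2$, the population matrix is not a clean multiple of $\Xstar\Xstar{}' + \I$ and the $c_k$'s must be kept in a tight interval, which is exactly what forces both the incoherence assumption and the specific choice $C_Y = 9\kappa^2\mu^2$; the $\kappa^4$ overhead in $\tau$ (through $\|\Xstar\|_F^2/q$ and through $C_Y$) is in turn what drives the $\kappa^8\mu^4$ factor in the sample complexity.
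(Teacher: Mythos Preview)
Your overall architecture matches the paper's proof almost exactly: replace the random threshold by a deterministic one, compute the population matrix (your $c_k,d_k$ are the paper's $\beta_{1,k}^-,\beta_{2,k}^-$), lower bound the $r$-th gap by $\gtrsim 1.5\,\sigmin^2/q$ via $\alpha_k\ge 9$, do a sub-Gaussian/sub-exponential $\epsilon$-net concentration for the deterministic-threshold matrix, and finish with Davis--Kahan (you say Wedin; same thing here) and Weyl for the rank.

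The one place your argument has a real gap is the passage from $\Y_U$ to $\tilde\Y_U$. Saying ``that band has exponentially small Gaussian probability, so $\|\Y_U-\tilde\Y_U\|\ll\sigmin^2/q$'' is not a valid step: small probability of an index landing in the band does not by itself control the \emph{operator norm} of the sum of the corresponding rank-one terms $\y_{ik}^2\a_{ik}\a_{ik}'$ (each such term has norm $\approx \tau\|\a_{ik}\|^2\approx \tau n$), and in any case the band probability is $O(\epsilon_1)$, not exponentially small in any parameter you can spend. The paper handles this cleanly by a Loewner sandwich: on the event that the empirical average lies in $[(1-\epsilon_1),(1+\epsilon_1)]\cdot\|\Xstar\|_F^2/q$, the monotonicity of the indicator gives $\Y_-(\epsilon_1)\preceq \Y_U\preceq \Y_+(\epsilon_1)$ in PSD order, hence $\|\Y_U-\Y_-\|\le\|\Y_+-\Y_-\|$. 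One then bounds $\|\E[\Y_+]-\E[\Y_-]\|\le 9\epsilon_1\mu^2\kappa^2 r\sigmax^2/q$ by a direct moment calculation (this is where the band width $\epsilon_1$ enters linearly, not exponentially) and applies the same concentration argument to both $\Y_\pm$. The crucial point you are missing is that $\epsilon_1$ must be chosen of order $\deltinit/(\kappa^4\mu^2 r)$ to make the band contribution $\le c\,\deltinit\sigmin^2/q$; together with the $\epsilon$-net concentration (which needs $mq\gtrsim n/\epsilon_2^2$ with $\epsilon_2$ of the same order) this is exactly what produces the $\kappa^8\mu^4 nr^2/\deltinit^2$ sample requirement. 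Once you replace your band-probability sentence by this sandwich-plus-expectation-difference argument, your proof coincides with the paper's.
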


Define
\begin{eqnarray}
\g_k^t := (\U^t)' \xstar_k \  \text{and} \  \e_k^t := (\I - \U^t {\U^t}') \xstar_k.
\label{def_g}
\end{eqnarray}
It is easy to see that $\xstar_k  = \U^t \g_k^t + \e_k^t$ and so $\y_\ik = |( (\U^t){}'\a_\ik)'\g_k^t + \a_\ik{}'\e_k^t|$. Thus, we have a noisy PR problem to solve with the noise magnitude proportional to $\|\e_k^t{}\|$ with $\|\e_k^t{}\| \le \SE(\U^t, \Ustar)\|\xstar_k\|$. We use RWF to solve it.
RWF provides an estimate, $\bhat_k^t$, of $ \g_k^t$. Observe that
\[
\g_k^t =  (\U^t)' \xstar_k = ( (\U^t)'\Ustar) \tb_k
\]
is just a rotated version of $\tb_k$. We show in the next lemma that, whp, the error in the RWF estimate, $\dist(\g_k^t,\bhat_k^t)$, is proportional to $\SE(\U^t, \Ustar)$; and the same is true for the error in $\xhat_k^t := \U^t \bhat_k^t$.

\begin{lemma}[Recovery of $\tb_k$'s]\label{lem:bounding_distb}
	At iteration $t$, assume that $\SE(\Ustar, \U^t) \le \deltapt$.
	Pick a $\delta_b<1$. If $m \geq C r$, and if we set $T_{RWF,t} = C \log \deltapt/ \log(1-c)$,
	then, w.p. at least $1 - 2q \exp\left( -c \delta_b^2 m \right)$, the following is true for each $k = 1, 2, \cdots, q$
	\bea
	\dist\left( \g_k^{t}, \bhat_k^{t} \right) 
	& \le & C \deltapt \|\tilde{\b}_k^*\| = C \deltapt \|\xstar_k\|, \nonumber \\
	\matdist(\G^{t} , \hat\B^{t}) & \le & C \deltapt \|\tB\|_F = C \deltapt \|\Xstar\|_F,  \nonumber \\
	\dist(\xhat_{k}^{t}, \x_{k}^*) & \le &  (C+1) \deltapt \|\x_k^* \|. 
	\eea
	with $C = \sqrt{1+\delta_b}+1$.
	
	Thus, if $m \ge C \max(r,\log n, \log q)/\delta_b^2$, then the above bounds hold w.p. at least $1 - n^{-10}$.
\end{lemma}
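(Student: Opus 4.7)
The plan is to reduce the recovery of each $\tb_k$ to an $r$-dimensional noisy standard phase retrieval problem and then invoke the existing Reshaped Wirtinger Flow (RWF) guarantee from \cite{rwf}. Decompose $\xstar_k = \U^t \g_k^t + \e_k^t$ with $\g_k^t = (\U^t)'\xstar_k$ and $\e_k^t = (\I - \U^t (\U^t)')\xstar_k$, so each measurement becomes
\[
\y_{ik} = |\a_{ik}'\xstar_k| = |\tilde\a_{ik}{}'\g_k^t + \xi_{ik}|, \quad \tilde\a_{ik} := (\U^t)'\a_{ik}, \ \xi_{ik} := \a_{ik}'\e_k^t.
\]
Because the algorithm uses sample-splitting, $\U^t$ is independent of the fresh batch $\{\a_{ik}\}_i$ used at line 6, so $\tilde\a_{ik}$ are i.i.d.\ $\N(0,\I_r)$. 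Moreover, $(\U^t)'\e_k^t = 0$, hence the jointly Gaussian pair $(\tilde\a_{ik}, \xi_{ik})$ has zero cross-covariance and is therefore independent. Thus, for each fixed $k$, I face a clean $r$-dimensional noisy PR instance with design vectors $\tilde\a_{ik}\sim\N(0,\I_r)$ and additive (inside-the-magnitude) Gaussian noise $\xi_{ik}\sim\N(0,\|\e_k^t\|^2)$ that is independent of the design; note also that $\|\e_k^t\| \le \SE(\U^t,\Ustar)\|\xstar_k\| \le \deltapt \|\xstar_k\|$ and $\|\g_k^t\| \le \|\xstar_k\|$.

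Next, I would invoke the noisy-RWF convergence result from \cite{rwf}: with $m \ge Cr$ fresh iid Gaussian design vectors in dimension $r$, spectral initialization followed by $T_{\text{RWF},t}$ reshaped Wirtinger iterations yields, with probability at least $1 - 2\exp(-c\delta_b^2 m)$,
\[
\dist(\bhat_k^t, \g_k^t) \le (1-c)^{T_{\text{RWF},t}} \|\g_k^t\| + C_0 \|\e_k^t\|.
\]
Choosing $T_{\text{RWF},t} = C \log(\deltapt)/\log(1-c)$ (so that $(1-c)^{T_{\text{RWF},t}} \le \deltapt$) drives the optimization error below the noise floor, giving $\dist(\bhat_k^t,\g_k^t) \le (1+C_0)\deltapt\|\xstar_k\| \le C\deltapt \|\tb_k\|$ with $C = \sqrt{1+\delta_b}+1$ after tracking constants.

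Third, I would translate this into an error bound on $\xhat_k^t = \U^t \bhat_k^t$. Since $\xstar_k - \U^t \g_k^t = \e_k^t$ lies in $\mathrm{Span}(\U^t)^{\perp}$, for any sign $\tau\in\{\pm 1\}$ Pythagoras gives
\[
\|\U^t \bhat_k^t - \tau\xstar_k\|^2 = \|\bhat_k^t - \tau\g_k^t\|^2 + \|\e_k^t\|^2,
\]
so minimizing over $\tau$ yields $\dist(\xhat_k^t,\xstar_k)^2 = \dist(\bhat_k^t,\g_k^t)^2 + \|\e_k^t\|^2 \le ((C+1)\deltapt \|\xstar_k\|)^2$. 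Squaring and summing over $k$, using $\sum_k \|\xstar_k\|^2 = \|\Xstar\|_F^2 = \|\tB\|_F^2$, produces the $\matdist$ bound. Finally, a union bound over $k=1,\dots,q$ collapses the per-column failure probability to $1 - 2q\exp(-c\delta_b^2 m)$; requiring $m \ge C\max(r,\log q,\log n)/\delta_b^2$ forces this to exceed $1 - n^{-10}$, matching the stated guarantee.

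The main obstacle is the careful invocation of the noisy RWF theorem in the ``noise inside the magnitude'' form $\y_{ik} = |\tilde\a_{ik}{}'\g_k^t + \xi_{ik}|$, together with tracking of the constant in front of $\|\e_k^t\|$ and the number of iterations needed so that the residual optimization error is on the same order as the noise floor; everything else is essentially bookkeeping (orthogonal decomposition, sample-splitting independence, and a union bound over $k$). Independence of $\tilde\a_{ik}$ and $\xi_{ik}$ is crucial for this reduction and follows immediately from $(\U^t)'(\I - \U^t(\U^t)') = 0$, which is where the orthonormality of $\U^t$ (ensured by the QR step of Algorithm \ref{lrpr_th}) plays its role.
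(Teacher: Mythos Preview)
Your approach is correct and matches the paper's at the structural level: reduce each column to an $r$-dimensional noisy PR instance, invoke the noisy RWF guarantee from \cite{rwf}, choose $T_{RWF,t}$ so the optimization residual drops to $\deltapt$, then lift to $\xhat_k$ and union-bound over $k$.

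The one noteworthy difference is in how the noise is packaged before calling RWF. You write the measurements in ``noise inside the magnitude'' form $\y_{ik}=|\tilde\a_{ik}{}'\g_k^t+\xi_{ik}|$ and argue independence of $\tilde\a_{ik}$ and $\xi_{ik}$. The paper instead writes the noise in the \emph{additive} form that Theorem~2 of \cite{rwf} is actually stated for, $\y_{ik}=|\tilde\a_{ik}{}'\g_k^t|+\nu_{ik}$ with $\nu_{ik}=|\a_{ik}{}'\xstar_k|-|\a_{ik}{}'\U^t\g_k^t|$, and bounds $|\nu_{ik}|\le|\a_{ik}{}'\e_k^t|$ by the reverse triangle inequality. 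The RWF guarantee then yields the error term $\|\bm\nu_k\|/\sqrt m$, and a separate sub-exponential concentration step (the paper's Lemma~\ref{ProductsubG}) gives $\sum_i(\a_{ik}{}'\e_k^t)^2\le m(1+\delta_b)\|\e_k^t\|^2$ w.p.\ $1-\exp(-c\delta_b^2 m)$; this is exactly where the constant $\sqrt{1+\delta_b}$ and the $\delta_b^2$ in the exponent originate. Your write-up compresses this concentration step into the RWF invocation itself, which obscures why $C=\sqrt{1+\delta_b}+1$; and your independence argument, while correct, is not actually needed once you go the additive-noise route. For the $\xhat_k$ bound the paper simply uses the triangle inequality $\|\h_k\|\le\|\e_k^t\|+\|\g_k^t-\bhat_k^t\|$ rather than your Pythagoras identity; your version is a touch sharper but lands on the same $(C+1)\deltapt$ bound.
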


From above, $\bhat_k^t$ is close to $\g_k^t$ (which is a rotated version of $\tb_k$) for each $k$.
We thus expect $\hat\B^t$, or equivalently $\B^t$,  to also satisfy the incoherence assumption. We show next that this is indeed true if $\deltapt$ is small enough.
Recall that $\hat\B^t \qreq \R_B \B^t$.

\begin{lemma}[Incoherence of $\Bstar$ implies incoherence of $\B^t$]
	\label{incoherencebhat}
	Pick a $\delta_b <1/10$ and assume that $m \ge C \max(r,\log n, \log q)/\delta_b^2$.
	At iteration $t$, assume that $\SE(\Ustar, \U^t) \le \deltapt$ with $\deltapt \leq \frac{0.25}{C \sqrt{r} \kappa}$.
	If $\Bstar$ is $\mu$-incoherent,  then,  w.p. at least $1 - n^{-10}$,
	%
	$\B^t$ is $\hat\mu$-incoherent with $\hat\mu = C \kappa \mu $. 
\end{lemma}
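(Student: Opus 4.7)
\bigskip

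\noindent\textbf{Proof plan for Lemma \ref{incoherencebhat}.}

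The plan is to bound the norm of each column $\b_k^t$ of the QR-factor $\B^t$ directly, by first bounding each column $\bhat_k^t$ of $\hat\B^t$ using the column-wise closeness guarantee from Lemma \ref{lem:bounding_distb}, and then converting this to a bound on $\b_k^t$ by lower bounding $\sigma_r(\hat\B^t)$. Since $\hat\B^t \qreq \R_B^t \B^t$ with $\B^t$ having orthonormal rows, we have $\b_k^t = (\R_B^t)^{-1}\bhat_k^t$, and since $\hat\B^t(\hat\B^t)' = \R_B^t(\R_B^t)'$, we get $\|(\R_B^t)^{-1}\| = 1/\sigma_r(\hat\B^t)$. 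Hence
\[
\|\b_k^t\| \le \frac{\|\bhat_k^t\|}{\sigma_r(\hat\B^t)}.
\]

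First I would upper bound the numerator. By Lemma \ref{lem:bounding_distb}, for every $k$ there is a sign $s_k \in \{\pm 1\}$ with $\|\bhat_k^t - s_k \g_k^t\| \le C\deltapt \|\xstar_k\|$, where $\g_k^t = (\U^t)'\Ustar \tb_k$. Combined with $\|\g_k^t\| \le \|\tb_k\| = \|\xstar_k\| \le \sigmax \mu\sqrt{r/q}$ (from \eqref{right_incoh_1} and the definition of $\tb_k$), this gives $\|\bhat_k^t\| \le (1+C\deltapt)\,\sigmax\mu\sqrt{r/q}$.

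Next I would lower bound $\sigma_r(\hat\B^t)$ via Weyl's inequality. Let $\D$ be the diagonal $\pm 1$ matrix of signs and set $\tilde\G^t := (\U^t)'\Ustar\,\bm\Sigma^*\B^*\D$, so that each column of $\tilde\G^t$ equals $s_k\g_k^t$. Then $\hat\B^t = \tilde\G^t + \E$, where $\|\E\|_F^2 = \matdist(\G^t,\hat\B^t)^2 \le (C\deltapt)^2 \|\Xstar\|_F^2$ by Lemma \ref{lem:bounding_distb}, so $\|\E\| \le \|\E\|_F \le C\deltapt\sqrt{r}\,\sigmax$. Using $\B^*(\B^*)' = \I_r$ and $\D\D' = \I_q$,
\[
\tilde\G^t(\tilde\G^t)' = (\U^t)'\Ustar(\bm\Sigma^*)^2 (\Ustar)'\U^t,
\]
so $\sigma_r(\tilde\G^t) \ge \sigma_r((\U^t)'\Ustar)\,\sigmin \ge \sqrt{1-\deltapt^{\,2}}\,\sigmin$. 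Weyl then gives
\[
\sigma_r(\hat\B^t) \ge \sqrt{1-\deltapt^{\,2}}\,\sigmin - C\deltapt\sqrt{r}\,\sigmax.
\]

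Finally I would plug in the assumption $\deltapt \le 0.25/(C\sqrt{r}\kappa)$. This forces $C\deltapt\sqrt{r}\kappa \le 0.25$ and $\deltapt \le 1/4$, so $\sigma_r(\hat\B^t) \ge (\sqrt{1-1/16} - 0.25)\,\sigmin \ge 0.7\,\sigmin$. Combining the numerator and denominator bounds,
\[
\|\b_k^t\| \le \frac{(1+C\deltapt)\,\sigmax\mu\sqrt{r/q}}{0.7\,\sigmin} \le C\kappa\mu\sqrt{r/q},
\]
which is exactly $\hat\mu$-incoherence of $\B^t$ with $\hat\mu = C\kappa\mu$. The only probabilistic ingredient is Lemma \ref{lem:bounding_distb} (which supplies both the column-wise error bound and the Frobenius bound on $\E$), holding with probability at least $1-n^{-10}$ under $m \ge C\max(r,\log n,\log q)/\delta_b^2$, so no extra concentration argument is needed; the rest is deterministic linear algebra (Weyl, QR, orthonormality of rows of $\B^*$). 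The only step that requires care is making sure the column-wise sign ambiguity in Lemma \ref{lem:bounding_distb} is absorbed globally into a single diagonal $\D$ before invoking $\B^*(\B^*)' = \I_r$; this is what allows the $\sigma_r$ bound to be independent of $q$.
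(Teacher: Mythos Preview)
Your proposal is correct and follows essentially the same approach as the paper's proof: bound $\|\b_k^t\| \le \|\bhat_k^t\|/\sigma_{\min}(\R_B^t)$, upper bound the numerator via $\|\bhat_k^t\| \le \|\g_k^t\| + \dist(\g_k^t,\bhat_k^t) \le (1+C\deltapt)\sigmax\mu\sqrt{r/q}$ using Lemma \ref{lem:bounding_distb} and right incoherence, and lower bound $\sigma_{\min}(\hat\B^t)$ via Weyl by writing $\hat\B^t$ as a sign-adjusted $\G^t$ plus a Frobenius-small perturbation. The paper handles the sign ambiguity by redefining $\g_k^t$ in a dedicated subsection rather than via an explicit diagonal $\D$, and it bounds $\sigma_{\min}(\G^t)$ directly as $\sigma_{\min}((\U^t)'\Ustar)\sigma_{\min}(\tB)$ rather than through $\tilde\G^t(\tilde\G^t)'$, but these are cosmetic differences leading to the same inequality $\sigma_{\min}(\hat\B^t) \ge \sqrt{1-\deltapt^2}\,\sigmin - C\deltapt\sqrt{r}\,\sigmax$.
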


Finally, the next claim shows that the LS step to update $\U$ reduces its error by a factor of $0.7$ at each iteration. Its proof relies on the previous two lemmas and the fact that $\xhat_k^t$ close to $\xstar_k$ implies that, with large probability, the phases (signs) of $(\a_\ik{}'\xhat_k^t)$ and $(\a_\ik{}'\xstar_k)$ are equal too.

\begin{claim}[Descent Lemma] \label{lem:descent}
	At iteration $t$, assume that $\SE(\Ustar, \U^t) \le \deltapt$ and $\deltapt \le \deltinit \le  c/r \kappa^2$.
	If
	$mq \ge C \kappa^6 \mu^2 nr^3$ and $m \ge C \max(r, \log n , \log q)  $
	then  w.p. at least	$1 - C \exp(-nr) - n^{-10}$,
	\[
	\SE( \U^{t+1},\Ustar)  \le 0.7 \deltapt:= \deltaptplus.
	\]
\end{claim}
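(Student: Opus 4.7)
The plan is to unpack the least-squares step on line 10 of Algorithm \ref{lrpr_th} via its normal equations. Writing $\mathrm{vec}(\Uhat^{t+1}) = \GG^{-1} \vv$ where $\GG := \sum_{k=1}^q (\b_k^t (\b_k^t)') \otimes (\A_k \A_k{}')$ and $\vv := \sum_{k=1}^q \b_k^t \otimes (\A_k \Chat_k \y_k)$, with $\A_k, \y_k$ standing for the fresh batch $\A_k^{(T+t)}, \y_k^{(T+t)}$, I would follow the strategy of \cite{lowrank_altmin} to compare $\mathrm{vec}(\Uhat^{t+1})$ with $\mathrm{vec}(\Ustar \R)$ for the natural rotation $\R$ relating $\bstar_k$ to $\b_k^t$, and decompose the resulting error as $\mathrm{vec}(\Uhat^{t+1}) - \mathrm{vec}(\Ustar \R) = \GG^{-1}(\mathrm{Term1} + \mathrm{Term2})$. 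Here $\mathrm{Term1}$ is the deviation of $\GG$ (acting on $\Ustar\R$) from its conditional expectation given $\B^t$, while $\mathrm{Term2}$ encodes the phase mismatch between $\Chat_k$ and the true sign pattern $\cb_k^* := \mathrm{sign}(\A_k{}'\xstar_k)$. This decomposition is the content of Lemma \ref{lem:key_lem}, which I would invoke to reduce the claim to a lower bound on $\sigma_{\min}(\GG)$ and upper bounds on $\|\mathrm{Term1}\|$ and $\|\mathrm{Term2}\|$. Sample-splitting is crucial: the fresh batch $\A_k^{(T+t)}$ is independent of $\U^t, \B^t, \xhat_k^t$, so $\B^t$ and $\Chat_k$ can be treated as deterministic in the conditional concentration arguments.

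For $\sigma_{\min}(\GG)$ and $\mathrm{Term1}$, the key ingredients are Lemma \ref{incoherencebhat} (upgrading $\mu$-incoherence of $\Bstar$ to $\hat\mu = C\kappa\mu$-incoherence of $\B^t$) and the sub-exponential Bernstein inequality \cite[Thm.~2.8.1]{versh_book} applied to the conditionally independent summands of $\GG$, each of which is a product of sub-Gaussians and hence sub-exponential by \cite[Lem.~2.7.7]{versh_book}. Incoherence forces $\|\b_k^t\| \le \hat\mu \sqrt{r/q}$, so no single summand dominates the average---exactly what is needed to make a sum of $mq$ non-identically distributed terms concentrate at essentially the iid rate. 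A standard $\epsilon$-net over unit-Frobenius matrices in $\Re^{n\times r}$ (cf.\ \cite{candes2009tight}) then upgrades the pointwise bound to a spectral one and contributes the $\exp(-nr)$ factor in the failure probability. The outcome is $\|\GG^{-1}\mathrm{Term1}\| \le \deltapt \cdot \epsilon_1 \sqrt{r}\kappa$ with $\epsilon_1 \le c/(\kappa\sqrt r)$ as soon as $mq \ge C \kappa^6 \mu^2 n r^3$.

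The genuine obstacle is $\mathrm{Term2}$. With $S_k := \{i : \Chat_{ik} \ne \cb_{ik}^*\}$ denoting the sign-error set, this term contracts against sums of the form $\sum_{i\in S_k}\a_{ik}\a_{ik}{}'\xstar_k$. My plan is (i) use Lemma \ref{lem:bounding_distb} to obtain $\dist(\xhat_k^t,\xstar_k) \le C \deltapt \|\xstar_k\|$; (ii) invoke the RWF-style observation \cite{rwf} that a sign disagreement at index $i$ forces $|\a_{ik}{}'(\xhat_k^t - \xstar_k)| \ge |\a_{ik}{}'\xstar_k|$, so the contribution of $S_k$ is controlled by $\deltapt \|\xstar_k\|$ plus a concentration slack $\epsilon_2$; (iii) apply Cauchy--Schwarz to split $\mathrm{Term2}$ into a factor bounded by this sign-mismatch control and a factor bounded via the $\hat\mu$-incoherence of $\B^t$; and (iv) close with a sub-exponential Bernstein step that, once again, exploits right incoherence to equalize per-term sub-exponential norms across all $mq$ summands. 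This yields $\|\GG^{-1}\mathrm{Term2}\| \le \deltapt\sqrt{\deltapt+\epsilon_2}\sqrt r \kappa$ with $\epsilon_2 \le c/(\kappa^2 r)$ under the same sample condition. Combining with the hypothesis $\deltapt \le c/(r\kappa^2)$ renders both $\epsilon_1 \sqrt r\kappa$ and $\sqrt{\deltapt+\epsilon_2}\sqrt r\kappa$ smaller than $0.35$, giving $\SE(\U^{t+1},\Ustar) \le 0.7\deltapt$. The main obstacle I anticipate, flagged in Section \ref{discuss}, is the Cauchy--Schwarz split in (iii): it is precisely what forces the $r^4$ (rather than $r^3$) dependence in $mq$, and any attempt to tighten it---or to avoid the loose $\|\cdot\| \le \|\cdot\|_F$ bound used for rank-$r$ matrices---would complicate the concentration analysis considerably. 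A secondary subtle point is verifying that right incoherence is strong enough to make the non-iid sum of $mq$ sub-exponentials concentrate at the clean rate, which is where essentially all the delicacy of the concentration bounds lives.
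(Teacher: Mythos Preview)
Your proposal is correct and follows essentially the same route as the paper: invoke Lemma \ref{lem:key_lem} for the decomposition, then bound $\mathrm{Term3}$, $\mathrm{Term1}$, $\mathrm{Term2}$ via Lemmas \ref{sigmaminG}, \ref{product}, \ref{Show} (using Lemma \ref{incoherencebhat} for $\hat\mu$-incoherence of $\B^t$, sub-exponential Bernstein plus an $\epsilon$-net for concentration, and the Cauchy--Schwarz plus RWF sign-mismatch argument for $\mathrm{Term2}$), and close with Lemma \ref{lem:bound_RU} and the parameter choices $\epsilon_1 = c/(\kappa\sqrt r)$, $\epsilon_2 = c/(\kappa^2 r)$. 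The only minor imprecision is that the comparison point in Lemma \ref{lem:key_lem} is $\Ustar\bm\Sigma^*\Bstar\B'$ rather than $\Ustar\R$ for a rotation $\R$, but since you defer to that lemma for the exact decomposition this does not affect the argument.
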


\begin{proof}[Proof of Theorem \ref{thm:main_res}]
	The $\SE(\U^t ,\Ustar)$ bounds are an immediate consequence of Claims \ref{lemm:bounding_U} and \ref{lem:descent}, along with setting $\delta_\init = c/\kappa^2 r$. 
Claim \ref{lemm:bounding_U} needs $mq \ge C \kappa^8 \mu^4 nr^2 /\deltinit^2 =  C \kappa^8 \mu^4 nr^4$ along with $m \ge C \max(r,\log n, \log q)$ while Claim \ref{lem:descent} needs $mq \ge C \kappa^6 \mu^2 nr^3$ and $m \ge C \max(r, \log n , \log q)$.
The first lower bound on $mq$ dominates. Thus the required sample complexity is $m_{tot} q \ge C \kappa^8 \mu^4 \cdot nr^4 \log(1/\epsilon)$ and $m_{tot} \ge C \max(r,\log n, \log q) \log(1/\epsilon)$.

The other bounds of Theorem \ref{thm:main_res} follow by Lemma \ref{lem:bounding_distb}.%
\end{proof}

We prove Claims \ref{lemm:bounding_U} and \ref{lem:descent} next in Sec. \ref{proof_init_lem} and \ref{proof_claim}. The proof of  Lemmas \ref{lem:bounding_distb} and \ref{incoherencebhat} and of the lemmas needed for proving these two claims is postponed to Appendix \ref{proof_lems}.

\subsection{Proof of Claim \ref{lemm:bounding_U}} \label{proof_init_lem}
In this section, we let $\a_{ik}: = \a_{ik}^{(0)}$ and $\y_{ik}:=\y_\ik^{(0)}$.

The overall idea for proving this is inspired by the approach in \cite{lrpr_tsp} which itself borrows ideas from \cite{twf}. But there are many important differences because we define $\Y_U$ differently in this work, see \eqref{def_YU}: the threshold in the indicator function now takes an average over all $mq$ measurements (instead of over only the $m$ measurements of the $k$-th column as in \cite{lrpr_tsp}). This simple change enables us to get a significantly improved result. It lets us  use concentration over all the $mq$ measurements (and design vectors) in each of the three steps of the proof. This is what helps eliminate the lower bound $m \ge C r^4$ on just $m$ that was needed in \cite{lrpr_tsp}. However, this also means that the proofs are much more involved (more quantities now vary with $k$). 

Recall the expression for $\Y_U$ from earlier, and define  matrices $\Y_- (\epsilon_1)$ and $\Y_+(\epsilon_1)$ as
\begin{align*}
	&\Y_U = \frac{1}{mq} \sum_{ik} |\a_{ik}{}'\x_k^*|^2 \a_{ik}\a_{ik}' \indic_{\left\{|\a_{ik}{}'\x_{k}^*|^2 \leq \frac{9\mu^2 \kappa^2}{mq}\sum_{ik} |\a_{ik}{}'\x_{k}^*|^2 \right\} }\\
	&\Y_-(\epsilon_1) =  \\
	& \frac{1}{mq} \sum_{ik} |\a_{ik}{}'\x_k^*|^2 \a_{ik}\a_{ik}' \indic_{\left\{|\a_{ik}{}'\x_{k}^*|^2 \leq \frac{9\mu^2 \kappa^2(1-\epsilon_1)}{q} \|\X^*\|_F^2  \right\} }.
\end{align*}
Define $\Y_+(\epsilon_1)$ similarly but with $(1-\epsilon_1)$ replaced by $(1+\epsilon_1)$ in the indicator function. We will show that $\Y_U$ is sandwiched between $\Y_-$ and $\Y_+$. This, along with showing that  $\Y_-$ and $\Y_+$ are close, will help us show that $\Y_U$ is close to $\Y_-$ and, hence, also to its expected value. After this, use of the $\sin \theta$ theorem will give us the desired bound.

Adapting the approach of \cite{twf,lrpr_tsp}, 
\begin{align}
	\E\left[\Y_-(\epsilon_1)\right] = \frac{1}{q} \left\{ \sum_k \beta_{1,k}^- \xstar_k {\xstar_k}' + \left(\sum_k \beta_{2,k}^- \|\xstar_k\|^2 \right) \I\right\}
	\label{EYminus}
\end{align}
where
\begin{align*}
	& \beta_{1,k}^-(\epsilon_1):= \E[(\xi^4 - \xi^2) \indic_{\xi^2 \le (1-\epsilon_1)\gamma_k }] \\
	& \beta_{2,k}^-(\epsilon_1):= \E[\xi^2 \indic_{\xi^2 \le (1-\epsilon_1)\gamma_k}], \\
	& \gamma_{k}:= \frac{9 \mu^2 \kappa^2 \|\Xstar\|_F^2}{q \|\xstar_k\|^2},
\end{align*}
and $\xi$ is a scalar standard Gaussian random variable.
The expression for $\E\left[\Y_+(\epsilon_1)\right] $ is similar but with $ (1-\epsilon_1)$ replaced by $ (1+\epsilon_1)$ in the expression for $\beta_{1,k}^+$, $\beta_{2,k}^+$.


Observe that $\E\left[\Y_-(\epsilon_1)\right] $ can be simplified as 
\[
\E\left[\Y_-(\epsilon_1)\right]  = \frac{1}{q} [ \Ustar (\sum_k \beta_{1,k}^- \tb_k \tb_k{}') \Ustar{}' + (\sum_k \beta_{2,k}^- \|\tb_k\|^2) \I ]
\]
Thus, the span of its top $r$ eigenvectors (same as singular vectors) equals $\Span(\Ustar)$. Hence, we can use the $\sin \Theta$ theorem \cite{davis_kahan} stated below in a fashion similar to \cite{lrpr_tsp} (Sec 6).
\begin{lemma}[Davis-Kahan $\sin \Theta$ theorem] \label{sinthetath}
	Given two symmetric matrices $\bm{D}$ and $\hat{\bm{D}}$. Let $\Ustar$ ($\U$) be the matrix of top eigenvectors of $\bm{D}$ ($\hat{\bm{D}}$).
	If $\lambda_r(\bm{D}) - \lambda_{r+1}(\bm{D}) - \|\bm{D} - \hat{\bm{D}}\| > 0$, then
	\[
	\SE(\U,\Ustar) \le \frac{\|\bm{D} - \hat{\bm{D}}\| }{\lambda_r(\bm{D}) - \lambda_{r+1}(\bm{D}) - \|\bm{D} - \hat{\bm{D}}\|}.
	\]
\end{lemma}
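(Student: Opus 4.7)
The plan is to reduce the bound on $\SE(\U,\Ustar) = \|(\I - \Ustar \Ustar{}')\U\| = \|\Ustar_\perp{}' \U\|$ to a matrix (Sylvester-type) equation whose solution is controlled by an eigenvalue gap.  First I would write the spectral decompositions: since $\bm{D}$ is symmetric, let $[\Ustar,\Ustar_\perp]$ be a full orthonormal eigenbasis with $\bm{D} \Ustar = \Ustar \bm{\Lambda}_1$ and $\bm{D} \Ustar_\perp = \Ustar_\perp \bm{\Lambda}_2$, where $\bm{\Lambda}_1 = \mathrm{diag}(\lambda_1(\bm{D}),\dots,\lambda_r(\bm{D}))$ and $\bm{\Lambda}_2 = \mathrm{diag}(\lambda_{r+1}(\bm{D}),\dots,\lambda_n(\bm{D}))$. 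Analogously, let $\hat{\bm\Lambda}_1$ collect the top $r$ eigenvalues of $\hat{\bm{D}}$, so $\hat{\bm{D}}\, \U = \U \hat{\bm\Lambda}_1$. Setting $\bm{E} := \hat{\bm{D}} - \bm{D}$ with $\|\bm{E}\| = \|\bm{D} - \hat{\bm{D}}\|$, starting from $(\bm{D}+\bm{E})\U = \U \hat{\bm\Lambda}_1$ and left-multiplying by $\Ustar_\perp{}'$, I would use the identity $\Ustar_\perp{}' \bm{D} = \bm{\Lambda}_2 \Ustar_\perp{}'$ to obtain
\begin{equation*}
\bm{\Lambda}_2 (\Ustar_\perp{}' \U) - (\Ustar_\perp{}' \U)\, \hat{\bm\Lambda}_1 \;=\; -\,\Ustar_\perp{}' \bm{E}\, \U.
\end{equation*}
Setting $\bm{X} := \Ustar_\perp{}' \U$ and $\bm{C} := -\Ustar_\perp{}' \bm{E} \U$, this is a Sylvester equation with $\|\bm{C}\| \le \|\bm{E}\|$ (since $\Ustar_\perp$ and $\U$ have orthonormal columns).

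Next I would quantify the spectral separation between $\bm{\Lambda}_2$ and $\hat{\bm\Lambda}_1$ using Weyl's inequality.  Weyl's inequality gives $|\lambda_j(\hat{\bm{D}}) - \lambda_j(\bm{D})| \le \|\bm{E}\|$ for every $j$, so every eigenvalue appearing in $\hat{\bm\Lambda}_1$ is at least $\lambda_r(\bm{D}) - \|\bm{E}\|$, while every eigenvalue in $\bm{\Lambda}_2$ is at most $\lambda_{r+1}(\bm{D})$. Hence the spectra of $\bm{\Lambda}_2$ and $\hat{\bm\Lambda}_1$ are separated by at least
\begin{equation*}
\Delta \;:=\; \lambda_r(\bm{D}) - \lambda_{r+1}(\bm{D}) - \|\bm{E}\|,
\end{equation*}
which is strictly positive by hypothesis.

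The final step is the Sylvester operator-norm bound: for symmetric $\bm{A},\bm{B}$ whose spectra are separated by $\Delta > 0$, any solution of $\bm{A}\bm{X} - \bm{X}\bm{B} = \bm{C}$ satisfies $\|\bm{X}\| \le \|\bm{C}\|/\Delta$. I would derive this by shifting both matrices so that $\bm{A} - cI \preceq -(\Delta/2)I$ and $\bm{B} - cI \succeq (\Delta/2)I$ (with $c$ the midpoint), then using the integral representation
\begin{equation*}
\bm{X} \;=\; -\int_{0}^{\infty} e^{t(\bm{A} - cI)}\, \bm{C}\, e^{-t(\bm{B} - cI)}\, dt,
\end{equation*}
which one verifies solves the equation by differentiating and evaluating at the endpoints, and which converges because each exponential has operator norm bounded by $e^{-t\Delta/2}$. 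Taking norms gives $\|\bm{X}\| \le \|\bm{C}\|/\Delta$. Applying this with $\bm{A} = \bm{\Lambda}_2$, $\bm{B} = \hat{\bm\Lambda}_1$, $\bm{C} = -\Ustar_\perp{}' \bm{E}\, \U$, and using $\|\bm{C}\| \le \|\bm{E}\|$ yields $\SE(\U,\Ustar) = \|\bm{X}\| \le \|\bm{E}\|/\Delta$, which is the claimed inequality.

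The main obstacle is the Sylvester operator-norm bound itself; the Kronecker-product route easily gives the Frobenius version but an operator-norm bound needs either the integral formula above or a functional-calculus argument. Everything else (spectral decompositions, Weyl's inequality, and the algebraic manipulation producing the Sylvester equation) is routine.
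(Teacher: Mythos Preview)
Your argument is correct and follows the standard route to the Davis--Kahan bound: derive the Sylvester equation $\bm{\Lambda}_2 \bm{X} - \bm{X}\hat{\bm{\Lambda}}_1 = -\Ustar_\perp{}'\bm{E}\,\U$ for $\bm{X} = \Ustar_\perp{}'\U$, use Weyl's inequality to separate the spectra of $\bm{\Lambda}_2$ and $\hat{\bm{\Lambda}}_1$ by $\Delta = \lambda_r(\bm{D}) - \lambda_{r+1}(\bm{D}) - \|\bm{E}\|$, and then invoke the operator-norm Sylvester bound via the integral representation. Each step checks out, including the verification that the integral converges and solves the equation.

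The paper, however, does not prove this lemma at all: it is stated as the classical Davis--Kahan $\sin\Theta$ theorem and simply cited to \cite{davis_kahan}. So there is no ``paper's own proof'' to compare against; you have supplied a self-contained proof where the paper just quotes the result from the literature. Your derivation is one of the standard ones and would serve perfectly well if the paper wanted to be self-contained here. One cosmetic point: with the paper's convention $\SE(\U_1,\U_2) = \|(\I-\U_1\U_1{}')\U_2\|$, the quantity $\|\Ustar_\perp{}'\U\|$ you bound is literally $\SE(\Ustar,\U)$ rather than $\SE(\U,\Ustar)$, but since both subspaces have the same dimension $r$ these two quantities coincide, so no harm is done.
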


Using Lemma \ref{sinthetath} with $\hat{\bm{D}}= \Y_U$ and $\bm{D} = \E\left[\Y_-\right]$,
\begin{align*}
	&\SE(\U_\init,\U^*) \\
	&\leq \frac{\|\Y_U - \E\left[\Y_-(\epsilon_1)\right]  \|}{\lambda_r(\E\left[\Y_-(\epsilon_1)\right] ) - \lambda_{r+1}(\E\left[\Y_-(\epsilon_1)\right] ) - \|\Y_U - \E\left[\Y_-(\epsilon_1)\right]  \|} \\
\end{align*}
Moreover,
\begin{align*}
	\lambda_r(\E\left[\Y_-\right] ) - \lambda_{r+1}(\E\left[\Y_-\right] ) & = \frac{1}{q} \lambda_{\min}\left(\sum_k \beta_{1,k}^- \tb_k \tb_k{}' \right)\\
	& \geq  ( \min_k \beta_{1,k}^-  ) \frac{\sigmin^2}{q} 
\end{align*}
Now we just need to upper bound $\|\Y_U - \E\left[\Y_-(\epsilon_1)\right]  \|$ and lower bound $\min_k \beta_{1,k}^-$. Both these follow by combining the three lemmas given next and triangle inequality.
\begin{lemma}
	\label{lemm:bound_Yu}
	We have that, w.p. at least
	$1 - \exp(-\epsilon_1^2 \frac{m q}{\mu^2 \kappa^2}) $,
	\[ \Y_-(\epsilon_1) \preceq \Y_U \preceq \Y_+(\epsilon_1)\]
	and so $\|\Y_U - \Y_-(\epsilon_1)\| \le \|\Y_+(\epsilon_1) - \Y_-(\epsilon_1)\|$.
\end{lemma}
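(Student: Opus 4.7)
\medskip

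The plan is to reduce the PSD sandwich to a single scalar concentration event. Observe that each of $\Y_U$, $\Y_-(\epsilon_1)$, $\Y_+(\epsilon_1)$ is a sum of PSD rank-one terms $|\a_{ik}{}'\xstar_k|^2 \a_{ik}\a_{ik}{}'$, multiplied by an indicator whose threshold is the only thing that differs. If I can show that, with the stated probability,
\begin{equation*}
\tfrac{1-\epsilon_1}{q}\|\Xstar\|_F^2 \;\le\; \tfrac{1}{mq}\sum_{i,k}|\a_{ik}{}'\xstar_k|^2 \;\le\; \tfrac{1+\epsilon_1}{q}\|\Xstar\|_F^2,
\end{equation*}
then (after multiplying through by $9\mu^2\kappa^2$) the threshold used by $\Y_U$ lies between those used by $\Y_-(\epsilon_1)$ and $\Y_+(\epsilon_1)$. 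Consequently the index set where the $\Y_-(\epsilon_1)$-indicator is $1$ is contained in the one for $\Y_U$, which is contained in the one for $\Y_+(\epsilon_1)$; adding/removing PSD terms then gives $\Y_-(\epsilon_1) \preceq \Y_U \preceq \Y_+(\epsilon_1)$. The norm bound $\|\Y_U-\Y_-(\epsilon_1)\|\le\|\Y_+(\epsilon_1)-\Y_-(\epsilon_1)\|$ then follows immediately from the fact that $0\preceq A\preceq B$ implies $\|A\|\le\|B\|$ for symmetric PSD matrices.

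Thus the whole lemma reduces to proving the scalar two-sided deviation displayed above. Since $\a_{ik}\sim \N(0,\I)$ is independent of $\xstar_k$, the variable $Z_{ik}:=|\a_{ik}{}'\xstar_k|^2$ has mean $\|\xstar_k\|^2$, so the target mean is $\frac{1}{q}\|\Xstar\|_F^2$, matching the claim. Each $Z_{ik}$ is a rescaled chi-squared with one degree of freedom, hence sub-exponential with $\|Z_{ik}\|_{\psi_1} \le C\|\xstar_k\|^2$. I would therefore apply the sub-exponential Bernstein inequality (\cite[Theorem 2.8.1]{versh_book}) to the $mq$ independent centered terms $Z_{ik}-\|\xstar_k\|^2$ with deviation $t=\epsilon_1 m\|\Xstar\|_F^2$.

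The main obstacle — and the step where the right-incoherence assumption enters crucially — is bounding the two quantities $\sum_{i,k}\|Z_{ik}\|_{\psi_1}^2$ and $\max_{i,k}\|Z_{ik}\|_{\psi_1}$ so that the resulting exponent depends on $mq$ and not just on $m$. Using \eqref{right_incoh_2}, $\max_k\|\xstar_k\|^2 \le \mu^2\kappa^2\|\Xstar\|_F^2/q$, which gives $\max_{i,k}\|Z_{ik}\|_{\psi_1}\le C\mu^2\kappa^2\|\Xstar\|_F^2/q$ and
\begin{equation*}
\sum_{i,k}\|Z_{ik}\|_{\psi_1}^2 \;\le\; m\big(\max_k\|\xstar_k\|^2\big)\sum_k\|\xstar_k\|^2 \;\le\; C\mu^2\kappa^2 \, m \, \|\Xstar\|_F^4/q.
\end{equation*}
Plugging in $t=\epsilon_1 m\|\Xstar\|_F^2$, the sub-Gaussian regime of Bernstein yields $t^2/\sum\|Z_{ik}\|_{\psi_1}^2 \gtrsim \epsilon_1^2 mq/(\mu^2\kappa^2)$, while the sub-exponential regime yields $t/\max\|Z_{ik}\|_{\psi_1} \gtrsim \epsilon_1 mq/(\mu^2\kappa^2)$; for $\epsilon_1\le 1$ the sub-Gaussian term governs, producing the claimed probability $1-\exp(-c\epsilon_1^2 mq/(\mu^2\kappa^2))$. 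Combining the deviation event with the PSD sandwich argument above completes the proof.
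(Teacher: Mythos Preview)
Your proposal is correct and follows essentially the same route as the paper: reduce the PSD sandwich to the scalar event $\frac{1}{mq}\sum_{ik}|\a_{ik}{}'\xstar_k|^2 \in [(1\pm\epsilon_1)\|\Xstar\|_F^2/q]$, then apply sub-exponential Bernstein (the paper invokes its Lemma~\ref{ProductsubG} with $X_{ik}=Y_{ik}=\a_{ik}{}'\xstar_k$, which is the same thing) and use right incoherence via $\sum_k\|\xstar_k\|^4\le\max_k\|\xstar_k\|^2\sum_k\|\xstar_k\|^2$ to get the $mq/(\mu^2\kappa^2)$ exponent. Your explicit spelling-out of the indicator-containment-implies-PSD-ordering step and the $0\preceq A\preceq B\Rightarrow\|A\|\le\|B\|$ step is more detailed than the paper, but the argument is identical.
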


\begin{lemma}
	\label{lemm:bound_Sigma+-}
	Let $\Y_+ = \Y_+(\epsilon_1)$ and $\Y_- = \Y_-(\epsilon_1)$. We have
	\[
	\|\E[\Y_+] - \E[\Y_-] \| \leq \frac{ 9 \epsilon_1 \mu^2 \kappa^2 \|\Xstar\|_F^2 }{q} \le \frac{ 9 \epsilon_1 \mu^2 \kappa^2  r{\sigma_{\max}^*}^2}{q}
	\]
	and, assuming $\epsilon_1 < 0.01$,
	\[
	\min_k \beta_{1,k}^-(\epsilon_1) \geq 1.5.
	\]
\end{lemma}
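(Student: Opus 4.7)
My plan is to reduce both inequalities to one-dimensional integral estimates against the chi-squared density, using the key observation that $\gamma_k \geq 9$ for every $k$. This follows directly from right-incoherence: \eqref{right_incoh_2} gives $\|\xstar_k\|^2 \leq \mu^2\kappa^2\|\Xstar\|_F^2/q$, so $\gamma_k = 9\mu^2\kappa^2\|\Xstar\|_F^2/(q\|\xstar_k\|^2) \geq 9$. Consequently, both truncation intervals $((1-\epsilon_1)\gamma_k,\,(1+\epsilon_1)\gamma_k)$ sit well to the right of the chi-squared mode at $t=1$, inside the region where the density has entered its exponentially-decaying tail.

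For the norm bound, I would start from \eqref{EYminus} and the analogous expression for $\E[\Y_+]$ to write
\[
\E[\Y_+]-\E[\Y_-] = \tfrac{1}{q}\sum_k (\beta_{1,k}^+ - \beta_{1,k}^-)\,\xstar_k\xstar_k{}' + \tfrac{1}{q}\Bigl(\sum_k (\beta_{2,k}^+ - \beta_{2,k}^-)\|\xstar_k\|^2\Bigr)\I.
\]
Because $\xi^4 - \xi^2 \geq 0$ on $\xi^2 \geq 1$, which holds throughout each relevant truncation interval, both coefficients are non-negative. The triangle inequality and $\|\xstar_k\xstar_k{}'\| = \|\xstar_k\|^2$ then give
\[
\|\E[\Y_+] - \E[\Y_-]\| \leq \tfrac{1}{q}\sum_k \bigl[(\beta_{1,k}^+ - \beta_{1,k}^-) + (\beta_{2,k}^+ - \beta_{2,k}^-)\bigr]\|\xstar_k\|^2.
\]
The bracketed quantity collapses to $\E[\xi^4\,\indic_{(1-\epsilon_1)\gamma_k < \xi^2 \leq (1+\epsilon_1)\gamma_k}] = \int_{(1-\epsilon_1)\gamma_k}^{(1+\epsilon_1)\gamma_k} \tfrac{t^{3/2}}{\sqrt{2\pi}}\,e^{-t/2}\,dt$. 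Since $t^{3/2}e^{-t/2}$ is decreasing for $t \geq 3$ and the interval lies inside $[3,\infty)$, I would bound the integrand by its value at the left endpoint and bound the interval length by $2\epsilon_1 \gamma_k$, obtaining a bound of the form $C\epsilon_1\,\gamma_k^{5/2}\,e^{-(1-\epsilon_1)\gamma_k/2}$. The function $\gamma \mapsto \gamma^{5/2}e^{-\gamma/2}$ is itself decreasing on $[5,\infty)$, so its supremum over $\gamma \geq 9$ is attained at $\gamma = 9$ and is an absolute constant (numerically below $3$). Summing against $\|\xstar_k\|^2$ then yields $\|\E[\Y_+] - \E[\Y_-]\| \leq C\epsilon_1\|\Xstar\|_F^2/q$, which is comfortably dominated by the stated $9\epsilon_1\mu^2\kappa^2\|\Xstar\|_F^2/q$ since $\mu,\kappa \geq 1$. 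The final inequality in the lemma uses $\|\Xstar\|_F^2 \leq r\,(\sigma^*_{\max})^2$.

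For the lower bound on $\beta_{1,k}^-$, I would split
\[
\beta_{1,k}^-(\epsilon_1) = \E[\xi^4 - \xi^2] - \E[(\xi^4-\xi^2)\,\indic_{\xi^2 > (1-\epsilon_1)\gamma_k}] = 2 - T_k,
\]
so it suffices to show $T_k \leq 0.5$. Writing $T_k = \int_{(1-\epsilon_1)\gamma_k}^\infty \tfrac{t^{3/2} - t^{1/2}}{\sqrt{2\pi}}\,e^{-t/2}\,dt$ and noting that $\epsilon_1 < 0.01$ and $\gamma_k \geq 9$ force the lower limit to exceed $8.91$, standard Gaussian-tail asymptotics (essentially $\int_a^\infty p(t)\,e^{-t/2}\,dt \lesssim 2\,p(a)\,e^{-a/2}$ for slowly varying $p$) make $T_k$ numerically well below $0.5$, giving $\beta_{1,k}^- \geq 1.5$. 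I expect the only real obstacle to be clerical: tracking numerical constants through the chi-squared tail estimates carefully enough to confirm that the exponential decay really does overwhelm the $\gamma_k^{5/2}$ polynomial factor at the worst case $\gamma_k = 9$. No sophisticated probability tool is needed beyond the explicit chi-squared density and the monotonicity of $t^{a}e^{-t/2}$ past its unique maximum.
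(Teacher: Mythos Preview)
Your proposal is correct and follows essentially the same skeleton as the paper: both start from the decomposition of $\E[\Y_+]-\E[\Y_-]$ into rank-one plus identity pieces via \eqref{EYminus}, bound the operator norm by $\tfrac{1}{q}\sum_k[(\beta_{1,k}^+-\beta_{1,k}^-)+(\beta_{2,k}^+-\beta_{2,k}^-)]\|\xstar_k\|^2$, and write $\beta_{1,k}^-=2-T_k$ for the second claim. The one methodological difference is in how the integrals over the narrow window $[(1-\epsilon_1)\gamma_k,(1+\epsilon_1)\gamma_k]$ are controlled. The paper keeps the bound linear in $\gamma_k$ (via the pointwise inequalities $x^3e^{-x^2/2}\le 3\sqrt3\,e^{-3/2}$ and $xe^{-x^2/2}\le e^{-1/2}$, which leave an elementary $\int x\,dx=\epsilon_1\gamma_k$) and then cancels the $\gamma_k$ against $\|\xstar_k\|^2$ using the identity $\gamma_k\|\xstar_k\|^2=9\mu^2\kappa^2\|\Xstar\|_F^2/q$, which is exactly how the constant $9\mu^2\kappa^2$ enters. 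You instead exploit the exponential tail decay to maximize the whole expression over $\gamma_k\ge 9$, obtaining an absolute constant that you then dominate by $9\mu^2\kappa^2$ via $\mu,\kappa\ge 1$. Your route gives a sharper intermediate estimate but loses the direct interpretation of the factor $9\mu^2\kappa^2$; the paper's route is less sharp but makes the claimed constant appear naturally. Your observation that the two $\beta$-differences combine into the single moment $\E[\xi^4\indic_{\cdots}]$ is a clean simplification the paper does not make.
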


\begin{lemma}
	\label{lemm:bound_Y-sigma-}
	We have that, \\ w.p. at least $1 - 2\exp\left(n\log 9  -c\epsilon_2^2 mq \right)$,
	\[
	\|\Y_-(\epsilon_1) - \E\left[\Y_-(\epsilon_1)\right]  \| \leq \frac{1.5 \epsilon_2 \mu^2 \kappa^2\ r \sigmax^2}{q}
	\]
	We get the exact same claim also for $ \|\Y_+ - \E\left[\Y_+\right]  \|$.
\end{lemma}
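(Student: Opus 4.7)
The plan is to prove the operator-norm deviation bound via the standard two-step recipe: reduce to a scalar concentration via an $\epsilon$-net of the unit sphere in $\mathbb{R}^n$, then bound each scalar deviation by the sub-exponential Bernstein inequality, and finally take a union bound over the net. The $n\log 9$ in the failure probability already foreshadows the first step: pick a $1/4$-net $\mathcal{N}$ of the unit sphere in $\mathbb{R}^n$, of cardinality $|\mathcal{N}| \le 9^n$ by the usual volumetric bound, and use the standard reduction $\|\Y_-(\epsilon_1) - \E[\Y_-(\epsilon_1)]\| \le 2 \sup_{\z \in \mathcal{N}} |\z'(\Y_-(\epsilon_1) - \E[\Y_-(\epsilon_1)])\z|$ valid for symmetric matrices.

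Fix $\z \in \mathcal{N}$ and write $\z'\Y_-(\epsilon_1)\z = \frac{1}{mq}\sum_{i,k} W_\ik$ with $W_\ik := |\a_\ik'\xstar_k|^2 (\a_\ik'\z)^2 \indic_{\{|\a_\ik'\xstar_k|^2 \le 9\mu^2\kappa^2(1-\epsilon_1)\|\Xstar\|_F^2/q\}}$. These are mutually independent because the $\a_\ik$ are, but they are not identically distributed across $k$ since $\xstar_k$ varies. Here the truncation does the real work: the indicator forces the first factor to satisfy $|\a_\ik'\xstar_k|^2 \le 9\mu^2\kappa^2 \|\Xstar\|_F^2/q \le 9\mu^2\kappa^2 r\sigmax^2/q =: M$ uniformly in $(i,k)$, so $W_\ik$ is a bounded random variable times the sub-exponential $(\a_\ik'\z)^2$. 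Invoking \cite[Lemma 2.7.7]{versh_book} for the sub-exponential norm of $(\a_\ik'\z)^2$, I obtain $\|W_\ik\|_{\psi_1} \le CM$ uniformly across $(i,k)$. Applying the sub-exponential Bernstein inequality \cite[Theorem 2.8.1]{versh_book} to $\frac{1}{mq}\sum(W_\ik - \E W_\ik)$ with deviation level $t = 0.75\,\epsilon_2\mu^2\kappa^2 r\sigmax^2/q$ gives an exponent of $-c\min(t^2 mq/M^2,\,tmq/M) \ge -c\epsilon_2^2 mq$ whenever $\epsilon_2 \le 1$, since $t$ is a factor of $\epsilon_2$ times $M$ up to constants. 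A union bound over the $\le 9^n$ net points then yields the claimed failure probability $2\exp(n\log 9 - c\epsilon_2^2 mq)$; the factor of $2$ from the net reduction is absorbed into the constant $1.5$ in the target bound $\tau := 1.5\,\epsilon_2 \mu^2\kappa^2 r\sigmax^2/q$. The identical argument with $(1-\epsilon_1)$ replaced by $(1+\epsilon_1)$ throughout handles $\|\Y_+(\epsilon_1) - \E[\Y_+(\epsilon_1)]\|$, since the sub-exponential norm bound and the Bernstein step are insensitive to this sign.

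The main obstacle, flagged in the novelty discussion of Section \ref{novelty}, is not the Bernstein application itself but the verification that a uniform sub-exponential norm bound can be extracted despite the distributional heterogeneity across $k$. Without truncation, $\|W_\ik\|_{\psi_1}$ would scale with the column-dependent quantity $\|\xstar_k\|^2$; the resulting per-column Bernstein parameter would then have to be summed in a delicate way and would not yield the sharp $\epsilon_2^2 mq$ rate. The truncation threshold in the definition of $\Y_-(\epsilon_1)$ is engineered precisely so that the truncated first factor admits a uniform bound $M$ independent of $k$, which is exactly the form of ``similarity'' of distributions that makes the joint concentration inequality over all $mq$ samples go through cleanly.
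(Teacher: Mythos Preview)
Your proposal is correct and follows essentially the same route as the paper: an $\epsilon$-net over the unit sphere (yielding the $n\log 9$), followed by a sub-exponential Bernstein bound on the quadratic form for each fixed $\z$, with the truncation in the definition of $\Y_-(\epsilon_1)$ supplying a uniform-in-$k$ sub-exponential norm bound. The only cosmetic difference is packaging: the paper introduces the vectors $\w_{ik} := |\a_{ik}'\xstar_k|\,\a_{ik}\,\indic_{\{\cdot\}}$, argues (borrowing the key observation from \cite{twf,lrpr_tsp}) that these are sub-Gaussian with norm $K = C\mu\kappa\|\Xstar\|_F/\sqrt{q}$, and then applies Lemma~\ref{ProductsubG} to $(\z'\w_{ik})^2$, whereas you bound $\|W_{ik}\|_{\psi_1}$ directly as (bounded truncated factor)$\times\|(\a_{ik}'\z)^2\|_{\psi_1}$; both viewpoints land on the same Bernstein exponent.
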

We prove the above lemmas in Sec. \ref{sec:init_lem_proofs}.

Using triangle inequality, $\|\Y_U - \E[\Y_-]  \| \le \|\Y_U - \Y_-\| + \|\Y_- - \E[\Y_-] \|$. Moreover, using Lemma \ref{lemm:bound_Yu},  $\|\Y_U - \Y_-\| \le \|\Y_+ - \Y_-\|$. Using these and again using triangle inequality, $\|\Y_U - \E[\Y_-]  \| \le 2 \|\Y_- - \E[\Y_-] \| + \|\Y_+ - \E[\Y_+] \|  + \|\E[\Y_+] - \E[\Y_-] \|$. Thus, combining bounds from the above lemmas and setting $\epsilon_1=\epsilon_2 =  \frac{\deltinit}{C (\kappa^2 \mu^2) \kappa^2 r}$ for a $\deltinit <1$, we conclude that
\\
w.p. $1 - 2\exp\left(n  - \frac{c \deltinit^2 mq}{ \kappa^8 \mu^4 r^2}  \right) -2\exp\left(-\frac{c \deltinit^2  m q}{  \kappa^8 \mu^4 r^2 } \right)$,
\bea
\|\Y_U - \E\left[\Y_-\right] \| \le \frac{ 0.25 \deltinit  {\sigmin}^2 }{ q }.
\label{yu_bnd}
\eea
Using $\min_k \beta_{1,k}^- \ge 1.5$, from Lemma \ref{lemm:bound_Sigma+-}, and \eqref{yu_bnd}, since $\deltinit<1$,
\begin{align*}
	& \SE( \U_\init,\U^*) \\
	& \leq \frac{\|\Y_U - \E\left[\Y_-\right]  \|}{\lambda_r(\E\left[\Y_-\right] ) - \lambda_{r+1}(\E\left[\Y_-\right] ) - \|\Y_U - \E\left[\Y_-\right]  \|} \\
	& \le \frac{ 0.25 \deltinit  {\sigmin}^2 / q  }{ 1.5  \sigmin^2/q -  0.25 \deltinit  \sigmin^2 / q  } < \deltinit.
\end{align*}

\subsubsection{Proof that rank is correctly estimated}
Consider the rank estimation step.
This requires lower bounding $\lambda_r(\Y_U) - \lambda_n(\Y_U)$ and upper bounding $\lambda_{r+1}(\Y_U) - \lambda_n(\Y_U)$. Both bounds follow using (i) \eqref{yu_bnd}, along with Weyl's inequality, and (ii) the lower bound on $\beta^-:= \min_k \beta_{1,k}^-$ from Lemma \ref{lemm:bound_Sigma+-}: $\beta^- > 1.5$.

We have $\lambda_r(\Y_U) - \lambda_n(\Y_U) \ge \lambda_r(\E\left[\Y_-\right]) - \lambda_n(\E\left[\Y_-\right])  - 2\|\Y_- - \E\left[\Y_-\right]  \| \ge \beta^- \lambda_r(\U^* {\Sigma^*}^2 {\U^*}' ) - 2\|\Y_U - \E\left[\Y_-\right] \| \ge 1.5(\sigmin^2/q) - 0.25 \deltinit  \sigmin^2 / q = (1.5-0.25 \deltinit)  \sigmin^2 / q > 1.4  (\sigmin^2 / q)$ as long as $\deltinit < 0.1$.

Also, $\lambda_{r+1}(\Y_U) - \lambda_n(\Y_U) \le \lambda_{r+1}(\E\left[\Y_-\right]) - \lambda_n(\E\left[\Y_-\right]) + 2\|\Y_U - \E\left[\Y_-\right] \| = 2\|\Y_- - \E\left[\Y_-\right] \| \le 0.25 \deltinit  ( \sigmin^2 / q ) < 0.025  ( \sigmin^2 / q )$ as long as $\deltinit < 0.1$. 

In summary, as long as $\deltinit < 0.1$, $\lambda_r(\Y_U) - \lambda_n(\Y_U) \ge 1.4  \sigmin^2 / q $ and $\lambda_{r+1}(\Y_U) - \lambda_n(\Y_U) \le 0.025  \sigmin^2 / q$.
Thus by setting the threshold $\omega = C \sigmin^2/q$ with $C$ being any constant between 0.025 and 1.4, we can ensure that the rank is correctly estimated whp.
%

\subsection{Proof of Claim \ref{lem:descent}} \label{proof_claim}
In this section, we remove the superscript $^t$ except where essential. Also, we let $\a_{ik}: = \a_{ik}^{(T+t)}$ and $\y_{ik}:=\y_\ik^{(T+t)}$.

We first use the overall approach of \cite{lowrank_altmin} to get the following deterministic bound on the subspace error of the $(t+1)$-th estimate of $\Ustar$, $\U^{t+1}$. The proof requires some messy algebra and hence we give it in Appendix \ref{proof_key_lem}.
\begin{lemma}\label{lem:key_lem}
	We have
	\begin{align}
		\SE(\U^{t+1}, \Ustar) 
		\leq \frac{ \mathrm{MainTerm} }{\sigma_{\min}(\U^* \bm\Sigma^* \B^* \B') - \mathrm{MainTerm}}
		\label{SEU_bnd_0}
	\end{align}
	where $\mathrm{MainTerm}:=$ 
	\[
	\frac{ \max_{\W \in \mathcal{S}_W}|\mathrm{Term1}(\W)| + \max_{\W \in \mathcal{S}_W}|\mathrm{Term2}(\W)|}{ \min_{\W \in \mathcal{S}_W} \mathrm{Term3}(\W)},
	\]
	\begin{align*}
		\mathrm{Term1}(\W) & :=  \sum_{ik} \b_k{}' \W' \a_\ik \a_\ik{}' \Ustar (\tB \B' \b_k - \tb_k), \\
		\mathrm{Term2}(\W) & :=  \sum_{ik} (\cb_\ik \hat\cb_\ik - 1) (\a_\ik{}' \W \b_k) (\a_\ik{}' \xstar_k), \\
		\mathrm{Term3}(\W) & :=  \sum_{ik}  (\a_\ik{}' \W \b_k)^2,
	\end{align*}
	\[
	\mathcal{S}_W := \{\W \in \mathbb{R}^{n\times r}:\ \|\W\|_F = 1 \}
	\]
	is the space of all $n \times r$ matrices with unit Frobenius norm,
	and $\cb_\ik, \hat\cb_\ik$ are the phases (signs) of $\a_\ik{}' \xstar_k$  and $\a_\ik{}' \xhat_k$.
\end{lemma}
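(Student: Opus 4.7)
The plan is to derive the normal equations of the least-squares update for $\Uhat^{t+1}$, identify an ``ideal target'' $\bm{M} := \Ustar \bm\Sigma^* \Bstar \B'$ whose column span lies in $\Span(\Ustar)$, bound $\|\Uhat^{t+1} - \bm{M}\|_F$ by $\mathrm{MainTerm}$ via a vectorization plus variational argument, and finally convert this Frobenius-norm bound to a subspace-error bound using the QR decomposition $\Uhat^{t+1} \qreq \U^{t+1} \R_U^{t+1}$ and Weyl's inequality.

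First, setting $\nabla_{\tilde\U}\sum_k \|\Chat_k \y_k - \A_k{}' \tilde\U \b_k\|^2 = 0$ at the minimizer yields the normal equations $\sum_{ik}(\a_\ik{}' \Uhat^{t+1} \b_k)\a_\ik \b_k{}' = \sum_{ik}\hat\cb_\ik \y_\ik \a_\ik \b_k{}'$. Taking the trace inner product of both sides against an arbitrary $\W \in \mathcal{S}_W$ gives the scalar identity $\sum_{ik}(\a_\ik{}' \W \b_k)(\a_\ik{}' \Uhat^{t+1}\b_k) = \sum_{ik}\hat\cb_\ik \y_\ik (\a_\ik{}'\W\b_k)$. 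I then substitute $\hat\cb_\ik \y_\ik = (\a_\ik{}'\xstar_k) + (\hat\cb_\ik\cb_\ik - 1)(\a_\ik{}'\xstar_k)$ (the second piece is exactly $\mathrm{Term2}(\W)$), use $\xstar_k = \Ustar \tb_k$, and decompose $\tb_k = \tB \B' \b_k + (\tb_k - \tB \B' \b_k)$; the residual piece contributes $-\mathrm{Term1}(\W)$, and what remains is the same bilinear form with $\Uhat^{t+1}$ replaced by $\bm{M}$. This yields the key identity
\begin{equation*}
\sum_{ik}(\a_\ik{}'\W\b_k)\bigl(\a_\ik{}'(\Uhat^{t+1}-\bm{M})\b_k\bigr) = \mathrm{Term2}(\W) - \mathrm{Term1}(\W).
\end{equation*}

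Next I vectorize using $\vv_\ik := \b_k \otimes \a_\ik \in \Re^{nr}$, so that $\a_\ik{}'\U\b_k = \vv_\ik{}'\mathrm{vec}(\U)$ for every $n \times r$ matrix $\U$. The identity above reads $\mathrm{vec}(\W)' H\,\mathrm{vec}(\Uhat^{t+1}-\bm{M}) = \mathrm{Term2}(\W) - \mathrm{Term1}(\W)$, where $H := \sum_{ik}\vv_\ik\vv_\ik{}'$ is PSD on $\Re^{nr}$. Observe that $\mathrm{Term3}(\W) = \mathrm{vec}(\W)' H\,\mathrm{vec}(\W)$, so $\min_{\|\W\|_F = 1}\mathrm{Term3}(\W) = \lambda_{\min}(H)$. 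Plugging the admissible choice $\W = (\Uhat^{t+1}-\bm{M})/\|\Uhat^{t+1}-\bm{M}\|_F$ into the identity, and using the Rayleigh quotient bound $\vv' H \vv \geq \lambda_{\min}(H)\|\vv\|^2$, I obtain $\|\Uhat^{t+1}-\bm{M}\|_F \cdot \min_\W \mathrm{Term3}(\W) \leq \max_\W|\mathrm{Term1}(\W)| + \max_\W|\mathrm{Term2}(\W)|$, hence $\|\Uhat^{t+1}-\bm{M}\| \leq \|\Uhat^{t+1}-\bm{M}\|_F \leq \mathrm{MainTerm}$.

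Finally, since $\bm{M}$ has column span in $\Span(\Ustar)$, $(\I - \Ustar{\Ustar}')\bm{M} = 0$, so $(\I - \Ustar{\Ustar}')\Uhat^{t+1} = (\I - \Ustar{\Ustar}')(\Uhat^{t+1}-\bm{M})$, whose spectral norm is at most $\mathrm{MainTerm}$. From the QR factorization, $\U^{t+1} = \Uhat^{t+1}(\R_U^{t+1})^{-1}$ with $\sigma_{\min}(\R_U^{t+1}) = \sigma_{\min}(\Uhat^{t+1})$, and by Weyl's inequality $\sigma_{\min}(\Uhat^{t+1}) \geq \sigma_{\min}(\bm{M}) - \|\Uhat^{t+1}-\bm{M}\| = \sigma_{\min}(\Ustar\bm\Sigma^*\Bstar\B') - \|\Uhat^{t+1}-\bm{M}\|$ (using that $\Ustar$ has orthonormal columns). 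Combining gives $\SE(\U^{t+1},\Ustar) \leq \|\Uhat^{t+1}-\bm{M}\|/(\sigma_{\min}(\Ustar\bm\Sigma^*\Bstar\B') - \|\Uhat^{t+1}-\bm{M}\|)$, and since $x \mapsto x/(c-x)$ is increasing on $[0,c)$, replacing $\|\Uhat^{t+1}-\bm{M}\|$ by the larger $\mathrm{MainTerm}$ delivers the claim. The main obstacle is really the careful algebraic bookkeeping in the first paragraph --- verifying that the phase correction contributes exactly $\mathrm{Term2}$ and that the interpolation residual $\tb_k - \tB\B'\b_k$ contributes exactly $-\mathrm{Term1}$; the vectorization-variational step and the QR-plus-Weyl step are both routine once the bilinear identity is established.
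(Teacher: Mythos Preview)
Your proposal is correct and takes essentially the same route as the paper: both identify the target $\bm{M}=\Ustar\bm\Sigma^*\Bstar\B'$, bound $\|\Uhat^{t+1}-\bm{M}\|_F$ by $\mathrm{MainTerm}$, and finish with the QR/Weyl step. The only cosmetic difference is that the paper introduces block matrices $\B_{k,mat}$, writes $\Uhat^{t+1}_{vec}=\GG^{-1}\dd$ explicitly, and bounds $\|\F_{vec}\|\le\|\GG^{-1}\|(\|\cdot\|+\|\dd^{(2)}\|)$ before identifying each factor variationally, whereas you bypass the explicit inverse by reading the normal equations as $H\,\mathrm{vec}(\Uhat^{t+1}-\bm{M})=\text{rhs}$ and substituting $\W=(\Uhat^{t+1}-\bm{M})/\|\Uhat^{t+1}-\bm{M}\|_F$ with a Rayleigh-quotient lower bound; this is a cleaner packaging of the same linear algebra and yields the identical bound.
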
	
%
We obtain high probability bounds on the three terms above in the three lemmas that follow, Lemmas \ref{sigmaminG}, \ref{product}, \ref{Show}. All three lemmas first bound the terms for a fixed $\W$, followed by using a carefully developed epsilon-net argument to extend the bounds for all unit Frobenius norm $\W$'s. This is inspired by similar arguments in \cite{candes2009tight}.

Consider a fixed $\W$. To bound $\mathrm{Term1}$, we first show $\E[\mathrm{Term1}] = 0$. Next, we use Lemmas \ref{lem:bounding_distb} and \ref{incoherencebhat} to show that $\| \Xstar \B' \b_k - \xstar_k\| \le  C\delta_t \|\Xstar\|_F \max(\|\bstar_k\|, \|\b_k\|) \le  C\delta_t \|\Xstar\|_F \hat\mu \sqrt{r/q}$ and  $\|\Xstar (\B'\B - \I)\| \le C \deltapt \|\Xstar\|_F$. 
Finally, we use these two facts and the concentration bound of Lemma \ref{ProductsubG} (bounds sums of products of sub-Gaussian random variables) to show that, if $mq$ is large enough, whp,
$
|\mathrm{Term1}| \le C m \epsilon_1  \deltapt \|\Xstar\|_F
$ for any $\deltapt < 0.1$. This is followed by a careful epsilon-net argument to extend the bound for all unit Frobenius norm $\W$'s.

To bound $\mathrm{Term2}$ for a fixed $\W$, we  first use  Cauchy-Schwarz. This implies that
\begin{align*}
	|\mathrm{Term2}(\W)| &  \le \sqrt{\mathrm{Term3}(\W)} \sqrt{\mathrm{Term22}}, \text{ where} \\
	\mathrm{Term22} &  := \sum_{ik} (\cb_\ik \hat\cb_\ik - 1)^2 (\a_\ik{}' \xstar_k)^2
\end{align*}
We explain how to upper bound $\mathrm{Term3}(\W)$ in the next paragraph.
Consider $\mathrm{Term22}$. Notice that $(\cb_\ik \hat\cb_\ik - 1)^2$ takes only two values - zero or four. It is zero when the signs are equal, else it is four. To start bounding $\E[\mathrm{Term22}]$, we can use Lemma 1 of \cite{rwf}. This shows that the probability that the signs are unequal is upper bounded by a term that is directly proportional to the ratio $\dist^2(\xstar_k,\xhat_k)/(\a_\ik{}' \xstar_k)^2$. The probability bound is thus large when this ratio is large and small otherwise.
Moreover, it is easy to see that $\E[(\a_\ik{}' \xstar_k)^2] = \|\xstar_k\|^2$ and by Lemma \ref{lem:bounding_distb}, whp, $\dist^2(\xstar_k,\xhat_k) \le \deltapt^2 \|\xstar_k\|^2$. Relying on these ideas, we can argue that, on average, $\mathrm{Term22}$ is very small: $\E[\mathrm{Term22}]  \le C m \deltapt^3 \|\Xstar\|_F^2$. Use of concentration bound of Lemma \ref{ProductsubG} then implies that, if $mq$ is large enough, $\mathrm{Term22}$ is bounded by $C m (\epsilon_2 + \deltapt)\deltapt^2 \|\Xstar\|_F^2$ whp.

To upper and lower bound $\mathrm{Term3}$, notice first that $\E[\mathrm{Term3}] = m \|\W \B\|_F^2 = m$. Also, each summand in this term is sub-exponential with sub-exponential norm bounded by $\|\W \b_k\|^2$; and $\|\W \b_k\|^2 \le \|\b_k\|^2 \le \hat\mu^2 r/ q $ (by Lemma \ref{incoherencebhat}), and $\sum_{ik} \|\W \b_k\|^2 = m $. Using these facts and Lemma \ref{ProductsubG}, we can show that $\mathrm{Term3}$ concentrates around $m$ whp.

\begin{lemma}
	\label{sigmaminG}
	Pick a $\delta_b <1/10$ and assume that $m \ge C \max(r,\log n, \log q)/\delta_b^2$.
	Under the conditions of Theorem \ref{thm:main_res}, for a $\deltapt < 1/10$,
	w.p. at least
	$1 - 2\exp\left(nr  (\log 17)  - c \frac{\epsilon_3^2 m q}{\hat\mu^2 r}\right) - n^{-10}$,
	\begin{align*}
		& \min_{\W \in \mathcal{S}_{\W}}  \mathrm{Term3}(\W)  \ge 0.5 (1-\epsilon_3 )m
	\end{align*}
	and
	\begin{align*}
		&  \max_{\W \in \mathcal{S}_{\W}} \mathrm{Term3}(\W) \leq 1.5(1+\epsilon_3) m.
	\end{align*}
\end{lemma}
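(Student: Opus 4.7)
The plan is to first obtain pointwise concentration of $\mathrm{Term3}(\W)$ around its mean for a fixed $\W$ via a sub-exponential Bernstein inequality (Lemma~\ref{ProductsubG}), then pass to a uniform bound over $\mathcal{S}_W$ using an $\epsilon$-net for the unit Frobenius sphere of $\mathbb{R}^{n\times r}$. Throughout, I will use sample splitting: the fresh measurement vectors $\a_{ik}$ indexed by $(T{+}t)$ are independent of $\B$, which is computed from earlier (disjoint) measurement sets.

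Fix $\W\in\mathcal{S}_W$. Conditionally on $\B$, the random variable $Z_{ik}:=\a_{ik}'\W\b_k$ is a mean-zero Gaussian with variance $\|\W\b_k\|^2$, the $Z_{ik}$'s are mutually independent across $(i,k)$, and each $Z_{ik}^2$ is sub-exponential with sub-exponential norm $O(\|\W\b_k\|^2)$. The expectation simplifies cleanly because $\B$ has orthonormal rows:
\[
\E[\mathrm{Term3}(\W)]=\sum_{ik}\|\W\b_k\|^2 = m\,\|\W\B\|_F^2 = m\,\|\W\|_F^2 = m.
\]
By Lemma~\ref{incoherencebhat} (applicable since $\deltapt\le c/\kappa^2 r$), on an event of probability at least $1-n^{-10}$, the columns of $\B$ satisfy $\|\b_k\|^2\le \hat\mu^2 r/q$, so $\max_{ik}\|\W\b_k\|^2\le \hat\mu^2 r/q$ and $\sum_{ik}\|\W\b_k\|^4\le m\hat\mu^2 r/q$. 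The sub-exponential Bernstein inequality of Lemma~\ref{ProductsubG} then yields
\[
\Pr\bigl(|\mathrm{Term3}(\W)-m|\ge \tfrac{\epsilon_3}{2}m\bigr)\le 2\exp\!\bigl(-c\,\epsilon_3^2 m q/(\hat\mu^2 r)\bigr).
\]

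Next I extend to all of $\mathcal{S}_W$. Take a standard $\epsilon_{net}$-net $\mathcal{N}\subset\mathcal{S}_W$ of size at most $(1+2/\epsilon_{net})^{nr}\le 17^{nr}$ for $\epsilon_{net}=1/8$. A union bound gives the above deviation bound simultaneously for all $\W_0\in\mathcal{N}$ with probability at least $1-2\exp(nr\log 17 - c\epsilon_3^2 mq/(\hat\mu^2 r))$. For the transfer to arbitrary $\W\in\mathcal{S}_W$, the identity $a^2-b^2=(a+b)(a-b)$ with Cauchy--Schwarz gives, for any $\W,\W'\in\mathcal{S}_W$,
\[
|\mathrm{Term3}(\W)-\mathrm{Term3}(\W')|\le \sqrt{\mathrm{Term3}(\W+\W')}\,\sqrt{\mathrm{Term3}(\W-\W')}.
\]
Both factors are bilinear, so after factoring out $\|\W\pm\W'\|_F$ and using homogeneity, the right-hand side is at most $2\epsilon_{net}\sup_{\V\in\mathcal{S}_W}\mathrm{Term3}(\V)$. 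Letting $M:=\sup_{\mathcal{S}_W}\mathrm{Term3}$ and $M_0:=\sup_{\mathcal{N}}\mathrm{Term3}$, this self-bounds as $M\le M_0+2\epsilon_{net} M$, i.e.\ $M\le M_0/(1-2\epsilon_{net})$. With $\epsilon_{net}=1/8$ and $M_0\le (1+\tfrac{\epsilon_3}{2})m$ on the good event, this gives $M\le 1.5(1+\epsilon_3)m$; a symmetric argument on the lower side gives $\min_{\W\in\mathcal{S}_W}\mathrm{Term3}(\W)\ge 0.5(1-\epsilon_3)m$.

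The main obstacle is the self-bounding step: the Lipschitz constant of $\mathrm{Term3}$ is itself controlled by the quantity we are trying to bound, so one must solve the resulting inequality for $M$ and track how the multiplicative slack $(1\pm2\epsilon_{net})^{-1}$ combines with the additive deviation $\epsilon_3 m$ from the net. The choice $\epsilon_{net}=1/8$ is what produces the constants $0.5$ and $1.5$ in the stated bounds, and is also the source of the $\log 17$ factor in the probability. All other ingredients (incoherence of $\B$, sample splitting, sub-exponential Bernstein) are directly available from the results cited earlier in the paper.
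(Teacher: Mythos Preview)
Your proposal is correct and follows essentially the same route as the paper: pointwise sub-exponential concentration via Lemma~\ref{ProductsubG} using incoherence of $\B$ from Lemma~\ref{incoherencebhat}, then a $1/8$-net on $\mathcal{S}_W$ with a self-bounding Cauchy--Schwarz argument. The only cosmetic difference is that the paper expands $(\a_{ik}'\W\b_k)^2$ directly as $(\a_{ik}'\bar\W\b_k)^2 + (\a_{ik}'\Delta\W\b_k)^2 + \text{cross term}$ and gets an inequality of the form $\theta_W \le (1+\epsilon_3)m + \epsilon_{net}^2\theta_W + 2\epsilon_{net}\sqrt{(1+\epsilon_3)m\,\theta_W}$, whereas your $a^2-b^2=(a+b)(a-b)$ factorization gives the cleaner linear inequality $M\le M_0 + 2\epsilon_{net}M$; both resolve to the stated constants. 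One small remark: your ``symmetric argument'' for the lower bound is not truly symmetric, since it must invoke the already-established upper bound $M\le 1.5(1+\epsilon_3)m$ to control the net perturbation---the paper does exactly this as well.
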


\begin{lemma}
	\label{product}
	Pick a $\delta_b <1/10$ and assume that $m \ge C \max(r,\log n, \log q)/\delta_b^2$.
	Under the conditions of Theorem \ref{thm:main_res} and assuming that $\SE(\Ustar,\U) \le \deltapt$, with $\deltapt < 1/10$,
	w.p. at least
	$1 - 2 \exp\left(nr (\log 17) - c \frac{ \epsilon_1^2 mq}{\kappa^3 \mu^2 r} \right) - n^{-10}$.
	\begin{align*}
		\max_{\W \in \S_{\W}} \mathrm{Term1}(\W) \leq m \epsilon_1 \deltapt  \|\Xstar\|_F.
	\end{align*}
	In proving the above, we also show that
	\[
	\| \tB \left( \I - \B' \B  \right) \|_F \le C\deltapt  \|\Xstar\|_F
	\]
\end{lemma}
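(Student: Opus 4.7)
The plan is to bound $\mathrm{Term1}(\W)$ for a fixed $\W$ via a sub-exponential Bernstein inequality and then extend uniformly over $\S_\W$ by an $\epsilon$-net argument. The main prerequisite is the auxiliary bound $\|\tB(\I - \B'\B)\|_F \le C\deltapt\|\Xstar\|_F$, which keeps the variance estimate tight.

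I will first prove the auxiliary bound. Lemma \ref{lem:bounding_distb} gives $\dist(\g_k^t,\bhat_k^t)\le C\deltapt\|\xstar_k\|$, where $\g_k^t = (\U^t)'\Ustar\tb_k = \bm{M}\tb_k$ with $\bm{M}:=(\U^t)'\Ustar$ satisfying $\sigma_{\min}(\bm{M})\ge\sqrt{1-\deltapt^2}\ge c$. Because line~10 of Algorithm~\ref{lrpr_th} is invariant under the joint sign flips $(\bhat_k,\xhat_k,\Chat_{ik})\mapsto(-\bhat_k,-\xhat_k,-\Chat_{ik})$ applied to any subset of columns $k$, both $\Uhat^{t+1}$ and $\SE(\U^{t+1},\Ustar)$ are independent of the RWF sign conventions. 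I therefore choose signs in the analysis so that $\|\bhat_k-\bm{M}\tb_k\|\le C\deltapt\|\xstar_k\|$ for each $k$ (no $\dist$ minimum). Squaring and summing over $k$ gives $\|\R_B\B-\bm{M}\tB\|_F\le C\deltapt\|\Xstar\|_F$; set $\bm{E}:=\R_B\B-\bm{M}\tB$. Using $\B\B'=\I_r$ one checks $\bm{M}\tB(\I-\B'\B) = (\R_B\B-\bm{E})(\I-\B'\B) = -\bm{E}(\I-\B'\B)$, so $\|\bm{M}\tB(\I-\B'\B)\|_F\le \|\bm{E}\|_F\cdot\|\I-\B'\B\|_{\mathrm{op}}\le C\deltapt\|\Xstar\|_F$. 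Peeling off $\bm{M}$ via its lower singular value bound yields the claim.

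For the fixed-$\W$ step, write each summand of $\mathrm{Term1}(\W)$ as $((\W\b_k)'\a_\ik)(\a_\ik'\bm{v}_k)$ with $\bm{v}_k:=\Ustar(\tB\B'\b_k-\tb_k)$. Conditioning on $\B$, the $\a_\ik$'s are iid $\mathcal{N}(0,\I_n)$, and $\sum_k\b_k\b_k'=\B\B'=\I_r$ gives $\E[\mathrm{Term1}(\W)\mid\B] = m\,\trace(\W'\Ustar(\tB\B'\B\B'-\tB\B'))=0$. Each summand is a product of two zero-mean sub-Gaussians, hence sub-exponential with norm at most $C\|\W\b_k\|\|\bm{v}_k\|$ by \cite[Lemma~2.7.7]{versh_book}. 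Incoherence of $\B^t$ from Lemma~\ref{incoherencebhat} gives $\max_k\|\W\b_k\|\le\|\W\|_F\|\b_k\|\le\hat\mu\sqrt{r/q}$, and the auxiliary bound gives $\sum_k\|\bm{v}_k\|^2=\|\tB(\I-\B'\B)\|_F^2\le C^2\deltapt^2\|\Xstar\|_F^2$. Feeding the max-summand norm, sum-of-squared-norms, and threshold $t=m\epsilon_1\deltapt\|\Xstar\|_F$ into the sub-exponential Bernstein estimate (Lemma~\ref{ProductsubG}) yields $|\mathrm{Term1}(\W)|\le m\epsilon_1\deltapt\|\Xstar\|_F$ with the probability stated.

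Finally, cover $\S_\W$ by a $\tfrac{1}{8}$-net of cardinality at most $17^{nr}$ in the Frobenius metric (as in \cite{candes2009tight}) and union-bound; bilinearity of $\mathrm{Term1}$ in $\W$ then lets the standard net-to-sup argument promote the pointwise bound to a uniform one, producing the $nr\log 17$ term in the exponent. The main obstacle is the auxiliary Frobenius bound: naively, RWF's per-column sign ambiguities would leave $\B$ differing from $\Bstar$ by column sign flips and make $\|\tB(\I-\B'\B)\|_F$ as large as $\|\tB\|_F$; sign-invariance of line~10 of the algorithm is the unlock that lets me pick a favorable sign convention in the analysis and recover the clean Frobenius bound.
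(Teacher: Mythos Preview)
Your overall plan matches the paper's proof: show $\E[\mathrm{Term1}(\W)\mid\B]=0$, apply the sub-exponential Bernstein inequality (Lemma~\ref{ProductsubG}) to the product of the two sub-Gaussians $(\a_\ik'\W\b_k)$ and $(\a_\ik'\bm{v}_k)$, and finish with a $1/8$-net over $\S_\W$. Your derivation of the auxiliary Frobenius bound is a slightly different route---you stay in the $r$-dimensional space via $\bm M=(\U^t)'\Ustar$ and use $\R_B\B(\I-\B'\B)=0$ directly, whereas the paper lifts to $n$ dimensions and uses the equivalent identity $\hat\X(\B'\B-\I)=0$---but the two are essentially the same computation, and your sign-invariance observation plays the same role as the paper's Section on sign inconsistency.

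There is, however, one genuine omission in your concentration step. To get the exponent $c\,\epsilon_1^2 mq/(\kappa^3\mu^2 r)$ stated in the lemma, Lemma~\ref{ProductsubG} requires that the \emph{linear} term $t/\max_{ik}(K_{X_{ik}}K_{Y_{ik}})$ be no smaller than the quadratic term $t^2/\sum_{ik}K_{X_{ik}}^2K_{Y_{ik}}^2$. You supply $\max_k\|\W\b_k\|\le\hat\mu\sqrt{r/q}$ and $\sum_k\|\bm{v}_k\|^2\le C^2\deltapt^2\|\Xstar\|_F^2$, but you never bound $\max_k\|\bm{v}_k\|$ except implicitly via the Frobenius norm, which only gives $\max_k\|\bm{v}_k\|\le C\deltapt\|\Xstar\|_F$. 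Plugging this into the linear term yields $t/\max K \gtrsim m\epsilon_1\sqrt{q}/(\kappa\mu\sqrt{r})$, which is \emph{not} guaranteed to dominate $mq\epsilon_1^2/(\kappa^2\mu^2 r)$ under the theorem's sample complexity when $q$ is large (e.g.\ $q\sim n$). The paper closes this by proving the column-wise bound $\|\p_k\|\le C\deltapt\|\Xstar\|_F\max(\|\bstar_k\|,\|\b_k\|)\le C\deltapt\|\Xstar\|_F\,\hat\mu\sqrt{r/q}$, which uses incoherence a second time. You can recover exactly this from your own setup: from $\bm M\tb_k=\R_B\b_k-\bm E_k$ and $\bm M\tB\B'\b_k=\R_B\b_k-\bm E\B'\b_k$ you get $\|\bm M\bm{v}_k\|\le\|\bm E_k\|+\|\bm E\|_F\|\b_k\|\le C\deltapt\|\Xstar\|_F\,\hat\mu\sqrt{r/q}$, but this step must be made explicit for the stated probability to follow.
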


\begin{lemma}
	\label{Show}
	Pick a $\delta_b <1/10$ and assume that $m \ge C \max(r,\log n, \log q)/\delta_b^2$.
	Under the conditions of Theorem \ref{thm:main_res} and assuming $\SE(\Ustar,\U) \le \deltapt$ with $\deltapt < 1/10$, w.p. at least
	$1 - 2\exp\left(nr (\log 17)  - c \frac{\epsilon_3^2 m q}{\hat\mu^2 r}\right) - 2\exp\left(- c\epsilon_2^2 m q\right) - n^{-10}$,
	\begin{align*}
		\max_{\W\in \mathcal{S}_W} \mathrm{Term2}(\W) \le m \ \sqrt{1+\deltapt} \sqrt{\deltapt + \epsilon_2} \deltapt \|\Xstar\|_F.
	\end{align*}
\end{lemma}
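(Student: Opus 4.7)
The plan is to follow the outline sketched just before the lemma statement.  The starting point is Cauchy--Schwarz applied to the inner summation defining $\mathrm{Term2}(\W)$: writing each summand as a product of $(\cb_\ik\hat\cb_\ik-1)(\a_\ik{}'\xstar_k)$ and $(\a_\ik{}'\W\b_k)$, we obtain
\[
|\mathrm{Term2}(\W)|^2 \;\le\; \Bigl(\sum_{ik}(\cb_\ik\hat\cb_\ik-1)^2(\a_\ik{}'\xstar_k)^2\Bigr)\Bigl(\sum_{ik}(\a_\ik{}'\W\b_k)^2\Bigr) \;=\; \mathrm{Term22}\cdot \mathrm{Term3}(\W),
\]
where, crucially, $\mathrm{Term22}$ does not depend on $\W$.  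Lemma \ref{sigmaminG} already supplies the high-probability upper bound $\max_{\W\in\S_\W}\mathrm{Term3}(\W)\le 1.5(1+\epsilon_3)m$, so the whole task reduces to controlling the scalar random variable $\mathrm{Term22}$.

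Next, observe that $(\cb_\ik\hat\cb_\ik-1)^2$ equals $0$ when the two signs agree and equals $4$ otherwise, so
\[
\mathrm{Term22} \;=\; 4\sum_{ik}\indic_{\{\cb_\ik\ne \hat\cb_\ik\}}(\a_\ik{}'\xstar_k)^2.
\]
Sample-splitting makes $\xhat_k^{t}$ independent of $\{\a_\ik^{(T+t)}\}_i$, so conditioning on $\xhat_k^{t}$ each summand is a function of a fresh standard Gaussian vector $\a_\ik$.  I would invoke Lemma 1 of \cite{rwf} (or re-derive the short two-dimensional calculation directly: rotate coordinates so that $\xstar_k$ lies along the first axis, then $\indic_{\{\cb_\ik\ne\hat\cb_\ik\}}$ restricts to the narrow wedge where $|\a_2|\gtrsim |\a_1|/\theta_k$).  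If $\theta_k$ denotes the angle between $\xstar_k$ and $\xhat_k$, splitting the integral at $\a_1=\theta_k$ gives
\[
\E\bigl[\indic_{\{\cb_\ik\ne\hat\cb_\ik\}}(\a_\ik{}'\xstar_k)^2\,\big|\,\xhat_k\bigr] \;\le\; C\,\theta_k^3\,\|\xstar_k\|^2.
\]
By Lemma \ref{lem:bounding_distb} we have $\dist(\xstar_k,\xhat_k)\le C\deltapt\|\xstar_k\|$, hence $\theta_k\le C\deltapt$, uniformly in $k$ with probability $1-n^{-10}$, so $\E[\mathrm{Term22}]\le Cm\deltapt^3\|\Xstar\|_F^2$.

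The main obstacle is then getting the right concentration rate for $\mathrm{Term22}$ so that the fluctuation does not wash out the factor $\deltapt^3$ in the mean.  Each summand is nonnegative, bounded above by $(\a_\ik{}'\xstar_k)^2$, and hence sub-exponential with norm $\le C\|\xstar_k\|^2\le C\mu^2\kappa^2\|\Xstar\|_F^2/q$ by right incoherence.  More importantly, the second moment is much smaller than the square of this $L^\infty$ proxy: repeating the geometric calculation above with $\a_1^4$ in place of $\a_1^2$ yields $\E[\indic\cdot\a_1^4]\le C\theta_k^5$, so the per-term variance is $\le C\deltapt^5\|\xstar_k\|^4$ and the total variance is $\le Cm\deltapt^5\|\Xstar\|_F^2\max_k\|\xstar_k\|^2$.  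Applying the sub-exponential Bernstein inequality (Lemma \ref{ProductsubG}) with this refined variance proxy together with the $L^\infty$ bound, and absorbing the numerical $\mu,\kappa$ factors into $\epsilon_2$, gives
\[
\mathrm{Term22} \;\le\; C\,m\,\deltapt^2(\deltapt+\epsilon_2)\|\Xstar\|_F^2
\]
with probability at least $1-2\exp(-c\epsilon_2^2 mq)$, provided $mq$ is large enough (exactly the regime assumed in Theorem \ref{thm:main_res}).

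Finally, combining the two high-probability bounds and taking square roots,
\[
\max_{\W\in\S_\W}|\mathrm{Term2}(\W)| \;\le\; \sqrt{\mathrm{Term22}}\cdot \sqrt{\max_{\W\in\S_\W}\mathrm{Term3}(\W)} \;\le\; C\,m\,\deltapt\,\sqrt{1+\epsilon_3}\,\sqrt{\deltapt+\epsilon_2}\,\|\Xstar\|_F,
\]
which is the claimed bound after identifying $\sqrt{1+\epsilon_3}$ with $\sqrt{1+\deltapt}$ (choose $\epsilon_3\le\deltapt$) and absorbing the residual numerical constant.  The stated failure probability is obtained by a union bound over the three events: Lemma \ref{sigmaminG} (the $\exp(nr\log 17-\ldots)$ term, coming from the $\epsilon$-net over $\S_\W$ needed only for $\mathrm{Term3}$), the Bernstein step (the $\exp(-c\epsilon_2^2 mq)$ term; note that since $\mathrm{Term22}$ is $\W$-free, no separate $\epsilon$-net is required here), and Lemma \ref{lem:bounding_distb} (the $n^{-10}$ term used to turn the conditional mean bound into an unconditional one).
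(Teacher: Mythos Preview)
Your overall architecture matches the paper exactly: Cauchy--Schwarz to split into $\sqrt{\mathrm{Term22}}\cdot\sqrt{\mathrm{Term3}(\W)}$, the observation that $\mathrm{Term22}$ is $\W$-free (so no $\epsilon$-net is needed for it), Lemma~\ref{sigmaminG} for $\mathrm{Term3}$, and the $\theta_k^3$-type bound on the conditional mean of the sign-error term.  The failure-probability bookkeeping is also right.

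There is, however, a real gap in your concentration step for $\mathrm{Term22}$.  You propose to feed a ``refined variance proxy'' $\sum_{ik}C\deltapt^5\|\xstar_k\|^4$ together with the sub-exponential norm $\|\xstar_k\|^2$ into Lemma~\ref{ProductsubG}.  But Lemma~\ref{ProductsubG} (and Theorem~2.8.1 of \cite{versh_book} on which it rests) does \emph{not} have a separate variance parameter: both terms in the $\min$ are built from the sub-exponential norms $K_{X_i}K_{Y_i}$, not from the variances.  If you plug in $K_\ik=\|\xstar_k\|^2$, then with $t=m\epsilon_2\deltapt^2\|\Xstar\|_F^2$ the second (linear) term gives only
\[
\frac{t}{\max_\ik K_\ik} \;=\; \frac{m\epsilon_2\deltapt^2\|\Xstar\|_F^2}{\max_k\|\xstar_k\|^2}\;\ge\;\frac{mq\,\epsilon_2\deltapt^2}{\mu^2\kappa^2},
\]
so the probability bound is at best $\exp(-c\,\epsilon_2\deltapt^2 mq)$, not $\exp(-c\,\epsilon_2^2 mq)$.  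Since $\deltapt$ and $\epsilon_2$ are both eventually set to $c/(r\kappa^2)$, we have $\deltapt^2\ll\epsilon_2$ and the extra $\deltapt^2$ in the exponent would force a strictly worse sample complexity than claimed.

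The paper avoids this by a pointwise (not moment) inequality that you did not use: whenever $\cb_\ik\ne\hat\cb_\ik$, one has $(\a_\ik{}'\xstar_k)^2\le(\a_\ik{}'\h_k)^2$ with $\h_k=\xstar_k-\xhat_k$ (see the proof of Theorem~1 in \cite{rwf}).  Hence each summand $Q_\ik$ is pointwise dominated by $(\a_\ik{}'\h_k)^2$, whose sub-Gaussian norm is $\|\h_k\|\le C\deltapt\|\xstar_k\|$.  Now Lemma~\ref{ProductsubG} applies with $K_{X_\ik}=K_{Y_\ik}=\|\h_k\|$, and both terms in the $\min$ pick up the missing $\deltapt^2$ factor, yielding exactly $\exp(-c\,\epsilon_2^2 mq/\mu^2\kappa^2)$.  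Replacing your variance-proxy argument with this pointwise domination makes the proof go through; everything else in your outline is fine.
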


Finally, we lower bound the first denominator term of \eqref{SEU_bnd_0}. This can be done by using the bound on
$\| \tB \left( \I - \B {}' \B  \right)\|_F$ from Lemma \ref{product}.
This, in turn, implies a lower bound on the minimum singular value of $\B^* \B'$, and hence the following.
\begin{lemma}\label{lem:bound_RU}
	Pick a $\delta_b <1/10$ and assume that $m \ge C \max(r,\log n, \log q)/\delta_b^2$.
	Under the conditions of Theorem \ref{thm:main_res}, if $\SE(\Ustar,\U) \le \deltapt$ with $\deltapt \leq \frac{1}{4C \sqrt{r} \kappa}$, then, w.p. at least $1 -n^{-10}$,
	$\sigma_{\min}(\U^* \bm\Sigma^* \B^* \B')  \ge 0.9\sigmin.
	$
\end{lemma}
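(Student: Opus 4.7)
The plan is to reduce $\sigma_{\min}(\U^* \bm\Sigma^* \B^* \B')$ to a perturbation of $\sigma_{\min}(\tB)$ and then invoke Lemma \ref{product} to control the perturbation. Since $\U^*$ has orthonormal columns, left-multiplication by $\U^*$ preserves singular values, so $\sigma_{\min}(\U^* \bm\Sigma^* \B^* \B') = \sigma_{\min}(\tB \B')$. I would then split
\[
\tB \B' = \tB - \tB (\I_q - \B' \B),
\]
and apply the min-singular-value perturbation bound (Weyl) to obtain
\[
\sigma_{\min}(\tB \B') \ \ge \ \sigma_{\min}(\tB) - \| \tB (\I_q - \B' \B) \|.
\]

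The first term is easy: since $\bm\Sigma^*$ is diagonal with smallest entry $\sigmin$ and $\B^*$ has orthonormal rows, the singular values of $\tB = \bm\Sigma^* \B^*$ are exactly those of $\bm\Sigma^*$, so $\sigma_{\min}(\tB) = \sigmin$. For the second term, I would upper bound the spectral norm by the Frobenius norm and directly quote the bound
\[
\| \tB (\I_q - \B' \B) \|_F \ \le \ C \, \deltapt \, \|\Xstar\|_F
\]
that is proved as a byproduct of Lemma \ref{product} (and whose failure probability is at most $n^{-10}$ under the stated sample-complexity and incoherence assumptions). Then using $\|\Xstar\|_F \le \sqrt{r}\, \sigmax = \sqrt{r}\, \kappa \, \sigmin$ gives
\[
\| \tB (\I_q - \B' \B) \| \ \le \ C \, \deltapt \, \sqrt{r} \, \kappa \, \sigmin .
\]

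Finally, I would substitute the hypothesis $\deltapt \le \tfrac{1}{4C\sqrt{r}\kappa}$ (after possibly absorbing a factor into $C$ to reach the stated $0.9$ rather than $0.75$) to conclude $\sigma_{\min}(\tB \B') \ge (1 - o(1))\sigmin \ge 0.9 \sigmin$, as desired. The only nontrivial step is the reduction to $\| \tB (\I_q - \B' \B) \|_F$, but that work has already been done inside the proof of Lemma \ref{product}; the main obstacle for this particular lemma is purely bookkeeping of the constants so that the small-$\deltapt$ assumption is tight enough to leave a multiplicative slack of at least $0.9$. No new probabilistic argument is needed beyond what Lemma \ref{product} already provides, so the high-probability event is inherited directly.
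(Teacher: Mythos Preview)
Your approach is essentially the same as the paper's: both reduce the bound to the estimate $\|\tB(\I - \B'\B)\|_F \le C\deltapt\|\Xstar\|_F$ established inside Lemma~\ref{product}, and then use $\|\Xstar\|_F \le \sqrt{r}\,\kappa\,\sigmin$ together with the smallness of $\deltapt$. The paper routes through $\sigma_{\min}(\B^*\B') = \sqrt{1-\SE^2(\Bstar',\B')}$ and bounds $\SE(\Bstar',\B') = \|(\bm\Sigma^*)^{-1}\tB(\I-\B'\B)\|$, while you go slightly more directly via Weyl; the substance is identical.

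One small slip to fix: the displayed identity $\tB \B' = \tB - \tB(\I_q - \B'\B)$ is dimensionally inconsistent (the left side is $r\times r$, the right side is $r\times q$ and in fact equals $\tB \B'\B$). What you need is the extra observation that right-multiplication by $\B$ (which has orthonormal rows) preserves singular values, so $\sigma_{\min}(\tB\B') = \sigma_{\min}(\tB\B'\B)$; after that your Weyl step is valid and the rest goes through exactly as you wrote.
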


We prove the above lemmas in Appendix \ref{sec:key_lem_proofs}. 

\begin{proof}[Proof of Claim \ref{lem:descent}]
	Combining  Lemmas \ref{lem:key_lem} and \ref{lem:bound_RU}, if $\deltapt < c / \sqrt{r} \kappa$,
	\begin{align}
		\SE(\U^{t+1}, \Ustar)  \le \frac{\mathrm{MainTerm}}{0.9\sigmin - \mathrm{MainTerm}}.
		\label{SEU_bnd}
	\end{align}
	Set $\delta_b=1/11$.
	Combining Lemmas  \ref{product}, \ref{Show} and \ref{sigmaminG}, and using  $\|\Xstar\|_F \le \sqrt{r}\sigmax$, we conclude that, w.p. at least
\\
$1 - 2 \exp\left(nr (\log 17) - c \frac{ \epsilon_1^2 mq}{\kappa^3 \mu^2 r} \right) - 2\exp\left(nr (\log 17)  - c \frac{\epsilon_3^2 m q}{\hat\mu^2 r}\right) - 2\exp\left(- c\epsilon_2^2 m q\right) - n^{-10}$
	\begin{align*}
		\mathrm{MainTerm} \le C \frac{1}{0.5(1-\epsilon_3)} (\epsilon_1 + \sqrt{\deltapt + \epsilon_2})  \deltapt \sqrt{r} \sigmax.
	\end{align*}
In order to ensure that $\mathrm{MainTerm} \le 0.7 \deltapt  \sigmin$, we need to set $\epsilon_3 = 0.1$, $\epsilon_1 = c / \sqrt{r} \kappa$,   $\sqrt{\epsilon_2}= c / \sqrt{r} \kappa$, and  $\sqrt{\deltapt} \le c / \sqrt{r} \kappa$. Since we prove $\deltapt \le \deltinit$, the last bound is ensured if $\sqrt{\deltinit} \le c / \sqrt{r} \kappa$.
With these settings, if $mq \ge C nr^2 /\epsilon_1^2  = C nr^3 $, $mq \ge C nr^2 /\epsilon_3^2 = C nr^2$, $mq \ge C nr /\epsilon_2^2 = C nr^3$, and  $\deltinit = c/r \kappa^2$, then
\\ w.p. $1 - n^{-10} - 4 \exp(-c nr)$,
	\begin{align*}
		\SE(\U^{t+1}, \Ustar) 
		\le  0.7  \deltapt
	\end{align*}
\end{proof}



\begin{figure*}[t!]
	\centering
	\begin{tikzpicture}
	\begin{groupplot}[
	group style={
		group size=3 by 1,
		horizontal sep=2cm,
	},
	my stylecompare,
	enlargelimits=false,
	width = .3\linewidth,
	height=5.5cm,
	enlargelimits=false,
	]
	\nextgroupplot[
	my legend style compare,
	legend style={
		at={(0.2,1.4)},
		anchor=north west,
	},
	legend columns=7,
	legend style={font=\small},
	ymode=log,
	xlabel={\small{running time (in sec.)}},
	ylabel={\small{$\matdist(\Xstar, \hat\X^t$)}},
	title={\small{(a) $m=80$, $n=200$, $q=400$ }},
	]
	\addplot table[x index = {0}, y index = {1}]{\rankestsmall};
	\addplot table[x index = {2}, y index = {3}]{\rankestsmall};
	\addplot table[x index = {4}, y index = {5}]{\rankestsmall};
	\addplot table[x index = {6}, y index = {7}]{\rankestsmall};
	\addplot table[x index = {10}, y index = {11}]{\rankestsmall};
	
	\nextgroupplot[
	ymode=log,
	xlabel={\small{running time (in sec.)}},
	title={\small{(b) $m=150$, $n=600$, $q=1000$}},
	]
	\addplot table[x index = {0}, y index = {1}]{\rankestlarge};
	\addplot table[x index = {2}, y index = {3}]{\rankestlarge};
	\addplot table[x index = {4}, y index = {5}]{\rankestlarge};
	\addplot table[x index = {6}, y index = {7}]{\rankestlarge};
	\addplot table[x index = {10}, y index = {11}]{\rankestlarge};
	
	\nextgroupplot[
	ymode=log,
	xmode=log,
	xlabel={\small{running time (in sec.)}},
	title={\small{(c) $m=1000$, $n=200$, $q=200$}},
	]
	\addplot table[x index = {0}, y index = {1}]{\allwork};
	\addplot table[row sep=crcr]{%
		nan nan\\
	};
	\addplot [blue, line width=1.6pt, mark=o,mark size=4pt, select coords between index={0}{5}]table[x index = {2}, y index = {3}]{\allwork};
	\addplot [olive, line width=1.6pt, mark=diamond,mark size=5pt,  select coords between index={0}{5}]table[x index = {4}, y index = {5}]{\allwork};
	
	\end{groupplot}
	\end{tikzpicture}
	\vspace{-.2cm}
	\caption{\small{Error versus time plot with time in seconds. 
			We compare with LRPR2 which is the only other existing Low-Rank Phase Retrieval algorithm \cite{lrpr_tsp}, RWF \cite{rwf} and projected RWF. 
			The first step (initialization) of LRPR2 is slower than our proposed method. This is likely due to the fact that the estimates of the rank are different for the two algorithms and thus the errors are also slightly different in the two cases. For the purpose of better illustration, we only plot the error and time at the end of every $10$ iterations for RWF and proj-RWF.
	}}
	\label{fig:rank_est}
\end{figure*}
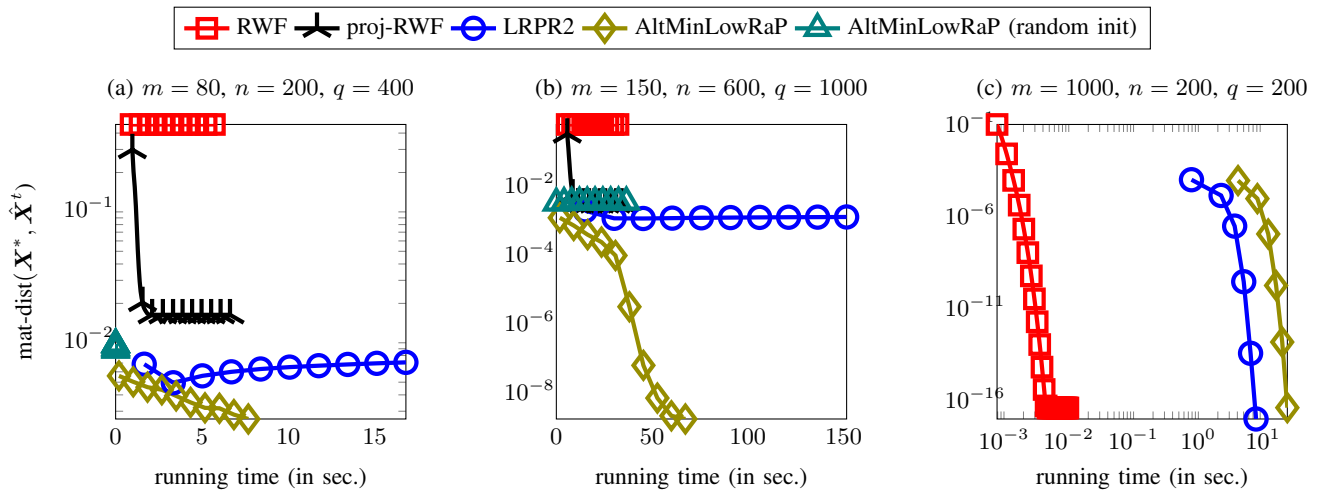

\section{Numerical Evaluation}\label{sec:expts}
In this section we provide detailed description of the numerical evaluation of our algorithms on synthetic and real data. All time comparisons are performed on a single Desktop Computer with Intel$^{\textsuperscript{\textregistered}}$ Xeon E$3$-$1240$ $8$-core CPU @ $3.50$GHz and $32$GB RAM.
{We must mention that for all the algorithms, (a) we compare on the same system, and we do not run other memory and compute intensive programs while performing the time comparison, (b) we use the most efficient sub-routines that are provided by the respective authors and these implementations rely on the ``anonymous functions'' feature of MATLAB. This provides a significant speed up since the major chunk of computation time involves matrix-vector/matrix-matrix products. This ensures uniformity to a large extent for all algorithms.  
}

%

\subsection{Synthetic Data Experiments} \label{simdata}
We demonstrate the effectiveness of AltMinLowRaP over existing work using two synthetic experiments. For both, we generate an error versus time-taken plot as follows: for each $t=0,1,\dots,T$, we plot the matrix recovery error (at the end of that iteration) and time-taken (until the end of that iteration) on the y- and x-axes respectively.
All synthetic data experiments are performed for $100$ independent trials, and for each algorithm, we plot the average error over the best $90$ trials (drop the 10 trials with the largest error). This is done because all algorithms are guaranteed to work well only with high probability.

In the {\em first} experiment, we compare the performance of several algorithms for different values of $m$. We generated data as follows: $\Xstar = \Ustar \tB$ where $\Ustar \in \mathbb{R}^{n \times r}$ is generated by orthonormalizing a iid standard Gaussian matrix. The entries of $\tB \in \mathbb{R}^{r \times q}$ are chosen from another iid standard Gaussian distribution.  Thus, in this setting, $\sigmin^2/q \approx \sigmax^2/q \approx 1$. Measurements were generated using \eqref{eq:problem} with $\a_\ik$'s being iid Gaussian. We compare AltMinLowRaP with LRPR2 (best heuristic from \cite{lrpr_tsp}), RWF \cite{rwf}, and with what we call projected-RWF or proj-RWF (for all $t$, after the $t$-th RWF iteration, we project the matrix $\hat{X}^t_{RWF}$ onto the space of rank-$r$ matrices, with $r$ known).
We provide results for three cases: (i) $n = 200$, $q = 400$, $r=4$, and $m = 0.4n$; (ii)  $n=600$, $q = 1000$,  $r=4$, and $m=0.25n$; and (iii) $n=200$, $m=5n = 1000$, $q=200$ and $r=4$. The results are summarized in Fig. \ref{fig:rank_est}. In the first two settings (Figs. \ref{fig:rank_est}(a),(b)), we show that by exploiting the low-rank structure, AltMinLowRaP is able to outperform the unstructured PR methods. Notice that, proj-RWF performs better than RWF (which does not work at all since $m \ll n$). Secondly, AltMinLowRaP is significantly better than proj-RWF, and for reasons explained earlier, it is also better than LRPR2. AltMinLowRaP and LRPR2 estimate the rank, whereas proj-RWF is provided the true rank. We observed that, in practice, the estimated rank is higher than the true rank in all the cases.  In both these settings, $ m \ll n$, and thus, unstructured algorithms fail. The third setting is a setting with a very large value of $m$: we are using $m=5n$ measurements. This is a case where all of AltMinLowRaP, TWF, and RWF work, but AltMinLowRaP is much slower as shown in Fig. \ref{fig:rank_est}(c).
{ Additionally, for the first two cases, based on a reviewer's comment, we also empirically evaluate a random initialization scheme for the AltMinLowRaP algorithm. We observed that although the initial estimates are approximately in the order of $10^{-2}$, the algorithm itself fails to improve this with subsequent iterations. A possible workaround to this could be through using a gradient descent based algorithm (inspired by \cite{pr_mc_reuse_meas}) but this requires a detailed analysis, and we will study this as part of future work.}.

Our {\em second} experiment illustrates the  time complexity discussion given in Sec. \ref{sec:main_res1}. For a given $m$, AltMinLowRaP is about $r$ times slower than the best provably correct regular (unstructured) PR methods - TWF and RWF. But, if for each algorithm, we use the minimum $m$ needed for the algorithm to achieve $\epsilon$ accuracy, then, theoretically, AltMinLowRaP should be faster if the rank $r$ is small enough. We tested this empirically as follows. We generated data as before with $n=600$, $q=1000$ and $r=4$. We implemented TWF and RWF using two values of $m$, $m=3n$ and $m=4n$. We evaluated AltMinLowRaP with using $m=n/4$. The error-at-iteration-$t$ versus time-taken-until-iteration-$t$ plot is shown in Fig. \ref{fig:time_comp} for all these cases. As can be seen, using $3n$ measurements, neither of TWF or RWF works. Using $m=4n$, both work. But if we compare the time taken (x-axis value) for any value of error level $\epsilon$, both are at least 5 times slower than AltMinLowRaP ($m=n/4$). For all algorithms, we repeat the expriments for $100$ independent trials, and plot the mean taken over the best $90$ trials to illustrate the high probability results.

\subsubsection{Algorithm parameters}
For both experiments, AltMinLowRaP was implemented as Algorithm \ref{lrpr_th} but using the same set of measurements (does not require sample-splitting), and with the following parameters: $T_{RWF,t}$ scales linearly  from 5 to 30,  $\omega =  1.3\sigmin^2/q \approx 1.3$ and $C_Y=9$. All parameters are as suggested in the theorem.
For LRPR2 we used the default parameters mentioned in the documentation. We set the maximum number of outer-loop iterations, $T_{max}=10$ for both.
For RWF and TWF, we used the default parameters suggested by the authors with the exception that we let the maximum number of iterations $T_{max}=300$ (to try to see if its error reduces with more iterations). However, as we plot the time-taken at the end of each iteration, this is not an unfair implementation of RWF; it only means that we have 300 data points to plot on our graph. 
We should point out that the recovery error for AltMinLowRaP is sensitive to the choice of $\omega$. If $\omega$ is too small (for given values of $m$ and of $C_Y$ used in computing $\Y_U$), the algorithm will significantly over-estimate the rank. This is especially problematic when $m$ is small (or $C_Y$ is large for a given $m$). Thus, as a thumb rule, for a lower value of $m/n$, the threshold $\omega$ should be larger. Of course if it is too large, it will underestimate the rank\footnote{This is a bigger problem when $\kappa=1$ as in the simulated data above. It is a lesser problem for real approximately low-rank data (e.g., slow changing videos) with a larger $\kappa$, since in those cases, the missed directions will be the ones with smaller singular values.}

\subsection{Real Videos with Simulated CDP measurements: Small $r$ suffices} \label{small_r}
We demonstrate the effectiveness of AltMinLowRaP for recovering two real video sequences (these are only approximately low-rank) from simulated Coded Diffraction Pattern (CDP) measurements. These measurements can be represented as $\bm{Y} = |\mathcal{F}(\bm{D} \Xstar)|$ where $\mathcal{F}(\cdot)$ is the DFT operation and the matrix $\bm{D}$ represents a diagonal mask matrix whose diagonal entries are chosen uniformly at random from $\{\pm 1, \pm \sqrt{-1}\}$ and modulate the intensity of the input. We generate CDP measurements of each frame of the video (the $k$-th frame vectorized is $\xstar_k$). We compared our algorithm with LRPR2 and RWF. We present the quantitative results in Table \ref{tab:vid} and the visual comparisons in Fig. \ref{fig:vid_frames} (given in the beginning), and in Fig. \ref{fig:vid_frames_plane_2n}. Notice that, in this case, even with $m = 5n$ measurements, RWF is unable to accurately recover the video and AltMinLowRaP has a slightly better performance w.r.t. LRPR2. The algorithm parameters are set as in the synthetic data experiments, swith the exception that we now set $T_{max} = 30$ for ALtMinLowRaP and LRPR2. AltMinLowRaP implementation used all the speed-up ideas for Fourier measurements explained in \cite{lrpr_tsp} for LRPR2 and so did LRPR2.

{
	We tested AltMinLowRaP with three possible values of rank, $r=15, 20, 25$. As can be seen, even $r=15$ suffices to get a significantly better reconstruction error than RWF for $m=5n$ CDP measurements. For the plane video, $n = 6912$ and $q  = 105$ and for the mouse video, $n = 5182$ and  $q = 90$, and thus in both cases, $r \approx 0.003n$. If $q$ is larger, a natural idea would be to use similar parameter settings, but instead implement the tracking variant of AltMinLowRaP (Algorithm \ref{pst_th}).
}

{
	\subsection{Real Videos with Simulated CDP measurements: Low-Rank versus (Wavelet) Sparse Models} \label{lr_vs_s}
	To justify the low-rank assumption on videos, we compare with CoPRAM \cite{fastphase}, a state-of-the-art, provable algorithm for compressive phase retrieval. Since the videos are not sparse in the spatial domain, as suggested in \cite{fastphase}, we use the Haar wavelet as the sparsifying basis\footnote{We also experimented with the Daubhechies-3 wavelet as the sparsifying basis in our experiments. However, we noticed that, for the plane video, irrespective of the wavelet basis, the number of coefficients necessary to preserve $\approx 90\%$ of the energy of the video required $s \approx 0.1n$ and thus the choice of wavelet basis is not detrimental in this experiment.}. 
	
	
	As can be seen from Table \ref{tab:vid_sp} and Fig. \ref{fig:vid_frames_plane_sparse}, the low-rank prior gives a much better reconstruction error in all three cases in this table including the exact sparse case. Since the video is not exactly wavelet sparse, we also performed a comparison on the sparsified video, wherein, for each image frame, we truncate the wavelet coefficients such that approximately $90\%$ of the energy in each frame is preserved. We refer to this as the {\em sparsified video} in this experiment. Sparsifying the video significantly improves the performance of CoPRAM, but AltMinLowRaP ($r=15$) is still better.
	Even with $m=10n$ measurements, the sparse model is unable to capture the finer details in the video. We also observed that a standard complex mask does not work very well for CoPRAM and hence for this experiment, we report the results when the entries of the CDP mask, $\bm D$ are chosen uniformly at random from $\{\pm 1\}$. We reshaped each video frame into size $32 \times 32$ since the online implementation of their code only works for small sized data. We provide the quantitative results in Table \ref{tab:vid_sp} and qualitative results for the plane video in Fig. \ref{fig:vid_frames_plane_sparse}.
}

%
%

\begin{table}[t!]
	\caption{$\matdist(\hat{\X}, \Xstar)$ and time comparison for the {\em mouse} and {\em plane} videos. We generate the measurements using the CDP model and consider two different number of settings. Notice that AltMinLowRaP is slightly better than LRPR2 but is slower than in the simulated data experiments.}
	\centering
	\resizebox{1\linewidth}{!}{
		
		\begin{tabular}{c c c c c c c} \toprule
			Algorithm & \multicolumn{3}{c}{$\matdist(\hat{\X}, \Xstar)$ (Running Time  in sec)}  \\ \midrule
			& \multicolumn{1}{c}{m = $5n$ (mouse)} & \multicolumn{1}{c}{m = $5n$ (plane)} & \multicolumn{1}{c}{m = $2n$ (mouse)} \\ \midrule
			
			RWF \cite{rwf} & $0.65$ ($0.35s$) & $0.65$ ($0.35$s) & $1.36$ ($0.25$s)  \\
			LRPR2 \cite{lrpr_tsp} ($r=25$) & $0.48$ ($81.8$s) & $0.10$ ($122.3$s) & $0.61$ ($31.0$s)  \\
			AltMinLowRaP ($r=25$) & $0.39$ ($297.6$s) & $0.09$ ($467.6$s) & $0.52$ ($122.1$s) \\
			AltMinLowRaP ($r= 15$) & $0.57$ ($277.2$s) & $0.15$ ($418.9$s) & $0.60$ ($113.6$s) \\
			AltMinLowRaP ($r=10$) & $0.70$ ($262.7$s) & $0.23$ ($409.0$s) & $0.72$ ($97.4$s) \\
			\bottomrule
		\end{tabular}
	}
	\label{tab:vid}
\end{table}

\begin{table}[t!]
	\centering
	\caption{Low rank versus Sparse PR: We compare with a recent state of the art algorithm for provable sparse phase retrieval. $\matdist(\hat{\X}, \Xstar)$ and time comparison for the {\em mouse} and {\em plane} videos using {\em real valued} CDP measurements. We conclude that low-rank is a better model than wavelet-sparse for slowly changing videos.}
	\resizebox{1\linewidth}{!}{
		
		\begin{tabular}{cccc} \toprule
			Algorithm & \multicolumn{3}{c}{Video} \\ \midrule
			& $m=5n$ (plane) & $m=10n$ (plane) & $m=5n$ (sparsified-plane) \\ \midrule
			CoPRAM \cite{fastphase}  & $2.113$  & $1.019$ & $0.3104$ \\
			RWF & $0.6531$ & $0.4134$ & $0.6514$ \\
			AltMinLowRaP ($r=15$) & $0.111$  & $0.109$ & $0.1427$ \\ \bottomrule
		\end{tabular}
	}
	\label{tab:vid_sp}
\end{table}

\color{black}

\begin{figure*}[t!]
	\begin{center}
		\resizebox{.7\linewidth}{!}{
			\begin{tabular}{cccc}
				\\    \newline
				\includegraphics[scale=1.3, trim={.1cm, .1cm, .1cm, .1cm}, clip=true]{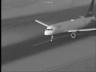}
				&
				\includegraphics[scale=1.3, trim={.1cm, .1cm, .1cm, .1cm}, clip=true]{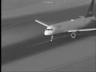}
				&
				\includegraphics[scale=1.3, trim={.1cm, .1cm, .1cm, .1cm}, clip=true]{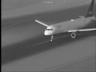}
				&
				\includegraphics[scale=1.3, trim={.1cm, .1cm, .1cm, .1cm}, clip=true]{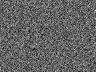}
				\\    \newline
				\includegraphics[scale=1.3, trim={.1cm, .1cm, .1cm, .1cm}, clip=true]{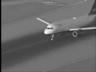}
				&
				\includegraphics[scale=1.3, trim={.1cm, .1cm, .1cm, .1cm}, clip=true]{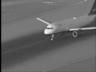}
				&
				\includegraphics[scale=1.3, trim={.1cm, .1cm, .1cm, .1cm}, clip=true]{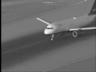}
				&
				\includegraphics[scale=1.3, trim={.1cm, .1cm, .1cm, .1cm}, clip=true]{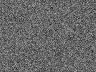}
				\\    \newline
				\subcaptionbox{Original\label{1}}{\includegraphics[scale=1.3, trim={.1cm, .1cm, .1cm, .1cm}, clip=true]{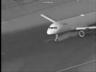}}
				&
				\subcaptionbox{AltMinLowRaP\label{1}}{\includegraphics[scale=1.3, trim={.1cm, .1cm, .1cm, .1cm}, clip=true]{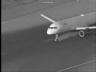}}
				&
				\subcaptionbox{LRPR2\label{1}}{\includegraphics[scale=1.3, trim={.1cm, .1cm, .1cm, .1cm}, clip=true]{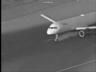}}
				&
				\subcaptionbox{RWF\label{1}}{\includegraphics[scale=1.3, trim={.1cm, .1cm, .1cm, .1cm}, clip=true]{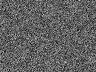}}
			\end{tabular}
		}
		
		\caption{Comparison of visual performance for the plane video with $m = 2n$. The images are shown at $k = 20, 60, 100$.}
		\label{fig:vid_frames_plane_2n}
	\end{center}
\end{figure*}

\begin{figure*}[t!]
	\begin{center}
		\resizebox{.7\linewidth}{!}{
			\begin{tabular}{c@{}c@{}c@{}c}
				\\    \newline
				\includegraphics[scale=4.1, trim={2.5cm, 1.7cm, 2.5cm, 1cm}, clip=true]{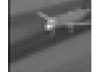}
				&
				\includegraphics[scale=4.28, trim={2.5cm, 1.7cm, 2.5cm, 1cm}, clip=true]{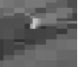}
				&
				\includegraphics[scale=4.23, trim={2.5cm, 1.7cm, 2.5cm, 1cm}, clip=true]{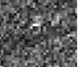}
				&
				\includegraphics[scale=4.1, trim={2.5cm, 1.7cm, 2.5cm, 1cm}, clip=true]{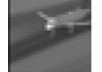}
				\\    \newline
				\subcaptionbox{Original\label{1}}{\includegraphics[scale=4.28, trim={2.5cm, 1.7cm, 2.5cm, 1cm}, clip=true]{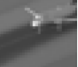}}
				&
				\subcaptionbox{CoPram\label{1}}{\includegraphics[scale=4.1, trim={2.5cm, 1.7cm, 2.5cm, 1cm}, clip=true]{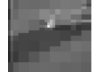}}
				&
				\subcaptionbox{RWF\label{1}}{\includegraphics[scale=4, trim={2.5cm, 1.7cm, 2.5cm, 1cm}, clip=true]{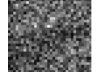}}
				&
				\subcaptionbox{AltMinLowRaP\label{1}}{\includegraphics[scale=4.4, trim={2.5cm, 1.7cm, 2.5cm, 1cm}, clip=true]{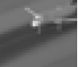}}
			\end{tabular}
		}
		
		\caption{ Comparison of visual performance for the plane video for $m=5n$ at $k=10,60$. We observe that RWF fails visually although numerical error is better, and CoPRAM is able to recover the global features but cannot recover the fine details due to the high level of sparsity required. AltMinLowRaP outperforms both methods, confirming the validity of the low-rank model.}
		\label{fig:vid_frames_plane_sparse}
	\end{center}
\end{figure*}

\newcommand{\J}{\mathcal{J}}

\section{Phaseless Subspace Tracking} \label{sec:pst}

When the matrix $\Xstar$ consists of a time sequence of signals $\xstar_k$, then the column-wise measurements appear one column at a time (sequentially).
Hence, there is benefit in trying to develop a  mini-batch algorithm that works with measurements of short batches of $\alpha$ consecutive columns. Moreover, for long data sequences, the subspace from which the data are generated could itself change with time. Detecting and being able to track such subspace changes is important for long sequences.
Interestingly the algorithm that works for this purpose is a simple modification of the static case idea along with a carefully designed subspace change detection step. 

\subsection{Problem setting}
The low-rank assumption is equivalent to assuming that  $\xstar_k = \Ustar \tb_k$ where $\Ustar$ specifies a fixed $r$-dimensional subspace. For long signal/image sequences, a better model (one that allows the required subspace dimension $r$ to be smaller) is to let the subspace change with time. As is common in time-series analysis, the simplest model for time-varying quantities is to assume that they are piecewise constant with time. We adopt this approach here.
Moreover, in order to easily borrow ideas from the static setting, we will assume that we now have a total of $\qfull$ signals (matrix columns) and we will denote the $n \times \qfull$ matrix formed by all these columns by $\Xstar_\full$. Our algorithm will operate on measurements of $\alpha$-consecutive-column sub-matrices of $\Xstar_\full$.

Let $k_0=1$, and let $k_j$ denote the $j$-th subspace change time, for $j=1,2,\dots, J$ and let $k_{J+1} = \qfull$. We have the following model
\begin{align}
	\xstar_k = \Ustar_{\sub,(j)} \td_k,  \ \text{for all} \ k_j \le k \le k_{j+1}
\end{align}
where $\Ustar_{\sub,(j)}$ is an $n \times r$ ``basis matrix'' for the $j$-th subspace and $\td_k$ is the coefficients' vector at time $k$.
%

The goal is to track the subspaces $\Span(\Ustar_{\sub,(j)})$ on-the-fly; of course, ``on-the-fly'' for subspace tracking means with a delay of at least $r$. Once this can be done accurately enough, it is easy to also recover the matrix columns $\xstar_k$ (by solving a simple $r$-dimensional PR problem to recover the $\td_k$'s).

The reason we use a different notation here (the subscript $\sub$ and use of $\td_k$ instead of $\tb_k$) is as follows. Consider an $\alpha$-column sub-matrix formed by $\alpha$ consecutive signals. Let us call it $\Xstar$ and let $\Xstar \svdeq \Ustar \bm\Sigma^* \Bstar$. If all the $\xstar_k$'s forming this matrix are generated from the same subspace, say $\Ustar_{\sub,(j)}$, then $\Span(\Ustar) = \Span(\Ustar_{\sub,(j)})$ and there is no need for a different notation. However, if a subspace change occurred inside this interval, then we cannot say anything simple like this.
All we can say is that $\Xstar = [\Ustar_{\sub,(j-1)} \bm{D}_{(j-1)} , \Ustar_{\sub,(j)} \bm{D}_{(j)} ]$ and so $\Span(\Ustar) \subseteq \Span(\Ustar_{\sub,(j-1)}) \cup \Span(\Ustar_{\sub,(j)})$.


%


\subsection{Basic PST algorithm and extensions} 
As noted earlier the PST algorithm is a simple modification of the static case algorithm (AltMinLowRaP) along with a carefully designed change detection strategy. In the static case,  in each iteration, we used a set of $mq$ measurements of a single $n \times q$ matrix $\Xstar$. For obtaining the guarantees, we assumed a new (independent) set of $2mq$ measurements of the {\em same} matrix $\Xstar$ were used in each iteration ($mq$ for updating the estimate of $\Bstar$ and another $mq$ for $\Ustar$).
For the tracking setting, using a mini-batch size of $\alpha$, we proceed as follows: each new update iteration uses $2m \alpha$  measurements of a {\em new} $\alpha$-consecutive-column sub-matrix of $\Xstar_\full$. The input to the update iteration is the subspace estimate from the previous iteration. Under the assumption that the subspace remains constant for at least $T \alpha$ time instants after a subspace change has been detected, this approach works: with $T  = C\log(1/\epsilon)$, we can show that, after $T \alpha$ time instants, we get an $\epsilon$-accurate estimate of the $j$-th subspace.

We summarize the algorithm in Algorithm \ref{pst_th}. This toggles between a ``detect'' and an ``update'' mode. It starts in the ``update'' mode (described above) and remains in it for the first $T \alpha$ time instants. At this time it enters the ``detect'' mode. We are able to guarantee that, when the algorithm enters this detect mode, the previous subspace has been estimated to $\epsilon$ error whp. In the detect mode, the algorithm {\em does not} perform any subspace updates. This is done to simplify our analysis; it ensures that, in the interval during which the subspace change occurs, the subspace is not updated. This is what allows us to use our previous two main claims (Claims \ref{lemm:bounding_U} and \ref{lem:descent}) without change to analyze the update mode.
Practically, this is of course wasteful. We develop an improvement below.

To understand the change detection strategy, let $\hat{k}_j$ denote the estimated change times. 
Consider an $\alpha$-length interval, $\J_\alpha$, contained in $[k_j, k_{j+1})$. Assume that an $\epsilon$-accurate estimate of the previous subspace $\Ustar_{\sub,(j-1)}$ has been obtained by $\hat{k}_{j-1} + T \alpha$ and that this time is before $k_j$. Let $\U_{\sub,(j-1)} $ denote this estimate.
Define the matrix
\begin{align*}
	& \Y_{U,det,big}:= \\ & (I - \U_{\sub,(j-1)} \U_{\sub,(j-1)}{}') \Y_U (I - \U_{\sub,(j-1)} \U_{\sub,(j-1)}{}')
\end{align*}
with $\Y_U = \Y_U(\J_\alpha)$. This means that $\Y_U$ is as defined earlier in \eqref{def_YU} with the $k$ summation being over all $k \in \J_\alpha$ (it is
using measurements for all the columns within this $\alpha$-length interval).
With a little bit of work (see Lemma \ref{lem:changedet} and its proof), one can show that, in this interval, the matrix $\Y_{U,det}:=\U_{\sub,(j-1),\perp}{}'\Y_{U,det,big}\U_{\sub,(j-1),\perp}{}$ is close to a matrix $\tSigma$ whose eigenvalues satisfy
\begin{align*}
	&\lambda_{\max}(\tSigma) - \lambda_{\min}(\tSigma) \\ &  \ge 1.5 (\SE(\U_{\sub,(j-1)},\Ustar_{\sub,(j)}) - 2 \epsilon)^2 \frac{ \sigmin^2 }{ \alpha}.
\end{align*}
On the other hand, in an $\alpha$-length interval contained in $[\hat{k}_{j-1} + T \alpha, k_j)$,
\begin{align*}
	&\lambda_{\max}(\tSigma) - \lambda_{\min}(\tSigma) \\ &  \le \SE^2(\U_{\sub,(j-1)},\Ustar_{\sub,(j-1)}) \sigmax^2/\alpha \le \epsilon^2 \frac{ \sigmax^2 }{\alpha}.
\end{align*}
Thus, this quantity is small when the $j$-th change has not occurred (before $k_j$), and is large when the subspace has changed (after $k_j$). By using a large enough lower bound on the product $m \alpha$, the same can be shown for the difference between the maximum and minimum eigenvalues of $\Y_{U,det}$ (these are equal to the maximum and $(n-r)$-th eigenvalues of $\Y_{U,det,big}$).

%
Once we have an $\epsilon$-accurate estimate of the current subspace, it is straightforward to also recover the corresponding signals $\xstar_k$. This can simply be done by solving a standard PR problem to recover the coefficients vector. See last line of Algorithm \ref{pst_th}. This borrows a similar idea from \cite{rrpcp_icml}. 

\subsubsection{Improved algorithm: PST-all}
Notice from Theorem \ref{thm:pst} that Algorithm \ref{pst_th} can only provably detect and track subspace changes that are larger than a small threshold. While this makes sense for detection, it should be possible to track all types of changes.
By including a simple modification in Algorithm \ref{pst_th} (include the ``update'' step during the detection mode as well), we can empirically demonstrate that this is indeed true. We demonstrate this in Fig \ref{fig:pst}(a). Moreover, PST-all also removes the other limitation of basic PST (not using the detect phase samples for improving the subspace estimate). Thus, even for large changes that basic PST can detect, PST-all has better tracking performance; see Fig \ref{fig:pst}(b).

\subsection{Guarantee for basic PST}
We can prove the following about Algorithm \ref{pst_th} (basic PST).

\begin{corollary}[PST algorithm]\label{thm:pst}
	Consider Algorithm \ref{pst_th}.
	Pick any value of $m \ge C \max(r, \log n, \log \qfull)$. For this $m$, set $\alpha = \frac{C \kappa^{12} \mu^4 \cdot nr^4 }{ m }$.
	Set $T := C\log(1/\epsilon)$, and the detection threshold $\omega_{det} = c/(\kappa^2 r)$.
	Assume that $k_{j+1} - k_j \ge  (T+3) \alpha$ and that $\SE(\Ustar_{\sub,(j-1)}, \Ustar_{\sub,(j)})^2  > \frac{2 c}{\kappa^2 r}$.
	%
	Then, w.p. at least $ 1- C n^{-10}$,
	\begin{enumerate}[noitemsep]
		\item we can detect the change with a delay of at most $2\alpha$, while ensuring no false detections, i.e., $k_j \le \hat{k}_j \le k_j + 2 \alpha$;
		\item for any $\epsilon > 0$, we can get an $\epsilon$-accurate estimate of the $j$-th subspace with a delay of at most $(T + 3) \alpha$ from $k_j$ (when the subspace changed);
		\item we have the following subspace error bounds:
		let $\U_{\sub,(j)}^{(-1)} = \U_{\sub,(j-1)}:= \U_{\sub,(j-1)}^{(T)}$, and let $\U_{\sub,(j)}^{(\ell)}$, $\ell=0,1,\dots, T$, be the $\ell$-th estimate;
		\begin{align*}
			&\SE(\U_{\sub,(j)}^{(\ell)}, \Ustar_{\sub,(j)}) \le \\ & \left\{
			\begin{array}{ll}
				\SE(\Ustar_{\sub,(j-1)}, \Ustar_{\sub,(j)}) + \epsilon  & \text{ if }  \ell=-1 \\
				(0.7)^{\ell-1} \frac{c}{\kappa^2 r}   & \text{ if }  \ell=0,1,2,\dots T,  \\
				\epsilon  & \text{ if } \ell = T  \\
			\end{array}
			\right.
		\end{align*}
		\\
		Offline PST returns $\hat\X$ that satisfies $\matdist(\hat\X,\Xstar) \le \epsilon$.
	\end{enumerate}
\end{corollary}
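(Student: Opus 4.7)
The plan is to reduce the tracking guarantee to repeated application of the two static claims, Claim \ref{lemm:bounding_U} (initialization) and Claim \ref{lem:descent} (descent), on disjoint $\alpha$-column mini-batches, combined with a concentration analysis of the change-detection statistic $\lambda_{\max}(\Y_{U,det}) - \lambda_{\min}(\Y_{U,det})$. The choice $\alpha = C \kappa^{12} \mu^4 nr^4 / m$ in the corollary is exactly the one that makes $m\alpha$ meet the sample-complexity hypothesis of both static claims; hence each mini-batch used for an update behaves exactly like one static LRPR instance, and all concentration statements carry over verbatim once we invoke sample-splitting across mini-batches.

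Assuming the $j$-th change has been declared at some $\hat{k}_j \in [k_j,\, k_j + 2\alpha]$ (the detection analysis below establishes this), the algorithm skips one $\alpha$-window so as not to straddle $k_j$, then runs the spectral initialization of \eqref{def_YU} on the next $\alpha$-batch, all of whose columns lie in $[k_j,k_{j+1})$ by the hypothesis $k_{j+1}-k_j \ge (T+3)\alpha$. Claim \ref{lemm:bounding_U} applied with $\deltinit = c/(\kappa^2 r)$ gives $\SE(\U_{\sub,(j)}^{(0)}, \Ustar_{\sub,(j)}) \le \deltinit$ w.p.\ at least $1-6n^{-10}$. Then $T = C\log(1/\epsilon)$ update iterations, each on its own fresh $\alpha$-batch and each invoking Claim \ref{lem:descent}, deliver the geometric decay $\SE(\U_{\sub,(j)}^{(\ell)}, \Ustar_{\sub,(j)}) \le 0.7^{\ell} \deltinit$, which falls below $\epsilon$ at $\ell = T$. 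Accounting for the up-to-$2\alpha$ detection delay, the one skipped $\alpha$-batch, and the $T+1$ batches used for initialization plus updates, the worst-case delay from $k_j$ is $(T+3)\alpha$, which gives the $\SE$-bound table in item 3.

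For the detection step, let $\J_\alpha$ denote the current sliding $\alpha$-window. I would split the analysis into three cases: (i) $\J_\alpha \subset [\hat{k}_{j-1}+T\alpha,\, k_j)$, (ii) $\J_\alpha$ contains $k_j$, and (iii) $\J_\alpha \subset [k_j,\, k_{j+1})$. In case (i), using the $\epsilon$-accurate previous estimate $\U_{\sub,(j-1)}$ together with the projector $(I-\U_{\sub,(j-1)}\U_{\sub,(j-1)}{}')$ in the definition of $\Y_{U,det}$, one obtains $\lambda_{\max}(\Y_{U,det}) - \lambda_{\min}(\Y_{U,det}) \lesssim \epsilon^2 \sigmax^2/\alpha$, which stays below $\omega_{det}$, giving no false detections. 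In case (iii), a computation analogous to the static initialization analysis (imitating Lemmas \ref{lemm:bound_Yu}--\ref{lemm:bound_Y-sigma-} with $q$ replaced by $\alpha$) shows $\E[\Y_{U,det}]$ has spectral gap at least $1.5\bigl(\SE(\U_{\sub,(j-1)},\Ustar_{\sub,(j)}) - 2\epsilon\bigr)^2 \sigmin^2/\alpha$; under the hypothesis $\SE(\Ustar_{\sub,(j-1)},\Ustar_{\sub,(j)})^2 > 2c/(\kappa^2 r)$ this exceeds the threshold, and the same sub-exponential Bernstein bound combined with the $\epsilon$-net argument from Sec.~\ref{proof_init_lem} transfers this gap to $\Y_{U,det}$ itself w.h.p.\ once $m\alpha \ge C\kappa^{12}\mu^4 n r^4$. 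Hence any window fully inside the new segment triggers detection, and case (ii) shows the trigger must fire within one additional $\alpha$-step, yielding total detection delay at most $2\alpha$. The offline bound $\matdist(\hat\X,\Xstar) \le \epsilon$ then follows by a backward pass in which each $\td_k$ is recovered by $r$-dimensional RWF using the $\epsilon$-accurate $\U_{\sub,(j)}$, as in Lemma \ref{lem:bounding_distb}.

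The hardest step will be the straddling case (ii) of the detection analysis: one has to control cross contributions in $\E[\Y_U]$ from columns in both $[k_{j-1},k_j)$ and $[k_j,k_{j+1})$, argue they do not cancel the spectral gap, and show that the adversarial placement of $k_j$ inside $\J_\alpha$ still forces detection within two $\alpha$-windows rather than one. A final union bound over the $O(\qfull/\alpha)$ candidate detection windows and the $O(\qfull/\alpha)$ update mini-batches lifts the per-event failure probability $n^{-10}$ to the claimed $1-Cn^{-10}$, with the $\log \qfull$ factor absorbed into the constants via the $m \ge C\log \qfull$ hypothesis.
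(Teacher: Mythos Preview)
Your overall reduction to the static Claims \ref{lemm:bounding_U} and \ref{lem:descent} on disjoint $\alpha$-batches is exactly the paper's strategy, and your case-(i)/case-(iii) detection analysis matches the paper's Lemma \ref{lem:changedet}. The substantive difference is in how you handle the straddling window, case (ii), which you flag as the ``hardest step'' requiring control of cross contributions from both subspaces.

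The paper's point is that this case requires \emph{no} analysis at all. In the $\alpha$-interval containing $k_j$ the algorithm is in detect mode, so no subspace update is performed; you simply make no claim about whether the detection statistic crosses the threshold there. If it does, fine: $\hat{k}_j$ is the end of that window, which is already $\ge k_j$. If it does not, then the \emph{next} $\alpha$-window lies entirely in $[k_j,k_{j+1})$, so your case-(iii) bound (Lemma \ref{lem:changedet}, item 1) guarantees the statistic exceeds $\omega_{det}$ there. Either way $k_j \le \hat{k}_j \le k_j + 2\alpha$, with no need to bound mixed-subspace cross terms in $\E[\Y_U]$. The design choice that the algorithm does not update during detect mode is precisely what lets you avoid ever proving initialization or descent on a batch that straddles a change time.

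A smaller correction: the algorithm does not ``skip one $\alpha$-window'' after detection. Since case (i) rules out false alarms, $\hat{k}_j \ge k_j$ automatically, so the first update batch $[\hat{k}_j, \hat{k}_j+\alpha)$ already lies in $[k_j,k_{j+1})$ and initialization can proceed immediately. The $(T+3)\alpha$ delay decomposes as at most $2\alpha$ for detection plus $(T+1)\alpha$ for one initialization batch and $T$ update batches, with no skipped window.
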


We provide a proof sketch in Appendix \ref{proof_pst}.

The above result shows that, if the subspace remains constant for  at least $\alpha \log(1/\epsilon)$ time instants, and if the amount of subspace change (largest principal angle of subspace change) is of order $1/\sqrt{r}$ or larger, then we can both detect the change and track the changed subspace to $\epsilon$ error within a delay of order $\alpha \log 1/\epsilon$. Moreover, for only at most $3\alpha$ time instants after a change, the subspace error does not reduce and is essentially bounded by the amount of change. After this, it decays exponentially every $\alpha$ time instants.

Notice from the expression for $\alpha$ that, if we pick the smallest allowed value of $m$, then the required $\alpha$ (and hence the required delays) will be large. However, we are allowed to tradeoff $m$ and $\alpha$. If we let $m$ grow linearly with $n$, then we will only need $\alpha \approx r^4$, which is, in fact, close to the minimum required delay of $r$. This also matches what is seen in existing works on provable subspace tracking (ST) in other settings (e.g., robust ST, ST with missing data, or streaming PCA with missing data) \cite{rrpcp_dynrpca,rrpcp_icml,streamingpca_miss}. These are able to allow close to optimal detection and tracking delays but all these assume that $m$ increases linearly with $n$.
We can also pick any value of $m$ in between the two extremes of $m = C r$ or $m=C n$. For example, if $m = C n/r$, then $\alpha = r^5$ and so on.




\begin{algorithm}[ht!]
	{
		\caption{PST: detect and track large subspace changes}
		\label{pst_th}
		\begin{algorithmic}[1]		
			\STATE Set $r$ equal to the largest index $j$ for which $\lambda_j(\Y_U) - \lambda_n(\Y_U) \ge \omega$. 	
			\STATE $\hat{k}_0 \gets 0, j \gets 0, \ell \gets 0$
			\STATE $\mbox{Mode} \gets \mbox{update}$
			\FOR{$k \geq 0$}
			\IF{$\mbox{Mode} = \mbox{update}$}
			\IF{$k = \hat{k}_j + (\ell+1) \alpha $}
			\IF{$\ell =0$}
			\STATE $\U_{\sub,(j)}^{\ell} \gets$ top $r$ singular vectors of $\Y_U$.
			\ENDIF
			\STATE  $\bhat_\tau \gets RWF((\y_{\tau}, \U_{\sub,(j)}^{\ell}{}' \A_{\tau}), T_{RWF,\ell})$, for $\tau \in [k-\alpha + 1, k]$
			\STATE QR decomposition $\hat{\B} \qreq  \R_B \B $
			\STATE $\Chat_\tau \gets \text{Phase}\left(\A_\tau'\U_{\sub,(j)}^{\ell}\bhat_\tau\right)$, for $\tau \in [k-\alpha + 1, k]$
			\STATE $\Uhat_{\sub,(j)}^{\ell+1} \gets \arg\min_{\U} \sum_{\tau \in [k-\alpha + 1, k]}\| \Chat_\tau \y_\tau - \A_\tau{}' \U \bhat_{\tau}\|^2$
			\STATE  QR decomposition $\Uhat_{\sub,(j)}^{\ell+1} \qreq  \U_{\sub,(j)}^{\ell+1} \R_U $
			\STATE $\ell \gets \ell+1$
			\ENDIF
			\IF{$\ell = T$}
			\STATE $\U_{\sub,(j)} \gets \U_{\sub,(j)}^{T}$, $\mbox{Mode} \gets  \mbox{detect}$
			\ENDIF
			\ENDIF
			\IF{$\mbox{Mode} =  \mbox{detect}$ }
			\IF{$ \lambda_{\max}(\Y_{U,det,big}) - \lambda_{n-r} ( \Y_{U,det,big} ) \geq \omega_{det} $}
			\STATE $j\gets j+1, \hat{k}_j \gets k, \ell \gets 0$, $\mbox{Mode} \gets  \mbox{update}$
			\ENDIF
			\ENDIF
			\STATE Output $\U_{\sub,(j)}^\ell$
			\ENDFOR
			
			Offline PST: For each $k \in [\hat{k}_j, \hat{k}_{j+1})$,  output $\xhat_k =  \U \hat{\tilde{\dd}}^*_{k}$ where $ \hat{\tilde{\dd}}^*_{k}$ is a (at most) $2r$-length vector obtained by RWF applied on $\{ \y_{ik}, (\U'\a_{ik}), i=1,2,\dots, m \}$ with $\U = basis([\U_{\sub,(j)},\U_{\sub,(j+1)}])$. Here $basis(\U_1, \U_2)$ means a matrix with orthonormal columns that span the subspace spanned by the columns of $\U_1$ and  $\U_2$. We need to use the union of both subspace estimates because the actual subspace change time, $k_{j+1}$, is not known. Corollary \ref{thm:pst} implies that, whp, it is contained in $[\hat{k}_j, \hat{k}_{j+1})$.
		\end{algorithmic}
	}
\end{algorithm}

\subsubsection{Related Work}
For  Phaseless Subspace Tracking (PST) the only works before this work was our first huristic versions \cite{lrpr_globalsip}, and \cite{lrpr_icassp19}.
Other subspace tracking (ST) problems that have been extensively studied include dynamic compressive sensing \cite{stab_jinchun_jp} (a special case of ST where the subspace is defined by the span of a subset of $r$ vectors from a known dictionary matrix), dynamic robust PCA (or robust ST), see \cite{rrpcp_dynrpca,rrpcp_icml} and references therein, streaming PCA with missing data \cite{streamingpca_miss,sslearn_jmlr}, and ST with missing data \cite{grouse,petrels,local_conv_grouse,chi_review,rrpcp_tsp19}. 
In terms of works with complete provable guarantees, there is the nearly optimal robust subspace tracking via recursive projected compressive sensing approach  \cite{rrpcp_dynrpca,rrpcp_icml,rrpcp_tsp19} and its precursors; recent papers on streaming PCA with missing data \cite{streamingpca_miss,sslearn_jmlr}, and older work on dynamic compressive sensing (CS) \cite{stab_jinchun_jp}. For robust ST, the problem setting itself implies $m=n/2$. In the streaming PCA case, the availability of  $m= \rho n$ measurements, with $\rho <1$, is assumed. This is why both achieve close to optimal tracking delays (at least when the added unstructured noise is nearly zero). As noted earlier, our method can also achieve a delay of order $r^4$ if we let $m$ grow linearly with $n$.

Dynamic CS (like basic CS) is able to detect support changes (with sufficiently nonzero magnitude) immediately even  with a small value of $m = C r \log n$ measurements; here $r$ is the sparsity level (support size). This is because it is a much simpler special case of ST: in this case, one just needs to be finding the correct subset of basis vectors from a large provided set (dictionary matrix).



\subsection{Numerical experiments}
This experiment evaluates the PST algorithm (Algorithm \ref{pst_th}) and PST-all algorithms from Sec. \ref{sec:pst}. We generate the true data for the first subspace $\Xstar_0 = \Ustar_{\sub, (0)} \D_0^*$ where $\Ustar_{\sub, (0)} \in \mathbb{R}^{n \times r}$ with $n = 300$, $r = 2$ is generated by orthonormalizing the columns of a $n \times r$ iid standard normal matrix. The entries of $\D_0^* \in \mathbb{R}^{r \times t_1}$ with $t_1 = 2992$ are also generated from an i.i.d. standard normal distribution. We generate the true data from the second subspace similarly and set $\Xstar_1 = \Ustar_{\sub,(1)} \D_1^*$ and we set $q = 6000$. Notice that $\kappa \approx 1$. The subspace $\Ustar_{\sub, (1)}$ is generated using the idea of \cite{rrpcp_icml} as $\Ustar_{\sub, (1)} = e^{-\gamma \bm{M}} \Ustar_{\sub, (0)}$ in order to control the subspace error. Here $\bm{M}$ is a skew-symmetric matrix and $\gamma$ controls the amount of subspace change. We study two cases in which we set $\gamma = 0.08, 0.001$ which roughly translates to $\SE(\Ustar_{\sub, (0)}, \Ustar_{\sub, (1)}) = 0.8, 0.01$. We generate the measurement matrices $\A_k (\in \mathbb{R}^{n \times m}) \overset{i.i.d.}{\sim} \mathcal{N}(0, \I)$ with $m=100$ for $ i = 1, \cdots q$. We then implemented PST (Algorithm \ref{pst_th}) and PST-all. PST requires large-enough change in order to ensure good results, and PST-all which works even with small changes.
We chose the algorithm parameters as follows. We set $\alpha = 250$ and $L = 8$. For the detection, and initialization steps of both algorithms we set $\initm = m$. We set the threshold for detection, $\omega = 0.6$ through cross-validation. The results for the two algorithms are shown in Fig. \ref{fig:pst}. Notice that for the small change case, since PST is always in the detect mode, it does not improve the estimation error whereas PST-all does. However, when the change is large enough, both algorithms converge to a small error. The results are averaged over $100$ independent trials.

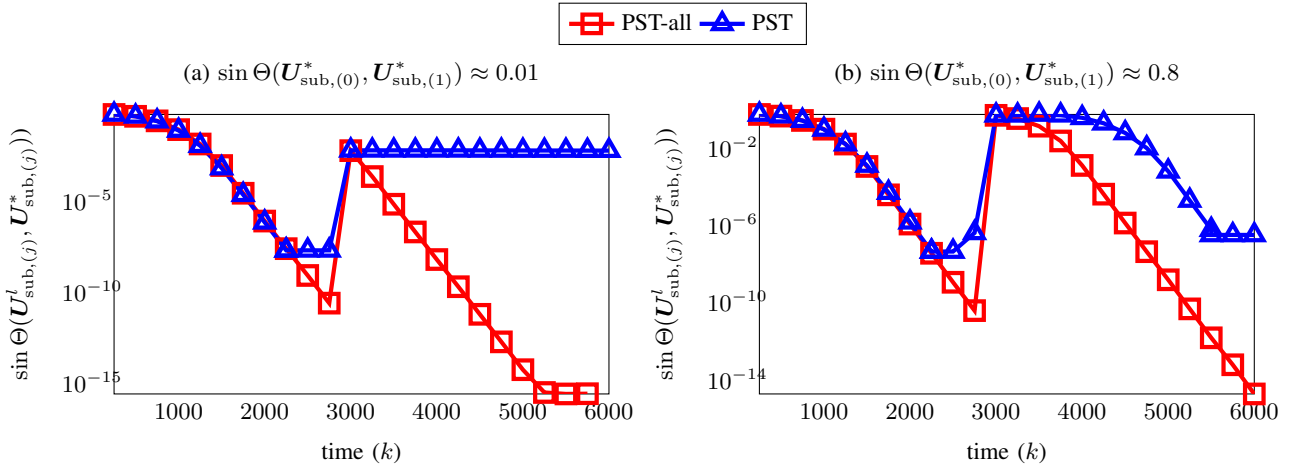
\begin{figure*}[t!]
	\centering
	\begin{tikzpicture}
	\begin{groupplot}[
	group style={
		group size=2 by 1,
		horizontal sep=2cm,
	},
	my stylecompare,
	enlargelimits=false,
	width = .45\linewidth,
	height=5.3cm,
	enlargelimits=false,
	]
	\nextgroupplot[
	legend entries={
		PST-all,
		PST
	},
	legend style={at={(1.4,1.4)}},
	legend columns = 2,
	legend style={font=\small},
	ymode=log,
	xlabel={\small{time ($k$)}},
	ylabel={\small{$\SE(\U_{\sub, (j)}^l, \Ustar_{\sub, (j)})$)}},
	title={\small{(a) $\SE(\Ustar_{\sub, (0)}, \Ustar_{\sub, (1)}) \approx 0.01$}},
	]
	\addplot [red, line width=1.6pt, mark=square,mark size=3.5pt, select coords between index={0}{22}] table[x index = {0}, y index = {1}]{\pstsmall};
	\addplot [blue, line width=1.6pt, mark=triangle,mark size=4pt] table[x index = {2}, y index = {3}]{\pstsmall};
	\nextgroupplot[
	ymode=log,
	xlabel={\small{time ($k$)}},
	ylabel={\small{$\SE(\U_{\sub, (j)}^l, \Ustar_{\sub, (j)})$)}},
	title={\small{(b) $\SE(\Ustar_{\sub, (0)}, \Ustar_{\sub, (1)}) \approx 0.8$}},
	]
	\addplot [red, line width=1.6pt, mark=square,mark size=3.5pt, select coords between index={0}{23}] table[x index = {0}, y index = {1}]{\pstlarge};
	\addplot [blue, line width=1.6pt, mark=triangle,mark size=4pt] table[x index = {2}, y index = {3}]{\pstlarge};
	
	\end{groupplot}
	\end{tikzpicture}
	\vspace{-.2cm}
	\caption{Plot of subspace error versus time at each $\alpha$ frames. Notice that for the cases where $\SE(\Ustar_{\sub, (0)}, \Ustar_{\sub, (1)}) = 0.8$ both algorithms are able to detect and track changes whereas when $\SE(\Ustar_{\sub, (0)}, \Ustar_{\sub, (1)}) = 0.01$ only the PST-all algorithm works. We perform the experiment for $100$ independent trials, and plot the average taken over the best $90$ trials.}
	\label{fig:pst}
\end{figure*}

\section{Conclusions and Future Work} \label{conclude}
This work introduced the first simple, fast, and provably correct, algorithm for Low Rank Phase Retrieval  -- low-rank matrix recovery from different (mutually independent) column-wise phaseless linear projections -- AltMinLowRaP. Moreover, since, even the linear version of our problem has not been studied, this work also provides the first fast and provable solution to the linear version, ``Compressive'' or ``Sketched PCA''.
AltMinLowRaP relies on a careful spectral initialization followed by alternating minimization. We showed that its required sample complexity is about $r^3$ times the order-optimal value of $nr$. We also developed its dynamic extension that is relevant for datasets where we would like to develop a mini-batch solution that recovers the current sub-matrix of $\Xstar$ without waiting for all the measurements of all the signals (columns) to arrive.


In ongoing work we are (i) exploring how to remove the sample-splitting requirement by studying an alternating gradient descent solution, and attempting to borrow the leave-one-out ideas from \cite{pr_mc_reuse_meas}; (ii) how to reduce the dependence of our sample complexity on $r$ and on $\kappa$. Some thoughts are provided in Sec. \ref{discuss}; and (iii) how to analyze an easy modification of AltMinLowRaP to get a better guarantee for the linear version of our problem. The algorithm modification is easy, it just involves replacing the standard PR step for recovering $\tb_k$'s by a simpler LS step, and of course remove the phase/sign estimation step before updating $\U$. In terms of analysis, (a) we can possibly simplify the analysis of the initialization step because in the linear case, $\E[\a_ik \y_ik] = \xstar_k$; and (b) in the iterations, there will be no phase error term, $\mathrm{Term2}$, and hence, no need for Cauchy-Schwarz. This latter change itself will reduce the sample complexity to $nr^3$ instead of $nr^4$.

Open questions for future work include (i) study if we can exploit the right incoherence assumption in the algorithm itself, for example, by using a projected GD approach inspired by \cite{rpca_gd}; (ii) can phaseless LRMS be solved (this would be the other possible LRPR problem alluded to in the introduction), and (iii) develop a fast algorithm and a guarantee for exploiting both low-rank (as we do) and column-wise sparsity. This type of modeling been used very successfully in the MRI literature to come up with practical algorithms to reduce the sample complexity empirically, see for example, \cite{dyn_mri1,dyn_mri2}. It has also been studied theoretically in the linear setting  \cite{lee2017near}.
(iv) Another open question is how to analyze the improved tracking algorithm PST-all that we currently only empirically evaluate. In experiments, it is clearly much better than the simpler version we analyze.

\appendices
\counterwithin{theorem}{section}

\section{Proofs of the Lemmas from Sec. \ref{proof_main}}\label{proof_lems}	

\subsection{Simple facts for various proofs}\label{facts}
Our proofs will use the following facts: for two arbitrary matrices $\A, \bm{H}$,

\begin{enumerate} 
	\item $\sigma_{\max}(\A + \bm{H}) \le \sigma_{\max}(\A)  + \|\bm{H} \|$
	\item $\sigma_{\min}(\A + \bm{H}) \ge \sigma_{\min}(\A) -  \| \bm{H} \|$.
	\item $\sigma_{\min}(\A  \bm{H}) \ge \sigma_{\min}(\A) \sigma_{\min}(\bm{H})$.
	\item For two basis matrices, $\U_1, \U_2$, $\sigma_{\min}^2(\U_1{}' \U_2) = 1 - \SE^2(\U_1, \U_2)$.
	
	\item For any matrix $\bP$, $\|\bP\M\|_F \leq \|\bP\| \|\M\|_F$ and  $\|\M\bP\|_F \le \|\bP\| \|\M\|_F$.
	
	\item For an invertible matrix $\bP$, \\
	$\|\M\|_F = \|\bP^{-1}\bP \M\|_F \leq \|\bP^{-1}\|\|\bP\M\|_F = \frac{1}{\sigma_{\min}\left( \bP\right)} \|\bP\M\|_F$.
	
\end{enumerate}

The following lemma is a simple but useful modification of [Lemma 5.16]\cite{vershynin}. The proof follows by combining Lemma 2.7.7. and Theorem 2.8.1. of \cite{versh_book}.
The sub-Gaussian norm for a random vector $X$, in this lemma, can be defined as follows:
\begin{align*}
	& \|X\|_{\psi_2} = \sup_{x \in \S^{n-1}} \|\langle X, x \rangle\|_{\psi_2},
\end{align*}
where $\|z\|_{\psi_2}$ is the sub-Gaussian norm of a scalar $z$ \cite[Definition 5.7]{vershynin}.
\begin{lemma}
	\label{ProductsubG}
	Let $X_{i}, Y_{i}$ be independent sub-Gaussian random variables with sub-Gaussian norm $K_{X_i}$ and $K_{Y_i}$ respectively and with $\E[X_iY_i] = 0$. Then
	\begin{align*}
		&	\Pr \left\lbrace   |\sum_{i}  X_iY_i| \geq t \right\rbrace \\
		& \leq 2\exp{\left( -c \min{\left(\frac{t^2}{\sum_i K_{X_i}^2 K_{Y_i}^2 }, \frac{t}{\max_i{|K_{X_i}K_{Y_i} |}} \right) } \right)}
	\end{align*}
	When $X_i = Y_i$, this simplifies to Lemma 5.16 of \cite{vershynin}.
\end{lemma}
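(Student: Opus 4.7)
The proof plan is a clean two-step reduction to the sub-exponential Bernstein inequality. First, I recognize that each summand $Z_i := X_i Y_i$ is a \emph{sub-exponential} scalar. By Lemma 2.7.7 of \cite{versh_book}, the product of any two sub-Gaussian variables (not necessarily independent of each other within a pair) is sub-exponential, with
\[
\|X_i Y_i\|_{\psi_1} \le \|X_i\|_{\psi_2}\,\|Y_i\|_{\psi_2} \le K_{X_i} K_{Y_i}.
\]
This immediately furnishes the per-term sub-exponential norm that will appear in the final tail bound.

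Second, I would use the independence hypothesis to conclude that the $Z_i$ are mutually independent: independence across the index $i$ is what carries over, and this is precisely what is needed to apply a Bernstein-type bound. Combined with the given centering $\E[Z_i] = \E[X_i Y_i] = 0$, this makes $\{Z_i\}$ a collection of independent, mean-zero, sub-exponential random variables to which Theorem 2.8.1 of \cite{versh_book} (sub-exponential Bernstein) directly applies. That theorem yields
\[
\Pr\Bigl\{\bigl|\textstyle\sum_i Z_i\bigr| \ge t\Bigr\} \le 2\exp\!\left(-c \min\!\left(\frac{t^2}{\sum_i \|Z_i\|_{\psi_1}^2},\ \frac{t}{\max_i \|Z_i\|_{\psi_1}}\right)\right),
\]
and substituting the bound from the previous step produces the claim exactly.

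There is no genuine obstacle here; the only fine point is to invoke the product-of-sub-Gaussians-is-sub-exponential fact in a form that does not require independence between $X_i$ and $Y_i$ within a pair, so that the lemma remains usable in the applications later in the paper (where, for instance, one encounters terms in which the two factors share a common Gaussian design vector). The $X_i = Y_i$ reduction to Vershynin's Lemma~5.16 of \cite{vershynin} is then immediate, since in that case both tail parameters coincide with the squared sub-Gaussian norm $K_{X_i}^2$.
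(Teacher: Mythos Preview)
Your proposal is correct and matches the paper's own argument essentially verbatim: the paper states that the proof ``follows by combining Lemma 2.7.7 and Theorem 2.8.1 of \cite{versh_book},'' which is exactly the two-step reduction you describe. Your observation that Lemma~2.7.7 does not require within-pair independence is also apt, since the paper's applications frequently take $X_{ik}$ and $Y_{ik}$ to be functions of the same Gaussian vector $\a_{ik}$.
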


\subsection{Proof of the lemmas for Claim \ref{lemm:bounding_U}} \label{sec:init_lem_proofs}
In this section, we let $\a_{ik}: = \a_{ik}^{(0)}$ and $\y_{ik}:=\y_\ik^{(0)}$.
\begin{proof}[Proof of Lemma \ref{lemm:bound_Yu}]
	Observe that we will be done if we can show that, whp, $\frac{1}{mq}\sum_{ik} \left( \a_{ik}{}' \x_k^*\right)^2$ lies in the interval $[(1-\epsilon_1) \|\X^*\|_F^2/q, (1+\epsilon_1) \|\X^*\|_F^2/q]$.
	Using Lemma \ref{ProductsubG}, with $ K_{X_{ik}} = K_{Y_{ik}} =  \|\x_k\|$, with probability more than $ 1 - 2\exp\left(\frac{ -C\epsilon_1^2 m q}{ \mu^2 \kappa^2 } \right)$, we have
	\begin{align*}
		&\lvert \sum_{ik} \left( \a_{ik}{}' \x_k^*\right)^2 - m\|\X^*\|_F^2\rvert\leq \epsilon_1 m \|\X^*\|_F^2 .
	\end{align*}
	Details for obtaining this bound: using  $\sum_{k}\|\xstar_k\|^4 \le \max_k \|\xstar_k\|^2 \sum_{k}\|\xstar_k\|^2$ and right incoherence,
	\begin{align*}
		& \frac{t^2}{\sum_{ik} K_{X_{ik}}^4}  \ge \frac{ \epsilon_1^2 m^2 \|\X^*\|_F^4}{m \max_{k}\|\x_k^*\|^2 \|\X^*\|_F^2}
		\geq \frac{ \epsilon_1^2 m q}{ \mu^2 \kappa^2 }, \\  
		&	 \frac{t}{\max K_{X_{ik}}^2}  = \frac{ \epsilon_1 m \|\X^*\|_F^2}{\|\x_k^*\|^2 }\geq
		\frac{ \epsilon_1 m q}{\mu^2 \kappa^2 } .
	\end{align*}
\end{proof}

\begin{proof}[Proof of Lemma \ref{lemm:bound_Sigma+-}]
	It is easy to see that
	\begin{align*}
		&\|\E\left[\Y_+\right] - \E\left[\Y_-\right] \| \leq \frac{1}{q} \sum_{k}\left(\beta_{1,k}^+ -  \beta_{1,k}^-\right) \|\xstar_k\|^2 \\
		&\qquad + \frac{1}{q} \sum_{k} \left(\beta_{2,k}^+ - \beta_{2,k}^-\right) \|\xstar_k\|^2.
	\end{align*}
	Recall $\gamma_k = {9\|\Xstar\|_F^2\ \mu^2 \kappa^2}/{(q \|\xstar_k\|^2)}$. Using $x^3e^{-x^2/2}\leq 3\sqrt{3}e^{-3/2}$, we have
	\begin{align*}
		&\beta_{1,k}^+ -  \beta_{1,k}^- = \E\left[\left(\xi^4 - \xi^2 \right)\indic_{\{ (1-\epsilon_1)\gamma_k  \leq \xi^2 \leq (1+\epsilon_1)\gamma_k  \}} \right]\\
		&= \frac{2}{\sqrt{2\pi}} \int_{\sqrt{(1-\epsilon_1)\gamma_k}}^{\sqrt{(1+\epsilon_1)\gamma_k}}  x^2\left(x^2-1\right) e^{-x^2/2}dx \\
		&\leq \frac{2}{\sqrt{2\pi}} \int_{\sqrt{(1-\epsilon_1)\gamma_k}}^{\sqrt{(1+\epsilon_1)\gamma_k}}  x^4 e^{-x^2/2} dx\\
		& \leq \frac{6\sqrt{3}e^{-3/2}}{\sqrt{2\pi}} \int_{\sqrt{(1-\epsilon_1)\gamma_k}}^{\sqrt{(1+\epsilon_1)\gamma_k}}  x dx \\
		&= \frac{6\sqrt{3}e^{-3/2}}{\sqrt{2\pi}}  \gamma_k \epsilon_1 \leq \gamma_k \epsilon_1.
	\end{align*}
	Similarly, using $xe^{-x^2/2}\leq e^{-1/2}$,
	\begin{align*}
		&\beta_{2,k}^+ -  \beta_{2,k}^- = \E\left[\xi^2 \indic_{\{ (1-\epsilon_1)\gamma_k  \leq \xi^2 \leq (1+\epsilon_1)\gamma_k  \}} \right]\\
		&= \frac{2}{\sqrt{2\pi}} \int_{\sqrt{(1-\epsilon_1)\gamma_k}}^{\sqrt{(1+\epsilon_1)\gamma_k}}  x^2e^{-x^2/2}dx \\
		&\leq \frac{2e^{-1/2}}{\sqrt{2\pi}} \int_{\sqrt{(1-\epsilon_1)\gamma_k}}^{\sqrt{(1+\epsilon_1)\gamma_k}}  xdx \\
		&= \frac{2e^{-1/2}}{\sqrt{2\pi}} \gamma_k \epsilon_1 \leq \gamma_k \epsilon_1.
	\end{align*}
	Therefore,
	\begin{align*}
		&\|\E\left[\Y_+\right]- \E\left[\Y_-\right]\|  \leq \frac{\epsilon_1}{q} \sum_k \gamma_k \|\xstar_k\|^2\\
		&= 9 \frac{\epsilon_1 \mu^2 \kappa^2}{q}  \sum_k \frac{\|\Xstar\|_F^2}{q} =  9 \epsilon_1 \frac{\mu^2 \kappa^2 \|\Xstar\|_F^2}{q}.  
	\end{align*}
	
	To lower bound $\beta_{1,k}^-$, we will use right incoherence which implies that $\gamma_k \ge 9$.
	\begin{align*}
		\beta_{1,k}^-  &= \E \left[\xi^2\left(\xi^2 -1\right)\indic_{\{\xi^2 \leq (1-\epsilon_1)\gamma_k  \}} \right]\\
		&= \E\left[\left(\xi^4-\xi^2\right) \right] -  \E \left[\xi^2\left(\xi^2 -1\right)\indic_{\{\xi^2 \geq (1-\epsilon_1)\gamma_k  \}} \right]\\
		&= 2 - 2\int_{\sqrt{(1-\epsilon_1)\gamma_k}}^{\infty} x^2(x^2-1) \frac{1}{\sqrt{2\pi}}e^{-x^2/2}dx \\
		&\geq 2 - \frac{2}{\sqrt{2\pi}}\int_{\sqrt{(1-\epsilon_1)\gamma_k}}^{\infty} x^4 e^{-x^2/2}dx \\
		&\geq 2 -\frac{7}{\sqrt{2\pi}}\int_{\sqrt{(1-\epsilon_1)\gamma_k}}^{\infty}x e^{-x^2/4}dx \\
		&=2-\frac{14}{\sqrt{2\pi}}\exp\left(-(1-\epsilon_1)\gamma_k/4  \right) > 1.5,
	\end{align*}
	where we used the fact that $x^3e^{-x^2/4} \leq 3.5$ for any $x$; $\gamma_k \ge 9$ (follows by right incoherence); and $\epsilon_1<0.01$.
\end{proof}


\begin{proof}[Proof of Lemma \ref{lemm:bound_Y-sigma-}]
	Let us define
	\begin{align*}
		\w_{ik} =  \lvert   \a_{ik}{}' \x_k^* \rvert \a_{ik}\indic_{ \left\lbrace \left( \a_{ik}{}' \x_k^*\right)^2 \leq \frac{9\left( 1-\epsilon\right) \|\X^*\|_F^2 \mu^2 \kappa^2 }{q}\right\rbrace}.
	\end{align*}
	As argued in \cite{lrpr_tsp}, which itself borrows the key idea from \cite{twf}, we can show that the $\w_\ik$s are sub-Gaussian random variables with sub-Gaussian norm $K = C \mu \kappa \|\X^*\|_F /\sqrt{q}$.
	Notice that we have defined $\Y_U$ differently in this paper (in order to be able to exploit concentration over $mq$) as compared to that in \cite{lrpr_tsp} and hence only the above argument is similar.
	
	Observe that
	\\ $mq \|\Y_- - \E\left[\Y_-\right]  \| = \max_{\z:\|\z\|=1} | \z' \sum_{ik} (\w_{ik} \w_{ik}{}' - \E[\w_{ik} \w_{ik}{}]) \z |$.
	
	First consider a fixed unit vector $\z$. Observe that $\z' \w_{ik}$ is sub-Gaussian with sub-Gaussian norm  $K = C \mu \kappa  \|\X^*\|_F /\sqrt{q}$. Thus, using Lemma \ref{ProductsubG} with $t = \epsilon_2 m \mu^2 \kappa^2 \|\X^*\|_F^2$, and $K_{X_{ik}} = K_{Y_{ik}}  =  \frac{ \mu \kappa \|\X^*\|_F }{\sqrt{q}}$, we can conclude that w.p. at least $\geq 1 - 2\exp\left(-c\epsilon_2^2 mq \right)$,
	\begin{align*}
		& \lvert \z' ( \sum_{ik}\w_{ik} \w_{ik}{}' -  mq\E\left[\Y_-\right] ) \z\rvert \leq \epsilon_2 m \mu^2 \kappa^2 \|\X^*\|_F^2,
	\end{align*}
	After this, we can use a standard epsilon-net argument to extend the bound to all unit vectors $\z$.
	With it, we can conclude that, w.p. at least $1 - 2\exp\left(n\log 9  -c\epsilon_2^2  mq \right)$, 
	\begin{align*}
		& \|\Y_- - \E\left[\Y_-\right] \| \leq \frac{1.5 \epsilon_2 \mu^2 \kappa^2  \|\X^*\|_F^2}{q} 
	\end{align*}
	%
\end{proof}

\subsection{Clarifying the sign inconsistency issue}\label{sign_issue}
Recall that we had defined $\g_k^t := (\U^t)' \xstar_k$ in \eqref{def_g}.
Since the solution of phase retrieval always comes with a phase (sign) ambiguity, at each iteration $t$, for each $k$, the output of RWF, $\bhat_k^t$, may be closer to either $\g_k^t$ or $-\g_k^t$.
This is what decides whether $\dist(\g_k^t,\bhat_k^t)$ equals $\|\g_k^t - \bhat_k^t\|$ or $\|\g_k^t + \bhat_k^t\|$. However, bound both in each proof is cumbersome. Instead we can proceed as follows. {\em Re-define} $\g_k^t$ as%
\[
\g_k^t = \begin{cases}
+(\U^t)' \xstar_k \ \text{if} \ \| (\U^t){}'\xstar_k - \bhat_k^t\| \le \| (\U^t{})'\xstar_k + \bhat_k^t\| \\
- (\U^t)' \xstar_k \ \text{otherwise}
\end{cases}
\]
and define the matrix
\[
\G^t:= [\g_1^t, \g_2^t, \dots, \g_q^t].
\]
With these new definitions,
$
\dist(\g_k^t,\bhat_k^t) = \|\g_k^t - \bhat_k^t\|
$
and $\matdist(\G^t, \hat\B^t) = \|\G^t - \hat\B_t\|_F$.

As an aside, we should point out that, even if some columns of a matrix change sign (are multiplied by $(-1)$), its singular values do not change. Thus, the minimum singular value of $\G^t$ remains the same with or without the above re-definition.

We need to do something similar to the above for $\xstar_k$'s as well. 
%
Define ${\tilde\xstar_k}^t = \xstar_k$ if $\|\xhat_k^t - \xstar_k \| \leq  \|\xhat^t_k + \xstar_k \|$ and ${\tilde\xstar_k}^t = - \xstar_k$ otherwise. Define the corresponding matrix $\tilde\Xstar^t$. 

Clearly $\matdist(\tilde\Xstar^t, \Xstar) = 0$. So, in the rest of the writing in this section, to reduce notation, we will {\em re-define}
\[
\Xstar  := \tilde\Xstar^t.
\]
With this, we can define the error/perturbation in $\xhat_k$ as just
\[
\h_k:= \xhat^t_k - \xstar_k
\]
and we have $\dist(\xhat_k^t, \xstar_k) = \|\xhat_k^t - \xstar_k \|  = \|\h_k\|$.

\subsection{Proof of Lemmas \ref{lem:bounding_distb} and \ref{incoherencebhat}} \label{proof_bhat_lems}
In this section,  we let $\a_{ik}: = \a_{ik}^{(t)}$ and $\y_{ik}:=\y_\ik^{(t)}$. Also, everywhere below, we remove the superscripts $^t$ for ease of notation.
Recall that $\xstar_k = \U \g_k + \e_k$ with $\e_k := (\I - \U \U') \xstar_k$. 

\begin{proof}[Proof of Lemma \ref{lem:bounding_distb}]
	To estimate $\b_k$, we first need to estimate $\g_k$ which requires measurements of the form $\a_{ik}{}' \U \g_k $. Our measurements satisfy
	\[
	\y_\ik = \lvert \a_{ik}{}' \U g_k \rvert + \nu_{ik},\] where $\nu_{ik} =\lvert \a_{ik}{}'  \x_k^* \rvert - \lvert \a_{ik}{}' \U \g_k \rvert$ is the noise. We use these to obtain the estimate $\bhat_k$ using RWF. By Theorem 2 of \cite{rwf}, if $m \ge Cr$, w.p. at least $1- \exp(-c m)$,
	\begin{align*}
		& \dist\left( \g_k, \bhat_k \right) \leq \frac{\|\mathbf{\nu}_k\|}{\sqrt{m}} + \left( 1-c_1 \right)^{T_{RWF,t}} \|\g_k\|,
	\end{align*}
	where $c_1$ is a constant less than one.
	For our problem,
	\begin{align*}
		&  |\nu_{ik}| \leq \lvert\ \lvert\a_{ik}{}' \U \g_k + \a_{ik}{}' \e_k   \rvert - \lvert \a_{ik}{}' \U \g_k \rvert \ \rvert\leq \lvert \a_{ik}{}' \e_k   \rvert\\
		& \|\bm{\nu}_k\|^2 = \sum_{i} \nu_{ik}^2 = \sum_{i}  \lvert \a_{ik}{}' \e_k  \rvert^2.
	\end{align*}
	Clearly, $\E[\|\bm{\nu}_k\|^2 ] = m \| \e_k \|^2$ and $ \| \e_k\|^2 \le \SE(\U,\Ustar) \|\tb_k\|$.
	Using Lemma \ref{ProductsubG} with $t = m  \delta_b   \|\e_k\|^2$, $K_{X_{i}} = K_{Y_{i}} = \|\e_k\|$, and summing over $i=1,2,\dots,m$, we conclude that,
	w.p. at least $1-\exp\left(  -c\delta_b ^2 m\right)$,
	\begin{align*}
		{\|\mathbf{\nu}_k\|^2}  \leq m (1+\delta_b) \|\e_k\|^2 \le m (1+\delta_b) \deltapt^2 \|\tb_k\|^2.
	\end{align*}
	where the last inequality used $\|\e_k\| \le \deltapt \|\tb_k\|$.
	Thus, using the above and $\|\g_k\| \le \|\tb_k\|$,
	\[
	\dist\left( \g_k, \bhat_k \right) \leq \sqrt{1+\delta_b} \deltapt \|\tilde{\b}_k^*\| + \left( 1-c_1 \right)^{T_{RWF,t}}  \|\tilde{\b}_k^*\|.
	\]
	By setting $T_{RWF,t}$  so that $(1-c)^{T_{RWF,t}} \le \deltapt$, we get that $\dist\left( \g_k, \bhat_k \right) \le C \deltapt \|\tilde{\b}_k^*\| = C \deltapt \|\xstar\|$ with $C=(\sqrt{1+\delta_b}+1) $.
	The above bound holds w.p. at least $1- \exp\left(  -c\delta_b ^2 m\right)$ for a given $k$. By union bound, it holds for all $k=1,2,\dots,q$, w.p. at least $1- q\exp\left(  -c\delta_b ^2 m\right)$. Hence, with this probability,
	\[
	\matdist(\G, \hat\B) \le C \deltapt \| \Xstar\|_F .
	\]
	For proving the third claim, recall that $\xhat_k = \U \bhat_k$ and $\xstar_k = \U \g_k + \e_k$. Let $\h_k: = \xstar_k - \xhat_k$. We can rewrite $\h_k$ as $\h_k = \xstar_k - \U \g_k + \U \g_k - \U \bhat_k$. Thus, by triangle inequality, and using $\|\e_k\| \le \SE(\U,\Ustar) \|\g_k\| \le \deltapt \|\tb_k\|$,
	\begin{align*}
		\|\h_k\| \le \|\e_k\| + \|\U\| \|\g_k - \bhat_k\| \le (1+C) \deltapt \|\tb_k\|.
	\end{align*}
\end{proof}

\begin{proof}[Proof of Lemma \ref{incoherencebhat}]
	%
	Recall that $\hat\B \qreq \R_B \B$ and so $\b_k = \R_B^{-1} \bhat_k$.
	Using Lemma \ref{lem:bounding_distb}, $\|\g_k\| \le \|\tb_k\|$, and right incoherence (which implies that $\|\tb_k\|^2 \le \sigmax^2 \mu^2 r/q$),
	\begin{align*}
		\|\b_k\|  & = \| \R_B^{-1} \left(\g_k -\bhat_k + \g_k \right) \|\\
		& \leq \| \R_B^{-1}\| \left( \dist(\bhat_k, \g_k) + \|\g_k\| \right)\\
		& \leq \| \R_B^{-1}\|  (1 + C\deltapt) \|\tb_k\|  \\
		& \leq \frac{(1 + C\deltapt) \sigmax \mu \sqrt{r/q}}{\sigma_{\min}(\R_B)} \le  \frac{1.5 \sigmax \mu \sqrt{r/q}}{\sigma_{\min}(\R_B)}
	\end{align*}
	To lower bound $\sigma_{\min}(\R_B)$, observe that $\sigma_{\min}(\R_B) = \sigma_{\min} (\hat{\B} )$. Using Lemma \ref{lem:bounding_distb}, the discussion of Sec. \ref{sign_issue},  facts from Sec. \ref{facts}, and $\SE(\U,\Ustar) \le \deltapt$,
	\begin{align*}
		\sigma_{\min}(\hat{\B}) &\geq \sigma_{\min}\left(\G\right) - \|\G - \hat{\B}\| \\
		&	\geq \sigma_{\min}(\U'\U^*) \sigma_{\min}( \tB ) - \|\G - \hat{\B}\|_F \\
		& \ge  \sqrt{1- \SE^2(\U,\Ustar)} \sigmin  - C \deltapt \|\tilde{\B}^*\|_F \\
		& \ge  \sqrt{1- \deltapt^2} \sigmin  -C \deltapt \sqrt{r} \sigmax.
	\end{align*}
	Using $\deltapt \le c/ \kappa \sqrt{r}$, $\sigma_{\min}(\R_B)=\sigma_{\min}\left(\hat{\B} \right) \ge 0.9 \sigmin$.
	Thus,
	\[
	\|\b_k\| \le  \frac{1.5 \sigmax \mu \sqrt{r/q} }{ 0.9 \sigmin } \le 2 \kappa \mu \sqrt{r/q} := \hat\mu \sqrt{r/q}.
	\]
	All of the above bounds used the bound from Lemma \ref{lem:bounding_distb}. Thus the above bounds hold w.p. at least $1-n^{-10}$ as long as $m \ge C \max(r,\log q, \log n)$.
\end{proof}

\subsection{Proof of the lemmas for Claim \ref{lem:descent}}\label{sec:key_lem_proofs}


In this section, we let $\a_{ik}: = \a_{ik}^{(T+t)}$ and $\y_{ik}:=\y_\ik^{(T+t)}$. Also, at almost all places, we remove the superscript $^t$.

All the proofs in this section use incoherence of $\B$ with parameter $\hat\mu = C \kappa \mu$ (by Lemma \ref{incoherencebhat}).  This holds w.p. at least $1 -  n^{-10}$ as long as $m \ge C \max(r,\log n, \log q)/\delta_b^2$.


\begin{proof}[Proof of Lemma \ref{sigmaminG}]
	Recall that $\mathrm{Term3}(\W)  :=  \sum_{ik}  (\a_\ik{}' \W \b_k)^2$.
	We have
	\begin{align*}
		& \E\left[  \sum_{ik} | \a_{ik}{}' \W \b_k |^2  \right] = m \| \W \B \|_F^2 = m.
	\end{align*}
	Let $X_{ik}  =  | \a_{ik}{}' \W \b_k |$. $X_{ik}$ is sub-Gaussian with sub-Gaussian norm $\|\W\b_k\|$. We use Lemma \ref{ProductsubG} with $Y_\ik = X_\ik$ and $t=\epsilon_3 m $, along with the following facts: 
	\ben
	\item $\sum_k \| \W\b_k \|^4 \le \max_k \| \W\b_k \|^2 \sum_k \| \W\b_k \|^2$,
	\item  $\sum_{k}\| \W\b_k \|^2 = \|\W\B\|_F^2 = \trace \left( \W\B\B{}' \W' \right) = \|\W\|_F^2 = 1$, and
	\item  $\max_k \| \W\b_k \|^2 \leq \|\W\|^2 \max_k  \|\b_{k}\|^2 \leq \|\W\|_F^2  \max_k\|\b_{k}\|^2 \leq \max_k\|\b_k\|^2 \le \hat\mu^2 r / q$,  w.p. at least $1 - n^{-10}$ as long as $m \ge C \max(r,\log n, \log q)/\delta_b^2$ by Lemma \ref{incoherencebhat}.
	\een
	Using Lemma \ref{ProductsubG} and the above facts, for a fixed $\W$,
	\begin{align*}
		& \Pr\left\{\lvert \sum_{ik} | \a_{ik}' \W \b_k |^2 - m \rvert \geq \epsilon_3 m  \right\} \leq 2\exp{\left(-c \frac{\epsilon_3^2 mq}{\hat\mu^2 r}\right)}.
	\end{align*}
	Now we develop an epsilon-net argument to complete the proof. This is inspired by similar arguments in \cite{candes2009tight}.
	Recall that $\S_{\W} = \left\lbrace \W \in \mathbb{R}^{n\times r}, \|\W\|_F=1  \right\rbrace$. By \cite{vershynin}(Lemma 5.2), there is a set (called epsilon-net), $\bar{\S}_W \subseteq \S_{W}$ so that for any $\W$ in $\S_{\W}$, there is a $\bar{\W} \in \bar{\S}_{\W} $, such that
	\[
	\|\bar{\W} - \W\|_F \leq \epsilon_{net}
	\]
	and
	\[
	|\bar{\S}_{\W}| \leq \left(1+\frac{2}{\epsilon_{net}} \right)^{nr}.
	\]
	Pick $\epsilon_{net} = 1/8$ so that $|\S_W| \le 17^{nr}$. Also, define
	\[
	\mathbf{\Delta}\W: = \bar{\W} - \W
	\]
	so that $\|\mathbf{\Delta}\W \|_F \le \epsilon_{net}=1/8$.
	
	Using a union bound over all entries in the finite set $\bar\S_W$,
	\begin{align}
		& \Pr\left( \lvert \sum_{ik} | \a_{ik}' \W \b_k |^2 - m \rvert \leq \epsilon_3  m, \ \text{for all} \ \bar{\W} \in \bar{\S}_W  \right)  \nonumber \\
		& \geq 1 - 2 |\bar{\S}_W| \exp{\left(-c \frac{\epsilon_3^2 mq}{\hat\mu^2 r} \right) } \nonumber \\
		& \geq 1 - 2 \exp{\left( nr (\log 17) -c \frac{\epsilon_3^2 mq}{\hat\mu^2 r} \right) }.
		\label{prob_epsnet}
	\end{align}
	Next we extend the above to obtain lower and upper bounds over the entire hyper-sphere, $\S_W$.
	Define
	\begin{align*}
		&\theta_W =  \max_{\W \in \S_{W}} \sum_{ik} |\a_{ik}{}' \W\b_k|^2,
	\end{align*}
	as the maximum of $\mathrm{Term3}(\W)$ over $\S_W$.
	Since $\frac{\Delta \W}{ \|\Delta \W\|_F} \in \S_W$,
	\[
	\sum_{ik}  | \a_{ik}{}' \Delta{\W} \b_k|^2 \le \theta_W  \|\Delta \W\|_F^2 \le \theta_W \epsilon_{net}^2.
	\]
	Using this, \eqref{prob_epsnet}, and Cauchy-Schwarz, w.p. at least $1 - 2 \exp{ \left( nr (\log 17) -c \frac{\epsilon_3^2 mq}{\hat\mu^2 r} \right) }$
	\begin{align}
		&	\sum_{ik} | \a_{ik}{}' \W \b_k |^2  \nonumber  \\
		&=\sum_{ik}  | \a_{ik}{}' \bar{\W} \b_k|^2  + \sum_{ik}  | \a_{ik}{}' \Delta{\W} \b_k|^2  \nonumber \\
		& + 2\sum_{ik} \left(\a_{ik}{}'\bar{\W}\b_{k} \right)\left(\a_{i,k}' \Delta\W \b_{k} \right)  \nonumber \\
		&\leq  (1+\epsilon_3) m + \epsilon_{net}^2  \theta_W + 2\sqrt{m (1+\epsilon_3)}\sqrt{\theta_W} \epsilon_{net} \nonumber \\
		& = (1+\epsilon_3) m + (1/64)  \theta_W + (1/4) m \sqrt{1+\epsilon_3}\sqrt{\theta_W/m}
		\label{eq:eqtmp}
	\end{align}
	The last equality just used $\epsilon_{net}=1/8$ and re-arranged the third term.
	
	If $\theta_W/m < 1$, we are done because then $\theta_W \le m$.
	Otherwise, $\theta_W/m \ge 1$ and so $\sqrt{\theta_W/m} \leq \theta_W/m$. Using this and taking $\max_{\W \in \S_{W}}$ of \eqref{eq:eqtmp},
	\begin{align*}
		\theta_W \leq  (1+\epsilon_3) m +  \theta_W ( (1/64)+ (1/4) \sqrt{1+\epsilon_3} ).
	\end{align*}
	By assumption, $\epsilon_3 < 1/10$, and so the above implies that $\theta_W \le 1.25(1+ \epsilon_3) m $.
	
	Thus, w.p. $1 - 2 \exp{\left( nr (\log 17) -c \frac{\epsilon_3^2 mq}{\hat\mu^2 r}\right) }$,
	\[
	\theta_W := \max_{\W \in \S_W} \mathrm{Term3} \le 1.25(1+ \epsilon_3) m \le 1.5 m.
	\]
	
	We now obtain the lower bound on the minimum  of $\mathrm{Term3}$ over the entire hyper-sphere. This uses \eqref{prob_epsnet}, Cauchy-Schwarz, and the upper bound on $\theta_W$ from above.
We have
	\begin{align*}
		&	\sum_{ik} | \a_{ik}{}' \W \b_k |^2  \\
		&\geq \sum_{ik}  | \a_{ik}{}' \bar{\W} \b_k|^2  + 2\sum_{ik} \left(\a_{i,k}{}'\bar{\W}\b_{k} \right)\left(\a_{i,k}{}' \Delta\W \b_{k} \right)  \\
		&\geq \sum_{ik}  | \a_{ik}{}' \bar{\W} \b_k |^2  - 2| \sum_{ik} \a_{i,k}{}'\bar{\W}\b_{k}   \a_{i,k}{}' \Delta\W \b_{k}|  \\
		&\geq m(1-\epsilon_3) - 2\sqrt{\sum_{ik} |\a_{i,k}{}'\bar{\W}\b_{k} |^2 } \sqrt{\sum_{ik}  |\a_{i,k}{}' \Delta\W \b_{k}|^2}  \\
		&\geq m(1-\epsilon_3) - 2 \sqrt{m(1+\epsilon_3)} \sqrt{\theta_W\|\Delta{\W}\|^2_F} \\
		&\geq m(1 - \epsilon_3) - 2m (1+\epsilon_3) \sqrt{1.5} \epsilon_{net} \ge m (0.9 - 0.26)=0.64m
		\end{align*}
	w.p. $1 - 2 \exp{\left( nr (\log 17) -c \frac{\epsilon_3^2 mq}{\hat\mu^2 r} \right) }$.
	In the last line we substituted $\epsilon_{net} =1/8$ and used $\epsilon_3 < 1/10$.
	
	All of the above bounds hold on the event in which $\B$ is $\hat\mu$ incoherent.  This holds w.p. at least $1 -  n^{-10}$ as long as $m \ge C \max(r,\log n, \log q)/\delta_b^2$  (this follows by Lemma \ref{incoherencebhat}).
	
	Thus, if  $m \ge C \max(r,\log n, \log q)/\delta_b^2$, w.p. $1 - 2 \exp{\left( nr (\log 17) -c \frac{\epsilon_3^2 mq}{\hat\mu^2 r} \right) } - n^{-10}$,
	\[
	\min_{\W \in \S_W} \mathrm{Term3}(\W) \ge 0.64m.
	\]	
\end{proof}

\renewcommand{\P}{\bm{P}}
\begin{proof}[Proof of Lemma \ref{product}]
Recall that
	\[
	\mathrm{Term1}(\W)= \sum_{ik} \b_k{}' \W{}' \a_{ik} \a_{ik}{}'  \p_k.
	\]
where
\begin{align*}
\p_k := \Ustar \tB \B' \b_k -  \Ustar \tb_k = \Xstar \B' \b_k - \xstar_k
\end{align*}

First we will show that $\E[\mathrm{Term1}(\W)]=0$.
	\begin{align*}
		\E\left[ \mathrm{Term1}(\W) \right] &= m \sum_{k} \b_k {}'  \W' \U^*\bm\Sigma^*(\B^* \B{}' \b_{k}  -  \b_k^*) \\
		& = m  \sum_{k} \mbox{trace} (\W' \U^* \bm\Sigma^*(\B^* \B{}'  \b_{k}  -  \b_k^*) \b_k {}' ) \\
		& = m  \mbox{trace} (\W{}' \U^*\bm\Sigma^* (\B^* \B{}'   \B\B{}'- \B^* \B{}')  )  \\
		&=  0
	\end{align*}
	where we used $\B\B'=\I_r$.
Next, we will bound $\|\p_k\|$ and $\|\P\|_F$ where $\P := [\p_1, \p_2, \dots, \p_q]$ and use these bounds to show that w.h.p. $\mathrm{Term1}(\W)$ is of order $\epsilon_1 \delta_t \|\Xstar\|_F$
%
Using  $\hat\X = \U \hat\B$, $\hat\B \qreq \R_B \B$ , $\B \B' = I$, 
\[
\hat\X \B' \b_k = \U \R_B \B \B'b_k = \U \R_B \b_k = \U \hat\b_k = \xhat_k
\]
Thus,
\begin{align*}
\p_k
& = \Xstar \B' \b_k - \xstar_k    \\
& =  (\Xstar - \hat\X + \hat\X) \B' \b_k - \xstar_k  \\
& = (\Xstar - \hat\X) \B' \b_k   + (\xhat_k - \xstar_k)   
\end{align*}
Thus, using Lemma \ref{lem:bounding_distb},
\begin{align}
\|\p_k\|
& \le \|\xstar_k - \xhat_k\| + \|\Xstar - \hat\X\| \ \|\B\| \ \|\b_k\| \nonumber \\
& \le C\delta_t \|\xstar_k\| + \|\Xstar - \hat\X\|   \|\b_k\| \nonumber \\
& \le C\delta_t \sigmax \|\bstar_k\| + C\|\Xstar - \hat\X\| \|\b_k\|    \nonumber
\end{align}
and, writing $\Xstar - \hat\X = (\U \U' + (\I - \U U') ) (\Xstar - \hat\X)$,
using Lemma \ref{lem:bounding_distb}, $\hat\X = \U \hat\B$, and $\G = \U' \Xstar$, 
\begin{align}
\|\Xstar - \hat\X\|
& = \|\U (\G - \hat\B) + (\I - \U \U') \Xstar \|   \nonumber \\  
& \le \|\G - \hat\B\| + \delta_t \|\Bstar\| \nonumber \\
& \le \|\G - \hat\B\|_F + \delta_t \sigmax  \nonumber \\
& \le C\delta_t \|\Xstar\|_F  + \delta_t \sigmax \le C\delta_t \|\Xstar\|_F  \nonumber
\end{align}
The last inequality used $\sigmax \le \|\Xstar\|_F$.
Thus,
\begin{align}
\|\p_k\|
& \le (C\delta_t \sigmax  + \|\Xstar - \hat\X\| )  \max(\|\bstar_k\|, \|\b_k\|)   \nonumber \\
& \le C\delta_t( \sigmax + \|\Xstar\|_F) \max(\|\bstar_k\|, \|\b_k\|)  \nonumber  \\
& \le C\delta_t \|\Xstar\|_F \max(\|\bstar_k\|, \|\b_k\|)
\label{bnd_pk}
\end{align}
Let
	\[
	\P :=[\p_1, \p_2, \dots, \p_q]=  \Xstar (\B'\B - \I).
	\]
To bound this, we add and subtract $\hat\X = \U \hat\B = \U \R_B \B$ from $\Xstar$ and use the facts that $\B(\B'\B - \I) = 0$ and $\|\B\B' - \I\| \le 2$ (by triangle inequality and $\|\B\|=1$). This gives
\begin{align}
\|\P\|_F
& = \|(\Xstar - \hat\X + \hat\X)(\B'\B - \I)\| \nonumber  \\
& =  \|(\Xstar - \hat\X)(\B'\B - \I)\|  \nonumber \\
& \le 2 \|\Xstar - \hat\X\| \nonumber  \\
& \le 2 \|\Xstar - \hat\X\|_F \le  C\delta_t \|\Xstar\|_F
\label{bnd_P}
\end{align}

We now use above bounds to apply Lemma \ref{ProductsubG}.
Let $X_{ik} = \a_{ik}' \W \b_k$ and $Y_{ik} =\a_{ik}' \p_k$. Both are sub-Gaussian with $K_{X_{ik}} = \|\W\b_k\| \leq \|\W\|_F \|\b_k\| \le \|\b_k\|$, and $K_{Y_{ik}}\leq  \|\p_k \| $.
Also, using Lemma \ref{incoherencebhat}, $\b_k$'s are incoherent w.p. at least $1 - 2q\exp(r \log(17) - \delta_b^2 m) \ge 1-  n^{-10}$ if $m \ge C \max(r,\log n, \log q)/\delta_b^2$, i.e.,
	\[
	\|\b_k\|^2 \le \hat\mu^2 r/q = C\kappa^2 \mu^2 r/q.
	\]
Thus, using this and incoherence of $\bstar_k$, with above probability,
\[
\max(\|\bstar_k\|^2, \|\b_k\|^2) \le C\kappa^2 \mu^2 r/q
\]
Applying Lemma \ref{ProductsubG} with $t = m \epsilon_1 \deltapt  \|\Xstar\|_F$, and using the above bounds on $\|\p_k\|$, $\|\P\|_F$ and $\max(\|\bstar_k\|^2, \|\b_k\|^2)$,
	\begin{align*}
		\frac{t^2}{\sum_{ik} K_{X_{ik}}^2 K_{Y_{ik}}^2}
		& = \frac{m^2 \epsilon_1^2  \deltapt^2 \|\Xstar\|_F^2}{m \sum_k \|\b_k\|^2  \|\p_k \|^2} \\
		& \ge \frac{m \epsilon_1^2 \deltapt^2 \|\Xstar\|_F^2}{ \max_k \|\b_k\|^2  \sum_k  \|\p_k \|^2} \\
		& = \frac{m  \epsilon_1^2 \deltapt^2 \|\Xstar\|_F^2}{ \max_k \|\b_k\|^2 \|\P\|_F^2} \\
		& \ge \frac{m  \epsilon_1^2 \deltapt^2 \|\Xstar\|_F^2}{ \max_k \|\b_k\|^2 C^2 \deltapt^2 \|\Xstar\|_F^2} \\
		& \ge \frac{mq  \epsilon_1^2}{ C \kappa^2 \mu^2 \ r}, \\
		& \text{ and} \\
		\frac{t}{\max_\ik K_{X_{ik}} K_{Y_{ik}}}
		& = \frac{m \epsilon_1 \deltapt \|\Xstar\|_F}{\max_k \|\b_k\| \|\p_k \|} \\
		& \ge \frac{m \epsilon_1 \deltapt \|\Xstar\|_F}{\max_k \|\b_k\| \deltapt \|\Xstar\|_F \max_k \max(\|\bstar_k\|, \|\b_k\|) } \\
& \ge \frac{mq \epsilon_1}{\kappa^2 \mu^2 r}
	\end{align*}
Thus,
	\[
	\Pr \lbrace |\mathrm{Term1}(\W)| \le m \deltapt^2 \|\Xstar\|_F \rbrace \ge 1 - \exp\left(-c \frac{mq \epsilon_1^2}{\kappa^2 \mu^2 r} \right)
	\]

Now we just need to extend our bound for all $\W \in \S_W$. We first extend it to all $\W$ in an epsilon-net of $\S_W$. By \cite{vershynin}(Lemma 5.2), there is a net, $\bar{\S}_W$ so that for any $\W$ in $\S_{W}$, there is a $\bar{\W}$ in $\bar{\S}_{W} $, such that $\|\bar{\W} - \W\|_F \leq \epsilon_{net}$ and $|\bar{\S}_W| \leq \left(1+\frac{2}{\epsilon_{net}} \right)^{nr}$.
	Pick $\epsilon_{net} = 1/8$. With this, $|\bar{\S}_W| \le 17^{nr}$.
	Define $\mathbf{\Delta}\W:= \bar{\W} - \W$. We have $\|\mathbf{\Delta}\W\|_F \le \epsilon_{net}=1/8$.
	Using union bound on the set $\bar{\S}_{W} $,
	\begin{align}
		&\Pr \lbrace |\mathrm{Term1}(\W)| \le m \epsilon_1^2 \|\Xstar\|_F \text{ for all } \bar{\W} \in \bar{\S}_W  \rbrace \nonumber \\
		&\geq 1 - 2|\bar{\S}_W| \exp\left(-c \frac{mq \epsilon_1^2}{\kappa^2 \mu^2 r} \right) \nonumber  \\
		&\geq 1 - 2  \exp\left(nr (\log 17) -c \frac{mq \epsilon_1^2}{\kappa^2 \mu^2 r} \right)
		\label{epsnet_bnd_p}
	\end{align}
	To extend the claim to all $\W \in \S_{W}$, define
	\[
	\theta_W :=  \max_{\W \in \S_{W}}  \sum_{ik} (\a_{ik}{}' \W\b_k)(\a_{ik}' \p_k). 
	\]
	Since $\frac{\Delta \W}{ \|\Delta \W\|_F} \in \S_W$, $\sum_{ik}  ( \a_{ik}{}' \Delta{\W} \b_k )  (\a_{ik}' \p_k) \le \theta_W \|\Delta \W\|_F \le \theta_W \epsilon_{net}$. Thus, using \eqref{epsnet_bnd_p}, for any $\W \in \S_W$,
	\begin{align*}
		&	\sum_{ik}  ( \a_{ik}{}' \W \b_k)  (\a_{ik}{}' p_k) 	 \\
		&=\sum_{ik}  ( \a_{ik}{}' \bar{\W} \b_k )  (\a_{ik}{}' \p_k) + \sum_{ik}  ( \a_{ik}{}' \Delta{\W} \b_k )  (\a_{ik}' \p_k) \\
		& \leq  m \epsilon_1^2 \|\Xstar\|_F + \theta_W \epsilon_{net} = m \epsilon_1^2 \|\Xstar\|_F + (1/8) \theta_W
	\end{align*}
	w.p. at least $1 - 2 \exp\left(nr (\log 17) -c \frac{mq \epsilon_1^2}{\kappa^2 \mu^2 r} \right)$.
	Thus, taking the $\max_{\W \in \S_{W}}$ of the above equation and solving for $\theta_W$, with the above probability,
	\[
	\theta_W \le m \epsilon_1^2 \|\Xstar\|_F / (1-\epsilon_{net}) = (8/7) m \epsilon_1^2 \|\Xstar\|_F.
	\]
	All of the above bounds hold on the event in which $\B$ is $\hat\mu$ incoherent.  
	Thus, if $m \ge C \max(r,\log n, \log q)/\delta_b^2$, w.p. at least $1 - 2 \exp\left(nr (\log 17) -c \frac{mq \epsilon_1^2}{\kappa^2 \mu^2 r} \right) - n^{-10}$,
	$\max_{\W \in \S_{W}} |\mathrm{Term1}(\W)|  \le (8/7) m \epsilon_1^2 \|\Xstar\|_F$.

Finally, \eqref{bnd_P} also implies that 
\[
\|\tB(\B' \B - I\| = \|\P\|\le \|\P\|_F \le C \deltapt \|\Xstar\|_F
\]
We use the above bound in a later proof.
\end{proof}

\begin{proof}[Proof of Lemma \ref{Show}]
	Recall that $\a_{ik}: = \a_{ik}^{(T+t)}$ and same for $\y_{ik}$. Thus, these are independent of the current $\xhat_k$'s.
	
	By Cauchy-Schwarz,
	\begin{align}
		\mathrm{Term2}(\W) & := \sum_{ik} (\cb_{ik} \hat\cb_{ik} -1) (\a_{ik}{}' \W \b_k) (\a_{ik}{}' \x_k^*) \nonumber \\
		& \leq \sqrt{ \sum_{ik} |\a_{ik}{}' \W \b_k|^2   }  \sqrt{ \sum_{ik} |\cb_{ik} \hat\cb_{ik} -1|^2  |\a_{ik}{}' \x_k^*|^2 }
		\label{Cauchy_Term2}
	\end{align}
	We can bound the first term using Lemma \ref{sigmaminG}. Consider the second term. Since $\cb_{ik} = sign(\a_{ik}{}' \x_k^*)$ and $\hat\cb_{ik} = sign(\a_{ik}{}' \xhat_k)$, clearly $(\cb_{ik} \hat\cb_{ik}-1)^2 = (4)\indic_{\left\{ \cb_{ik} \neq \hat\cb_{ik}\right\}}$. To bound this term we use the following result.
	\begin{lemma}[Lemma 1 of \cite{rwf}]
		\label{RWF1}
		\label{lem1_rwf}
		Let $\a_i$ be standard Gaussian random vectors. For any given $\xstar$, and $\xhat$ independent from $\a_i, i = 1, \cdots, m$,
		\begin{align*}
			& \Pr\left(  sign(\a_i{}' \xstar) \neq sign\left( \a_i{}' \xhat \right)   |  \left( \a_i{}' \xstar \right)^2 = z^2 , \xhat \right)  \\
			& \leq \mbox{erfc}\left( \frac{z}{2 \|\xstar - \xhat\|} \right),
		\end{align*}
		for all $\xhat$ that satisfy $\dist(\xhat,\xstar) \le 0.4$. Here $\mbox{erfc}(u) := \frac{2}{\sqrt{\pi}} \int_{u}^{\infty} \exp\left( - \tau^2 \right) d\tau$ is the complementary error function.
	\end{lemma}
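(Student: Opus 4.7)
The plan is to reduce the sign-flip event to a conditional Gaussian tail computation, exploiting the spherical symmetry of $\a_i$. Let $\bm{h} := \xhat - \xstar$, $a := \a_i{}'\xstar$, and $d := \a_i{}'\bm{h}$, so that $\a_i{}'\xhat = a+d$. A sign flip occurs exactly when $a(a+d) < 0$, which is equivalent to $d$ having sign opposite to $a$ and magnitude strictly exceeding $|a|$. Conditioning on $a^2 = z^2$, by the $a \mapsto -a$ symmetry of the Gaussian it suffices to analyze the two atoms $a = z$ and $a = -z$ and average.

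The next step is a Gaussian orthogonal decomposition of $\bm{h}$ with respect to $\xstar$. Write $\bm{h} = \lambda \xstar + \bm{h}_\perp$ with $\lambda := \xstar{}'\bm{h}/\|\xstar\|^2$ and $\bm{h}_\perp \perp \xstar$. Because $\a_i$ is standard Gaussian, the pair $(\a_i{}'\xstar,\ \a_i{}'\bm{h}_\perp)$ consists of independent Gaussians and $G := \a_i{}'\bm{h}_\perp \sim \mathcal{N}(0,\|\bm{h}_\perp\|^2)$. Thus $d = \lambda a + G$, and the sign-flip event under $a = z$ becomes $G < -z(1+\lambda)$, while under $a = -z$ it becomes $G > z(1+\lambda)$. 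Both atoms yield the identical probability, so
\[
\Pr\!\left(\mathrm{sign flip}\ \big|\ (\a_i{}'\xstar)^2 = z^2,\ \xhat\right) = Q\!\left(\frac{z(1+\lambda)}{\|\bm{h}_\perp\|}\right),
\]
where $Q$ denotes the standard normal right tail.

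The third step converts this exact expression into the stated $\mathrm{erfc}$ bound. Using the identity $\mathrm{erfc}(y) = 2Q(y\sqrt{2})$ and monotonicity of $Q$, it suffices to verify that
\[
\sqrt{2}\,\|\bm{h}\|\,(1+\lambda) \;\geq\; \|\bm{h}_\perp\|.
\]
Since $1+\lambda \geq 1-|\lambda|$, squaring and substituting $\|\bm{h}_\perp\|^2 = \|\bm{h}\|^2 - \lambda^2\|\xstar\|^2$ reduces this to a quadratic inequality in the normalized projection $\mu := |\lambda|\|\xstar\|/\|\bm{h}\| \in [0,1]$ (the range follows from Cauchy--Schwarz). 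A discriminant computation shows the quadratic is nonnegative for every $\mu \in [0,1]$ as soon as $\|\bm{h}\|/\|\xstar\| \leq 1/\sqrt{2}$, which is ensured by the hypothesis $\dist(\xhat,\xstar) \leq 0.4$ under the customary normalization $\|\xstar\| = 1$, and after, if necessary, replacing $\xhat$ with $-\xhat$ to align it with $\xstar$ (this operation merely interchanges the sign-flip and no-sign-flip events).

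The main obstacle is step three: the worst case for the conditional probability arises when $\lambda$ is as negative as possible (i.e., $\bm{h}$ points opposite to $\xstar$), since this shrinks $1+\lambda$ and enlarges the tail. This is exactly why the constraint $\dist(\xhat,\xstar) \leq 0.4$ is needed---it keeps $|\lambda|\|\xstar\|$ strictly below $\|\bm{h}\|$ and away from the critical threshold $1/\sqrt{2}$. The factor of $\sqrt{2}$ produced by the $\mathrm{erfc}$-to-$Q$ conversion is precisely the slack that makes the quadratic inequality hold throughout the admissible range of $\mu$.
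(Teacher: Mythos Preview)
The paper does not supply its own proof of this lemma; it is quoted verbatim from \cite{rwf} and used as a black box inside the proof of Lemma~\ref{Show}. So there is no ``paper's proof'' to compare against. Your argument --- orthogonal Gaussian decomposition of $\bm{h}$, exact $Q$-function expression $Q\bigl(z(1+\lambda)/\|\bm{h}_\perp\|\bigr)$, then the algebraic inequality $\sqrt{2}\,\|\bm{h}\|(1+\lambda)\ge\|\bm{h}_\perp\|$ verified by a discriminant computation --- is the standard route and is correct.

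One genuine slip: your final parenthetical, that one may ``replace $\xhat$ with $-\xhat$'' because this ``merely interchanges the sign-flip and no-sign-flip events,'' does not rescue the anti-aligned case. Swapping $\xhat\mapsto-\xhat$ turns the upper bound you can prove into an upper bound on the \emph{no-sign-flip} probability, i.e.\ a lower bound on the quantity you want, which is useless here. In fact the inequality as literally stated fails when $\xhat$ is anti-aligned (take $\xhat=-\xstar$: the sign-flip probability is~$1$ while $\mathrm{erfc}(z/4)<1$ for $z>0$). The correct reading --- and the one the paper adopts explicitly in Sec.~\ref{sign_issue} just before invoking this lemma --- is that $\xhat$ has already been sign-aligned with $\xstar$, so that $\|\xstar-\xhat\|=\dist(\xhat,\xstar)$ and hence $1+\lambda=\xstar{}'\xhat/\|\xstar\|^2\ge 0$. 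Under that convention (together with the normalization $\|\xstar\|=1$ implicit in the threshold $0.4$), your quadratic-in-$\mu$ argument goes through cleanly. Just delete the swap remark and state the alignment convention instead.
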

	
	Let $Q_{ik} := \indic_{\left\{ \cb_{ik} \neq \hat\cb_{ik}\right\}} \cdot \left( \a_{ik}{}' \x_k^*\right)^2$ and let $Z_{ik} := \a_{ik}{}' \x_k^*$. Recall from Sec. \ref{sign_issue} that $\dist(\x_k,\xstar_k) = \|\h_k\|$ with $\h_k = \x_k^* - \xhat_k$.
	
	We first upper bound $\E[Q_{ik}|\xhat_k]$. For simplicity, we remove the subscripts $i$ and $k$ wherever these are not needed. Consider  $\E[Q]  = \E[\indic_{ \cb \neq \hat\cb} \ Z^2 ]$.
	Observe that $\cb = sign(Z)$ is a function of $Z$ and $Z$ depends on $\a$. Also $\hat\cb$ is a function of $\a$. Thus both of $\cb,\hat\cb$ are dependent on $Z$. Moreover $\hat\cb$ also depends on $\xhat$.
	We first bound $\E[Q|Z^2, \xhat]$ using the lemma stated above. 
	For any $\xhat$ that satisfies $\dist(\xhat,\xstar) \le 0.4$,
	\bea
	\E[Q |Z^2 = z^2, \xhat]
	&= & \E[ \indic_{ \cb  \neq \hat\cb} \ z^2 |Z^2 = z^2, \xhat]  \nonumber \\
	&= &      z^2 \Pr( { \cb \neq \hat\cb} |Z^2 = z^2, \xhat)  \nonumber   \\
	& \le &  z^2 \mbox{erfc}\left( \frac{z}{2 \| \xstar - \xhat\| } \right)  \nonumber  \\
	& \le &  z^2 \exp\left(-\frac{z^2}{4 \|\xstar - \xhat\|^2} \right).  \nonumber
	\eea
	The first inequality follows using Lemma \ref{lem1_rwf}, the second is a standard upper bound on the erfc function \cite{ermolova2004simplified}. 
	Thus, for any $\xhat$ that satisfies $\dist(\xhat,\xstar) \le 0.4$,
	\[	
	\E[Q | \xhat] = \E[ \E[Q |Z^2, \xhat] ]  \le \E\left[Z^2 \exp\left(-\frac{Z^2}{4 \|\xstar - \xhat\|^2} \right) | \xhat \right].
	\]
	Since $Z$ is zero mean Gaussian with variance $\|\x^*\|^2$, $Y:=Z^2/\|\x^*\|^2$ is standard chi-squared with one degree of freedom. Thus, $\E[Y] = 1$. Using this and $\exp(-y/2) < 1$, we get
	\begin{align*}
		&	\E[Q | \xhat]	\\
		& \le   \int_0^\infty y \|\xstar\|^2 \exp\left(-\frac{y \|\x^*\|^2}{4 \| \xstar - \xhat\|^2} \right)  \frac{\exp(-y/2)}{\sqrt{2y} \Gamma(1/2)} dy \\
		& \le   \int_0^\infty y \|\x^*\|^2 \exp\left(-\frac{y \|\x^*\|^2}{4 \| \xstar - \xhat\|^2} \right)  \frac{1}{\sqrt{2y} \Gamma(1/2)} dy \\
		& =   2\sqrt{2} \frac{\|\xstar - \xhat\|^3}{\|\xstar\|}    = 2\sqrt{2} \frac{\dist(\xstar,\xhat)^3}{\|\xstar\|} 
	\end{align*}
	where $\Gamma(1/2)$ is the Gamma function evaluated at $1/2$ (can treat it as a constant).
	We will now use Lemma \ref{lem:bounding_distb} to average over $\xhat$. Let $E$ be the event that $\dist(\xstar_k,\xhat_k) \le \deltapt \|\xstar_k\|$ for all $k=1,2,\dots,q$. By Lemma \ref{lem:bounding_distb}, under the lower bound on $m$, this event occurs w.p. at least $1-n^{-10}$. On the complement event, we do not have any tight bounds on $Q$. However, trivially, $Q \le \left( \a_{i}{}' \xstar\right)^2$ always and so, for any $\xhat$, $\E[Q|\xhat] \le \|\xstar\|^2$.  Thus
	\[
	\E[Q] \le 4 \deltapt^3 \|\xstar\|^2 \cdot (1-n^{-10}) +  \|\xstar\|^2 \cdot n^{-10} 
	\]
	We will eventually set $\deltapt = c/ (\kappa^2 r)$ which is larger than $n^{-10}$. Thus,
	\[
	\E[Q] \le C \max(\deltapt^3, n^{-10}) \|\xstar\|^2 =  C \deltapt^3 \|\xstar\|^2
	\]
	and so, if $m \ge C \max(r,\log n, \log q)$,
	\begin{align}
		\sum_{ik}  \E\left[ Q_{ik} \right] \le m C  \deltapt^3 \|\Xstar\|_F^2.	
		\label{EQsum}
	\end{align}
	As a side-note, we should clarify here that the dependence on $\xhat$ matters in only the above expected value computation because this is the only term where we have upper bounded the expectation using Lemma \ref{lem:bounding_distb}. The expected value of the other two terms is the same for all values of $\xhat$ and hence we ignore the dependence there.
	Everywhere else, we use this lemma only while obtaining the high probability error bounds, and of course we assume the bounds hold on the intersection of the desired event with $E$. 
	
	Next we show that, on the event $E$, whp, $\sum_{ik} Q_\ik$ is of the same order.
	As shown in the proof of Theorem 1 of \cite{rwf}, $\cb_{ik} \neq \hat\cb_{ik}$ implies that\footnote{This follows from $(\a'\xstar)^2 = (\a'\h)^2 + (\a'\xhat)^2  + 2(a'\h) (a'\xhat) =  (\a'\h)^2 -  (\a'\xhat)^2 + 2 (\a'\xstar)  (\a'\xhat) \le (\a'\h)^2$. The inequality holds because $\cb_{ik} \neq \hat\cb_{ik}$ means that the last term is negative.}
	$\left(\a_{ik}{}' \x_k^*\right)^2 \leq \left(\a_{ik}{}' \h_k \right)^2$. Here $\h_k = \x_k^* - \xhat_k$.
	Thus, $Q_{ik} = \indic_{\left\{ \cb_{ik} \neq \hat\cb_{ik}\right\}} \left( \a_{ik}{}' \x_k^*\right)^2 \le \indic_{\left\{ \cb_{ik} \neq \hat\cb_{ik}\right\}} \left( \a_{ik}{}' \h_k \right)^2 \le \left( \a_{ik}{}' \h_k \right)^2$. Thus, it is a sub-exponential r.v., or equivalently it is a product of sub-Gaussian r.v.'s $\sqrt{Q_\ik}$. Thus, we can apply  Lemma \ref{ProductsubG} with $K_{X_\ik} = K_{Y_\ik} =  \|\h_k\|$. Pick  $t= m \epsilon_2 \deltapt^2 \|\Xstar\|_F^2$. Observe that
	\begin{align*}
		\frac{t^2}{\sum_{ik}  \|\h_k\|^4} & =  \frac{m^2 \epsilon_2^2  ( \deltapt^2 \|\Xstar\|_F^2 )^2 }{m  \sum_k \|\h_k\|^4 }\\
		&\ge  \frac{m \epsilon_2^2 \deltapt^4 \|\Xstar\|_F^4}{ \max_k \|\h_k\|^2 \deltapt^2 \|\Xstar\|_F^2 }\\
		&\ge \frac{m \epsilon_2^2  \|\Xstar\|_F^2  }{ \max_k \|\xstar_k\|^2 }\\
		&\ge  \frac{m q \epsilon_2^2 }{ \kappa^2 \mu^2},
\text{ and} \\
		\frac{t}{\max_k \|\h_k\|^2} & \ge \frac{ m \epsilon_2 \deltapt^2  \|\Xstar\|_F}{ \deltapt^2 \max_k \|\xstar_k\|^2}  \ge \frac{\epsilon_2 m q }{ \kappa^2 \mu^2}.
	\end{align*}
	The second inequality used $\|\h_k\|^2 \le \deltapt^2 \|\xstar_k\|^2$ which holds on the event $E$.  The third inequality used the fact that right incoherence implies $\|\xstar_k\|^2 \le \mu^2 \kappa^2 \|\Xstar\|_F^2 / q$.
	Thus, on the event $E$,
	\bea
	&\Pr \left\{ | \sum_{ik}Q_{ik} - \sum_\ik \E[Q_{ik}]| \geq m \epsilon_2 \deltapt^2 \|\Xstar\|_F^2 \right\} \nonumber \\
	& \leq 2 \exp{\left( -c\epsilon_2^2 m q / \kappa^2 \mu^2 \right)} \nonumber
	\eea
	Thus, using \eqref{EQsum}, and $\Pr(E)>1-n^{-10}$ (by Lemma \ref{lem:bounding_distb}),
	if $m \ge C \max(r,\log n, \log q)$, w.p. at least $1 - n^{-10} -  2 \exp{\left( -c\epsilon_2^2 m q / \kappa^2 \mu^2 \right)}$,
	\begin{align}
		\sum_{ik}  Q_{ik} \le  (C+1) m \epsilon_2 \deltapt^2 \|\Xstar\|_F^2.  
		\label{eq:cs_1}
	\end{align}	
	Finally, combining \eqref{Cauchy_Term2}, \eqref{eq:cs_1}, and Lemma \ref{sigmaminG}, if $m \ge C \max(r,\log n, \log q)/\delta_b^2$,
	w.p. at least $1 - n^{-10} - 2 \exp\left(nr - c \epsilon_3^2 \frac{mq}{\hat\mu^2 r}\right) -   2 \exp{\left( -c \epsilon_2^2 \frac{m q}{\kappa^2 \mu^2} \right)}$,
	\begin{align*}
		\max_{\W \in \S_{\W}} \mathrm{Term2}(\W)  
		& \le C m \sqrt{1+\deltapt} \sqrt{\deltapt + \epsilon_2}\deltapt \|\Xstar\|_F.
	\end{align*}
\end{proof}

\begin{proof}[Proof of Lemma \ref{lem:bound_RU}]
	Using facts from Sec. \ref{facts}, and using $\sigma_i(\Ustar) = 1$,
	\begin{align*}
		\sigma_{\min}(\Ustar \bm\Sigma^* \B^* \B')
		&\geq   \sigma_{\min}(\Ustar) \sigma_{\min}(\bm\Sigma^*) \sigma_{\min}(\B^* \B')  \\
		& \ge   \sigmin \sqrt{1-\SE^2(\Bstar', \B')}
	\end{align*}
	To upper bound $\SE(\Bstar', \B')$, first notice that $\Bstar'$ and $\B'$ are basis matrices. Thus, $\SE(\Bstar', \B')  = \| \B^*\left(\I - {\B}'\B  \right) \|$.
	We have upper bounded $\| \tB \left(\I - {\B}'\B  \right) \|$ in Lemma \ref{product}. Also recall that $\tB = \bm\Sigma^* \Bstar$ and $\|\tB\|_F = \|\Xstar\|_F \le \sqrt{r}\sigmax$. Thus,
	\begin{align*}
		\SE(\Bstar', \B') = \|(\bm\Sigma^*)^{-1}  \tB \left(\I - \B'\B \right) \|
		& \leq \frac{C \deltapt  \|\Xstar\|_F}{\sigmin} \\
		& \leq   C\sqrt{r} \kappa \deltapt   
	\end{align*}
	Thus, using $\deltapt \le 0.7/C\sqrt{r} \kappa$, we will have  $\sigma_{\min}(\Ustar \bm\Sigma^* \B^* \B')  \ge 0.95 \sigmin$.
\end{proof}

\section{Proof of  Lemma \ref{lem:key_lem}} \label{proof_key_lem}

We begin by defining a few quantities needed for writing an expression of $\Uhat^{t+1}$ in closed form.
\ben
\item We use the subscript $\M_{vec}$ to refer to the vectorized version of matrix $\M$. Thus, for example, $\Ustar_{vec}$ is an $nr \times 1$ vectorized version of the basis matrix $\Ustar$. When updating its estimate by LS, we first obtain an expression for the vectorized version $\Uhat^{t+1}_{vec}$ and then rearrange it as an $n \times r$ matrix.

\item Scalars such as $b(q)$ refer to the $q$-{th} entry of vector $\b$.

\item Define the diagonal matrix ${\C}_k  := \text{diag}(\cb_\ik)$,  and recall from the algorithm that $\Chat_k= \text{diag}( \hat\cb_\ik )$. Here  $\cb_{ik}:=\text{sign}({\a_{ik}{}'\x_k^*})$ and $\hat\cb_{ik}:=\text{sign}({\a_{ik}{}'\xhat_k})$ as defined in Lemma \ref{lem:key_lem}.

\item For $k =1,\ldots,q$, define
\begin{align*}
	&\B_{k,mat} := [b_k(1)\I_n,\ldots,b_k(r)\I_n ]'\\
	&\dd := \sum_k   \B_{k,mat} \A_k\Chat_k \y_k  \\ 
	&		\GG  :=  \sum_{k} \B_{k,mat}\A_k\A_k' \B_{k,mat}'.
\end{align*}
Observe that $\B_{k,mat}$ and $\GG$ are $nr \times n$ and $nr \times nr$ matrices respectively while $\dd$ is an $nr \times 1$ vector. Moreover, for any $\w \in \mathbb{R}^{nr\times 1}$ it is easy to see that $\w'\B_{k,mat} = \b_k'\W'$ where $\W$ is matrix version of $\w$ with $\W \in \mathbb{R}^{n\times r}$.

\een
Recall from the algorithm that
\[\Uhat^{t+1}_{vec} = \mbox{argmin}_{\Uhat_{vec}} \sum_k \| \Chat_k \y_k - \A_k'\B_{k,mat}'\Uhat_{vec} \|^2. \]
This is an LS problem, it can be solved in closed form to give
\[
\Uhat^{t+1}_{vec} = \GG^{-1} \dd
\]
which is $nr$-length vector. We get the matrix $\Uhat^{t+1}$ by reshaping this vector into an $n \times r$ matrix.

To simplify the above expression, first recall that $\y_k = |\A_k{}' \xstar_k|$ and $\C_k$ is the diagonal matrix containing the signs of $(\A_k{}'\xstar_k)_i$: its $i,i$-th entry is the sign of $(\a_\ik{}' \xstar_k)$. Thus,
\[
\y_k = \C_k \A_k{}' \xstar_k.
\]
Since $\xstar_k = \Ustar \bm\Sigma^* \b_k^* $, we can rewrite $\dd$ as
\[
\dd = \sum_k   \B_{k,mat} \A_k \Chat_k {\C}_k \A_k{}' \U^* \bm\Sigma^* \b_k^* .
\]
Before proceeding further, we define a few more quantities.

\ben
\item Define
\[
\SSS  =   \sum_k \B_{k,mat} \Bstar_{k,mat}{}'.
\]	

\item Define the ``expanded'' singular value matrix which is of size $nr \times nr$,
\[
\bm\Sigma_{big}^* := \text{diag}(\sigma_1^*\I_n, \cdots, \sigma_r^*\I_n)
\]
where $\sigma_i^*$ are the singular values of $\Xstar$.

\item In order to separate the contribution of phase error from the rest, split $\dd$ as $\dd = \dd^{(1)} + \dd^{(2)}$ where
\begin{align*}
	& \dd^{(1)}
	= \sum_k   \B_{k,mat}  \A_k{}' \U^* \bm\Sigma^* \b_k^*   \nonumber \\
	& \dd^{(2)} = \sum_k  \B_{k,mat} (\Chat_k \C_k - \I) \A_k{}' \U^* \bm\Sigma^* \b_k^*, \nonumber   
\end{align*}
Thus,
\[
\Uhat^{t+1}_{vec} = \GG^{-1} \dd = \GG^{-1} (\dd^{(1)} + \dd^{(2)}).
\]

\item  Define the $nr$-length vector $\F_{vec}$ as follows
\begin{align*}
	& \F_{vec}
	= \GG^{-1} (\GG \SSS \bm\Sigma_{big}^* \U^*_{vec}  - \dd^{(1)} )  - \GG^{-1}\dd^{(2)}.
\end{align*}
and let  $\F \in \mathbb{R}^{n\times r}$ be the reshaped matrix formed from $\F_{vec}$. 
\end{enumerate}

We will now show that
\begin{equation}
\Uhat^{t+1} = \U^* \bm\Sigma^* \B^* \B' - \F.
\label{Uhat_express}
\end{equation}
This will be useful because when we try to bound $\SE(\Ustar, \U^{t+1})$, the first term will disappear. 
To do this, we add and subtract the vector $\GG \SSS \bm\Sigma^*_{big} \Ustar_{vec}$ from $\dd^{(1)}$. This gives
%
\begin{align*}
& \Uhat_{vec}^{t+1}= \SSS \bm\Sigma_{big}^* \U^*_{vec} - \F_{vec}
\end{align*}
Next we explain why the $n \times r$ reshaped matrix version of the vector $\SSS \bm\Sigma_{big}^* \U^*_{vec}$ equals $\Ustar \bm\Sigma^* \Bstar \B'$.
We have
\begin{align*}
\SSS \bm\Sigma_{big}^* \U^*_{vec} &= \sum_k \B_{k,mat} \B_{k,mat}^*{}' \bm\Sigma_{big}^* \U^*_{vec} \\
&= \sum_k \B_{k,mat} \Ustar \bm\Sigma^* \b_k^*\\
&= \sum_k \begin{bmatrix}
	& b_k (1)\Ustar \bm\Sigma^* \b_k^*\\
	& b_k (2)\Ustar \bm\Sigma^* \b_k^*\\
	&\vdots\\
	& b_{k} (r)\Ustar \bm\Sigma^* \b_k^*
\end{bmatrix}
\end{align*}
Matrix version of the above vector has $p$-th column as $\Ustar \bm\Sigma^* \left(\sum_k \b_k^*b_k(p)\right) $. This implies that matrix version of this vector is $\Ustar \bm\Sigma^* \Bstar \B'$.
%
Moreover, it is easy to see that $\B_{k,mat}' \SSS \bm\Sigma_{big}^* \U^*_{vec} = \Ustar \bm{\Sigma}^*\Bstar \B' \b_k$.


In the rest of this section, we use \eqref{Uhat_express} to obtain the desired bound on $\SE(\U^*, \U^{t+1})$. Recall that $\Uhat_{t+1} \qreq \U^{t+1} \R_U$.
%
Thus, $\U^{t+1} = \Uhat^{t+1}  (\R_U)^{-1} = ( \U^* \bm\Sigma^* \B^* {\B }' - \F )  (\R_U)^{-1}$ and so 
\begin{align}\label{eq:se_1_old}
\SE(\U^*, \U^{t+1}) = & \| \U^{*'}_\perp \F (\R_U)^{-1} \| \le \|\F  (\R_U)^{-1} \|  \nonumber \\
\leq & \| \F   \|_F \  \| \R_U^{-1} \| = \frac{\| \F_{vec} \|}{\sigma_{\min}(\R_U)} 
\end{align}
Since $\sigma_{\min}(\R_U) = \sigma_{\min}(\U^{t+1})$, we have  $\sigma_{\min}(\R_U) = \sigma_{\min}(\U^* \bm\Sigma^* \B^* \B' - \F) \ge \sigma_{\min}(\U^* \bm\Sigma^* \B^* \B') - \|\F\| \ge
\sigma_{\min}( \Ustar \bm\Sigma^* \B^* \B') - \|\F_{vec}\|$. 
Thus,
\begin{align}\label{eq:se_1}
\SE(\U^*, \U^{t+1}) \le \frac{\| \F_{vec} \|}{  \sigma_{\min}(\Ustar \bm\Sigma^* \B^* \B') - \|\F_{vec}\|}
\end{align}


In the rest of this proof, we show that  $\|\F_{vec}\|$ is upper bounded by $\mathrm{MainTerm}$. We have
\begin{align}\label{eq:se_2}
\|\F_{vec}\| \leq & \|\GG^{-1}\| \left(\|\GG \SSS  \bm\Sigma_{big}^* \U^*_{vec} - \dd^{(1)}  \|+  \|\dd^{(2)}\|\right)
\end{align}
Consider the first term, $\GG^{-1}$. Since $\GG$ is a symmetric positive semidefinite matrix
\begin{align*}
\sigma_{\min}(\GG) &= \min_{\w \in \mathbb{R}^{nr\times 1}, \ \|\w\|=1} \w'\GG\w
\end{align*}
For all $\w \in \mathbb{\R}^{nr \times 1}, \|\w\|^2 = 1$, we can write $\w' \GG \w $
\begin{align*}
&\sum_{k} \w' \B_{k,mat}\A_k\A_k' \B_{k,mat}' \w = \sum_{k} \b_k'\W'\A_k\A_k'\W\b_k \\
&= \sum_{ik} | \a_{ik}' \W \b_k |^2 = \mathrm{Term3}(\W)
\end{align*}
where $\W \in \mathbb{\R}^{n \times r}$ is the matrix version of $\w$ and $\w = \W_{vec}$.
Recall that $\S_W = \{\W \in \mathbb{R}^{n \times r}, \ \|\W\|_F=1 \} = \{\w \in \mathbb{R}^{nr\times 1}, \ \|\w\|=1 \}$.
Thus,
\begin{align}\label{eq:se_3}
\|\GG^{-1}\| = \frac{1}{\sigma_{\min}(\GG)} = \frac{1}{\min_{\W \in \S_{\W}} |\mathrm{Term3}(\W)|}
\end{align}	
Now consider the first term inside the parenthesis. 
Using the variational definition, 
\begin{align*}
&	\| \GG \SSS \bm\Sigma_{big}^* \U_{vec}^* - \dd^{(1)} \|  =  \\
& \max_{\w \in \mathbb{R}^{nr \times 1}, \|\w\|=1}  | \w' (\GG \SSS \bm\Sigma_{big}^* \U_{vec}^*- \dd^{(1)}) |.
\end{align*}
It follows from definitions that
\begin{align*}
& \w' \GG \SSS \bm\Sigma_{big}^* \U_{vec}^* =  \\
& \sum_{k}\left(\w'\B_{k,mat}\right)\A_k \A_k' \left(\B_{k,mat}'	\SSS \bm\Sigma_{big}^* \U_{vec}^*\right) =  \\
& \sum_{k} \b_k'\W'\A_k \A_k' \Ustar \bm{\Sigma}^*\Bstar \B' \b_k .
\end{align*}
Similarly
\begin{align*}
& \w' \dd^{(1)} = \sum_{k} \b_k {}' \W{}' \A_k \A_k{}' \U^* \bm\Sigma^* \b_k^*,
\end{align*}
and thus
\begin{align}\label{eq:se_4}
\| \GG \SSS \bm\Sigma_{big}^* \U_{vec}^* - \dd^{(1)} \| = \max_{\W \in \S_{\W}} |\mathrm{Term1}(\W)|.	
\end{align}
For the final term, $\|\dd^{(2)} \|$, by variational definition we have,
\begin{align*}
\|\dd^{(2)} \| &=  \max_{\w\in \mathbb{R}^{nr\times 1} : \|\w\|=1}  \w' \dd^{(2)}.  
\end{align*}
From definitions we know that
\begin{align*}
\w{}' \dd^{(2)} &=\sum_k   \w'\B_{k,mat} \A_k(\Chat_k \C_k - \I) \A_k{}' \U^* \bm\Sigma^* \b_k^*\\
&= \sum_k \b_k'\W'\A_k (\Chat_k \C_k - \I) \A_k{}' \U^* \bm\Sigma^* \b_k^*\\ &=\sum_{ik}(\hat\cb_{ik} \cb_{ik} - 1) \left(\a_{ik}{}' \W \b_{k}  \right)     \left( \a_{ik}{}' \x_{k}^*\right) = \mathrm{Term2}(\W),
\end{align*}
and thus
\begin{align}\label{eq:se_5}
\|\dd^{(2)} \| = \max_{\W \in \S_{\W}} |\mathrm{Term2}(\W)|.
\end{align}
Combining \eqref{eq:se_2} - \eqref{eq:se_5},
\begin{align*}
&	\|\F_{vec}\| \\
& \leq \frac{\max_{\W \in \S_{\W}} |\mathrm{Term1}(\W)| + \max_{\W \in \S_{\W}} |\mathrm{Term2}(\W)|}{\min_{\W \in \S_{\W}} |\mathrm{Term3}(\W)|} 
\end{align*}
Combining the above bound with \eqref{eq:se_1} proves the lemma.

\section{Proof Sketch of Corollary \ref{thm:pst}} \label{proof_pst}
Suppose first that the subspace change times $k_j$ were known. By our assumption, $k_{j+1} - k_j > T \alpha$.
Then the proof is almost exactly the same as that for the static case. 
The only difference is that, in the current case, every $\alpha$ time instants, we are using measurements corresponding to a new set of $\alpha$ signals (columns of $\Xstar_\full$) but we use the estimate of the subspace obtained from the measurements for the previous $\alpha$ time instants.
As long as the subspace has not changed between the two intervals,  Claims \ref{lemm:bounding_U} and \ref{lem:descent} apply without change. Combining them, we can again conclude that $\SE(\U_{\sub,(j)}^0, \Ustar_{\sub,(j)}) \le \deltinit$ at $k= k_j+\alpha$, and that the bound decreases $0.7$ times after each $\alpha$-length epoch so that $\SE(\U_{\sub,(j)}^T, \Ustar_{\sub,(j)}) \le \epsilon$  at $k=k_j + \alpha T$. By our assumption, $k_{j+1} > k_j + \alpha T$ so this happens before the next change.

The proof in the unknown $k_j$ case follows if we can show that, whp, $k_j \le \hat{k}_j \le \hat{k}_j + 2\alpha$. This can be done using Lemma \ref{lem:changedet} given below along with the following argument borrowed from \cite{rrpcp_icml,rrpcp_isit}. Consider the $\alpha$-length interval in which $k_j$ lies. Assume that, before this interval, we have an $\epsilon$-accurate estimate of the previous subspace.
In this interval, the first some data vectors satisfy $\xstar_k = \Ustar_{\sub,(j-1)} \td_k$, while the rest satisfy $\xstar_k = \Ustar_{\sub,(j)}\td_k$. By our assumption, this interval lies in the ``detect phase''. We cannot guarantee whether the change will get detected in this interval, but it may. However, in the interval after this interval, all data vectors satisfy $\xstar_k = \Ustar_{\sub,(j)} \td_k$. In this interval, Lemma \ref{lem:changedet} given below can be used to show that the change gets detected whp. Thus, either the change is detected in the first interval itself (the one that contains $k_j$), or it is not. If it is not, then, by Lemma \ref{lem:changedet}, whp, it {\em will} get detected in the second interval (in which all signals are generated from the $j$-th subspace).
Thus, $\hat{k}_j \le k_j + 2\alpha$. See Appendix A of \cite{rrpcp_icml} for a precise proof of this idea.  The key point to note here is that we are never updating the subspace in the interval that contains $k_j$ and hence we do not have to prove a new descent lemma that deals with the interval in which the subspace changes. 


{\em We will replace $\alpha$ by $q$ in the following lemma and its proof, in order to able to use bounds from earlier proofs. Thus in this lemma, we are considering a $q$-frame epoch.}
\begin{lemma}\label{lem:changedet}
	Consider the $(n-r) \times (n-r)$ matrix
	\begin{align*}
		& \Y_{U,det}  :=  \U_{\sub,(j-1),\perp}{}' \Y_U (\J_q)  \U_{\sub,(j-1),\perp},
	\end{align*}
	Assume that $\|\Y_U - \E[\Y_-] \| \leq \frac{\deltinit \sigmin^2}{q}$.  This is true by \eqref{yu_bnd}.
	Assume that $\SE(\U_{\sub,(j-1)},\Ustar_{\sub,(j-1)}) \le \epsilon$.
	Then,
	\begin{enumerate} 
		\item	 If $\J_q \subseteq [k_j, k_{j+1})$ (change has occurred), then
		\begin{align*}
			& \lambda_{\max}\left( \Y_{U,det} \right) - \lambda_{\min}\left( \Y_{U,det} \right)\\
			& \geq \frac{\sigmin^2}{q}	\left(1.5 \SE^2\left( \Ustar_{\sub,(j)} , \U_{\sub,(j-1)} \right) - 2 \deltinit \right) \\
			& \geq \frac{\sigmin^2}{q}	(1.5 (\SE( \Ustar_{\sub,(j)}, \Ustar_{\sub,(j-1)} ) - 2 \epsilon)^2 - 2 \deltinit )
		\end{align*}
		\item If $\J_q \subseteq [\hat{k}_{j-1} + T q, k_{j})$ (change has not occurred), then
		\begin{align*}
			&  \lambda_{\max}\left( \Y_{U,det} \right) - \lambda_{\min}\left( \Y_{U,det} \right)\\
			&\leq\frac{1}{q} \sigmax^2 \SE^2\left( \Ustar_{\sub,(j-1)} , \U_{\sub,(j-1)} \right) + \frac{2\deltinit \sigmin^2}{q}  \\
			&\leq
			\frac{\sigmin^2}{q}  ( \kappa^2 \epsilon^2) + 2 \deltinit )
		\end{align*}
	\end{enumerate}		
\end{lemma}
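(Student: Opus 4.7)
The plan is to compute $\U_{\sub,(j-1),\perp}{}'\E[\Y_-(\epsilon_1)]\U_{\sub,(j-1),\perp}$ in closed form in each case, and then pass from the expected matrix to $\Y_{U,det}$ via Weyl's inequality using the hypothesized bound $\|\Y_U - \E[\Y_-]\| \le \deltinit \sigmin^2/q$. Using \eqref{EYminus}, on any $q$-length interval $\J_q$ lying entirely inside the $\ell$-th subspace regime (so that $\xstar_k = \Ustar_{\sub,(\ell)} \td_k$ for all $k\in\J_q$), the expected matrix has the rank-$r$-plus-identity form
\begin{equation*}
\E[\Y_-] \;=\; \tfrac{1}{q}\bigl[\Ustar_{\sub,(\ell)} \M_\ell \Ustar_{\sub,(\ell)}{}' + c_\Lambda \I\bigr],
\end{equation*}
where $\M_\ell := \sum_{k\in\J_q} \beta_{1,k}^- \td_k \td_k{}'$ and $c_\Lambda := \sum_{k\in\J_q} \beta_{2,k}^- \|\td_k\|^2 \ge 0$.

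For Case 1 ($\ell = j$), after conjugation by $\U_{\sub,(j-1),\perp}$, the second summand becomes $(c_\Lambda/q)\I_{n-r}$, while the first is PSD of rank at most $r$. Since $n-r>r$, the minimum eigenvalue of the projected expectation equals $c_\Lambda/q$, while its maximum equals $c_\Lambda/q + \lambda_{\max}(\A \M_j \A{}')/q$ with $\A := \U_{\sub,(j-1),\perp}{}'\Ustar_{\sub,(j)}$. I would lower bound the latter via $\lambda_{\max}(\A \M_j \A{}') \ge \lambda_{\min}(\M_j)\,\sigma_{\max}^2(\A)$, using $\sigma_{\max}(\A) = \SE(\U_{\sub,(j-1)},\Ustar_{\sub,(j)})$ and $\lambda_{\min}(\M_j) \ge \min_k \beta_{1,k}^- \cdot \sigmin^2 \ge 1.5\,\sigmin^2$ (Lemma \ref{lemm:bound_Sigma+-}), to get an expected max-min gap of at least $1.5\,\SE^2(\U_{\sub,(j-1)},\Ustar_{\sub,(j)})\,\sigmin^2/q$. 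Two applications of Weyl's inequality (one to each extremal eigenvalue) then transfer this to $\Y_{U,det}$ with an additive loss of at most $2\|\Y_U - \E[\Y_-]\| \le 2\deltinit\,\sigmin^2/q$, giving the first displayed bound. The second follows from the triangle inequality for $\SE$ between equi-dimensional subspaces, $\SE(\U_{\sub,(j-1)},\Ustar_{\sub,(j)}) \ge \SE(\Ustar_{\sub,(j-1)},\Ustar_{\sub,(j)}) - \SE(\U_{\sub,(j-1)},\Ustar_{\sub,(j-1)}) \ge \SE(\Ustar_{\sub,(j-1)},\Ustar_{\sub,(j)}) - \epsilon$, with any additional slack absorbed into the $2\epsilon$ appearing in the stated bound.

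For Case 2 ($\ell = j-1$), the first summand after projection is $\U_{\sub,(j-1),\perp}{}' \Ustar_{\sub,(j-1)} \M_{j-1} \Ustar_{\sub,(j-1)}{}' \U_{\sub,(j-1),\perp}$, whose spectral norm is at most $\|\M_{j-1}\|\cdot \SE^2(\U_{\sub,(j-1)},\Ustar_{\sub,(j-1)}) \le \sigmax^2 \epsilon^2$, where I use $\beta_{1,k}^- \le \E[\xi^4-\xi^2]=2$ together with $\|\M_{j-1}\| \le \max_k \beta_{1,k}^- \cdot \sigmax^2$ (any $\le 2$ constant being absorbed). The $(c_\Lambda/q)\I_{n-r}$ piece contributes nothing to the max-min gap, and a second Weyl application then adds at most $2\deltinit\,\sigmin^2/q$, producing the claimed upper bound.

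No essential analytic obstacle arises: all probabilistic content has already been absorbed into the assumed bound $\|\Y_U - \E[\Y_-]\| \le \deltinit \sigmin^2/q$ (established en route to \eqref{yu_bnd}), so the proof reduces to an exact spectral computation on $\E[\Y_-]$ plus Weyl perturbation. The only bookkeeping nuance is that $\sigmin,\sigmax$ must be interpreted as the extreme singular values of the $q$-column submatrix indexed by $\J_q$, which inherits right-incoherence from the standing assumption on $\Xstar_\full$, and that the identity $\beta_{1,k}^- \in [1.5, 2]$ is used on both ends (lower in Case 1, upper in Case 2).
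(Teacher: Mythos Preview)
Your proposal is correct and follows essentially the same approach as the paper: both compute the projected expectation $\U_{\sub,(j-1),\perp}{}'\E[\Y_-]\U_{\sub,(j-1),\perp}$ explicitly as a rank-$r$ PSD part plus a multiple of the identity, bound the eigenvalue gap of that matrix using $\min_k\beta_{1,k}^-\ge 1.5$ (Case~1) and $\max_k\beta_{1,k}^-$ (Case~2), and then transfer to $\Y_{U,det}$ via two applications of Weyl's inequality with the assumed bound $\|\Y_U-\E[\Y_-]\|\le \deltinit\sigmin^2/q$; the final line in Case~1 uses the same triangle-type inequality for $\SE$ that the paper records as a preliminary fact. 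Your write-up is in fact slightly more careful than the paper's in two places: you make the use of $n-r>r$ explicit (needed so that $\lambda_{\min}$ of the rank-$r$ piece is zero), and you note that $\beta_{1,k}^-\le 2$ must be absorbed in Case~2, whereas the paper silently drops this factor.
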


\begin{proof}
	This proof uses the following fact:
	For basis matrices $\bP_1, \bP_2, \bP_3$ of the same size,
	$\SE\left( \bP_1, \bP_2\right)- 2 \SE\left( \bP_2, \bP_3\right) \leq \SE\left( \bP_1, \bP_3\right) \leq \SE\left( \bP_1, \bP_2\right)+\SE\left( \bP_2, \bP_3\right)$.

	Define the $(n-r) \times (n-r)$ matrix
	\begin{align*}
		& \tSigma = \U_{\sub,(j-1),\perp}{}' \E[\Y_-] \U_{\sub,(j-1),\perp}{}.
	\end{align*}
	\textbf{Proof of item 1}
	\begin{align*}
		& \lambda_{\max}\left( \Y_{U,det} \right) \geq  \lambda_{\max}\left( \tSigma\right) - \|\Y_{U,det} - \tSigma\|\\
		& \|\Y_{U,det} - \tSigma\| \\ 
		&\leq \|\Y_U - \E[\Y_-] \| \leq \frac{\deltinit \sigmin^2}{q}.
	\end{align*}
	Also we have
	\begin{align*}
		& \lambda_{\min}\left( \Y_{U,det} \right) \leq \lambda_{\min}\left( \tSigma\right) + \|\Y_{U,det} - \tSigma\|
	\end{align*}
	Thus using the facts from Sec. \ref{facts} and $\min_k \beta_{1,k}^- \ge 1.5$ (proved while proving Claim \ref{lemm:bounding_U} for initializing $\Ustar$),
	\begin{align*}
		& \lambda_{\max}\left( \Y_{U,det} \right) - \lambda_{\min}\left( \Y_{U,det} \right) \\
		& \geq \lambda_{\max}\left( \frac{1}{q} \U_{\sub,(j-1),\perp}{}' \U_{\sub,(j)}^* \  (\sum_k \beta_{1,k}^{-} \tb_k \tb_k{}') \times \right. \\
		& \left.	 \U_{\sub,(j)}^*{}' \U_{\sub,(j-1),\perp} \right) - \frac{2\deltinit \sigmin^2}{q} \\
		& \geq \frac{(\sigmin)^2}{q} \left(1.5 \SE\left( \U_{\sub,(j)}^* , \U_{\sub,(j-1)} \right)^2 - 2\deltinit \right).
	\end{align*}
	%
	\textbf{Proof of item 2}
	\begin{align*}
		\lambda_{\max}\left( \Y_{U,det} \right) - \lambda_{\min}\left( \Y_{U,det} \right)
		\leq & \lambda_{\max}\left( \tSigma\right) - \lambda_{\min}\left( \tSigma\right)\\
		&+2\|\Y_{U,det} -\tSigma\|.
	\end{align*}
	It is easy to see that
	\begin{align*}
		&  \lambda_{\max}\left( \tSigma\right)  -  \lambda_{\min}\left( \tSigma\right) \\
		& \leq  \frac{ \max_k \beta_{1,k}^{-}}{q} \sigmax^2 \SE^2\left( \Ustar_{\sub,(j-1)} , \U_{\sub,(j-1)} \right) \\
		& \le  \frac{ \sigmax^2}{q} \SE^2\left( \Ustar_{\sub,(j-1)}, \U_{\sub,(j-1)} \right)
	\end{align*}
\end{proof}



\section*{Author Biographies}

\begin{IEEEbiography}[{\includegraphics[width=1in,height=1.25in,clip,keepaspectratio]{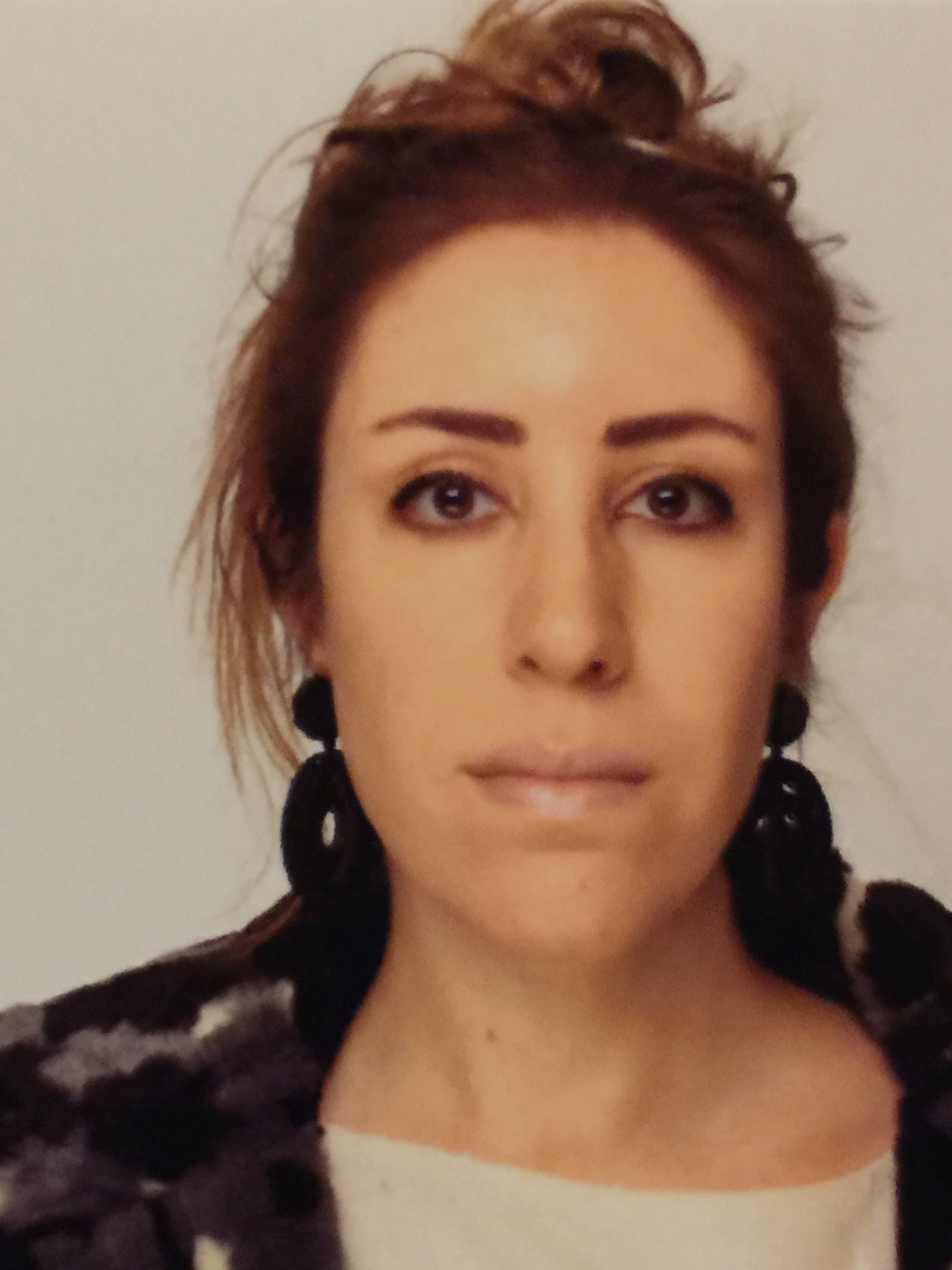}}]
{Seyedehsara Nayer}(Email: sarana@iastate.edu)
	is currently a PhD student in the Department of Electrical and Computer Engineering at Iowa State University. Previously she was working at Signal Processing research laboratory, Sharif University of Technology.
	Her research interests are around various aspects of information science and focuses on  Computer Vision, Signal Processing, and Statistical Machine Learning,.
\end{IEEEbiography}

\begin{IEEEbiography}[{\includegraphics[width=1in,height=1.25in,clip,keepaspectratio]{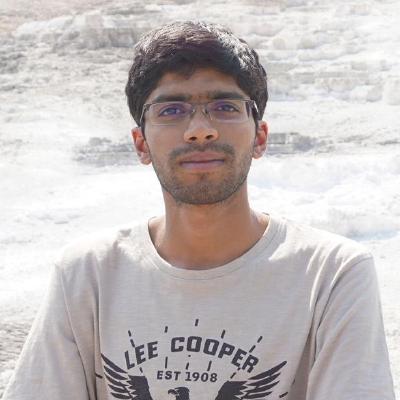}}]
{Praneeth Narayanamurthy} (Email: pkurpadn@iastate.edu) (S' 18) is a Ph.D. student in the Department of Electrical and Computer Engineering at Iowa State University (Ames, IA). He previously obtained his B.Tech degree in Electrical and Electronics Engineering from National Institute of Technology Karnataka (Surathkal, India) in 2014. His research interests include the algorithmic and theoretical aspects of High-Dimensional Statistical Signal Processing, and Machine Learning.
\end{IEEEbiography}

\begin{IEEEbiography}[{\includegraphics[width=1in,height=1.25in,clip,keepaspectratio]{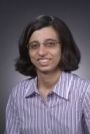}}]
{Namrata Vaswani}(Email: namrata@iastate.edu)
	received a B.Tech from the Indian Institute of Technology (IIT), Delhi, in 1999 and a Ph.D. from UMD in 2004, both in electrical engineering. Since Fall 2005, she has been with the Iowa State University where she is currently the Anderlik Professor of Electrical and Computer Engineering. Her research interests lie in a data science, with a particular focus on Statistical Machine Learning, Signal Processing, and Computer Vision. She has served two terms as an Associate Editor for the IEEE Transactions on Signal Processing; as a lead guest-editor for a Proceedings of the IEEE Special Issue (Rethinking PCA for modern datasets); and is currently serving as an Area Editor for the  IEEE Signal Processing Magazine.
	
	Vaswani is a recipient of the Iowa State Early Career Engineering Faculty Research Award (2014), the Iowa State University Mid-Career Achievement in Research Award (2019), University of Maryland's ECE Distinguished Alumni Award (2019), and the 2014 IEEE Signal Processing Society Best Paper Award. This award recognized the contributions of her 2010 IEEE Transactions on Signal Processing paper co-authored with her student Wei Lu on {\em Modified-CS: Modifying compressive sensing for problems with partially known support}. She is a Fellow of the IEEE (class of 2019).
\end{IEEEbiography}

\end{document}